\newcommand{\BEAS}{\begin{eqnarray*}}
\newcommand{\EEAS}{\end{eqnarray*}}
\newcommand{\BEA}{\begin{eqnarray}}
\newcommand{\EEA}{\end{eqnarray}}
\newcommand{\BEQ}{\begin{equation}}
\newcommand{\EEQ}{\end{equation}}
\newcommand{\BIT}{\begin{itemize}}
\newcommand{\EIT}{\end{itemize}}
\newcommand{\BNUM}{\begin{enumerate}}
\newcommand{\ENUM}{\end{enumerate}}
\newcommand{\BA}{\begin{array}}
\newcommand{\EA}{\end{array}}
\newcommand{\rb}{\mathbb{R}}
\newcommand{\BlackBox}{\rule{1.5ex}{1.5ex}}  
\newenvironment{proof}{\par\noindent{\bf Proof\ }}{\hfill\BlackBox\\[2mm]}
\newtheorem{lemma}{Lemma}
\newtheorem{proposition}{Proposition}
\newcommand{\mysec}[1]{Section~\ref{sec:#1}}
\newcommand{\eq}[1]{Eq.~(\ref{eq:#1})}
\newcommand{\myfig}[1]{Figure~\ref{fig:#1}}
\def \E{{\mathbb E}}
\def \S{  { \mathcal{S}} }
\def \A{  { \mathcal{A}} }
\def \W{  { \mathcal{W}} }
\def \Z{  { \mathcal{Z}} }
\def \E{{\mathcal{E}}}
\let\OLDthebibliography\thebibliography
\renewcommand\thebibliography[1]{
  \OLDthebibliography{#1}
  \setlength{\parskip}{2.4pt}
  \setlength{\itemsep}{0pt plus 0.3ex}
}
\title{Max-Plus Matching Pursuit for Deterministic Markov Decision Processes}
\author{
  Francis Bach \\
  Inria  \\
  D\'epartement d'Informatique de l'Ecole Normale Sup\'erieure \\
  PSL Research University, Paris, France \\
  \texttt{francis.bach@ens.fr}
  }
\setlist[itemize]{leftmargin=7mm}
\begin{document}

\maketitle

\begin{abstract}
We consider deterministic Markov decision processes (MDPs) and   apply \emph{max-plus} algebra tools to approximate the value iteration algorithm by a smaller-dimensional iteration based on a representation on dictionaries of value functions.
  The set-up naturally leads to novel theoretical results which are  simply formulated due to the max-plus algebra structure. For example, when considering a fixed (non adaptive) finite basis, the computational complexity of approximating the optimal value function is not directly related to the number of states, but to notions of covering numbers of the state space.
 In order to break the curse of dimensionality in factored state-spaces, we consider adaptive basis that can adapt to particular problems leading to an algorithm similar to matching pursuit from signal processing. They currently come with no theoretical guarantees but work empirically well on simple deterministic MDPs derived from low-dimensional continuous control problems.
We focus primarily on deterministic MDPs but note  that the framework can be applied to all MDPs by considering measure-based formulations.
 \end{abstract}

\section{Introduction}
 
Function approximation for Markov decision processes (MDPs) is an important problem in reinforcement learning. Simply extending classical representations from supervised learning  is not straightforward because of the specific non-linear structure of MDPs; for example the linear parametrization of the value function is not totally adapted to the algebraic structure of the Bellman operator which is central in their analysis and involves ``max'' operations.
 
Following~\cite{mceneaney2002error,akian2008max} which applied similar concepts to  problems in optimal control, we consider a different semi-ring than the usual ring $(\rb,+,x)$, namely the \emph{max-plus} semi-ring $(\rb \cup \{-\infty\}, \oplus, \otimes) = (\rb \cup \{-\infty\}, \max, +)$.   The new resulting algebra is natural for MDPs, as for example for deterministic discounted MDPs, the Bellman operator happens to be additive and positively homogeneous for the max-plus algebra.

  In this paper, we explore classical concepts in linear representations in machine learning and signal processing, namely approximations  from a finite basis and sparse approximations through greedy algorithms such as matching pursuit~\cite{mallat2008wavelet,mairal2014sparse,temlyakov2015sparse,dumitrescu2018dictionary},  explore them for the max-plus algebra and apply it to deterministic MDPs with known dynamics (where the goal is to estimate the optimal value function). We make the following contributions, after briefly reviewing MDPs in \mysec{mdp}:

  \BIT
  \item In \mysec{maxplus}, we apply \emph{max-plus} algebra tools to approximate the value iteration algorithm by a smaller-dimensional iteration based on a representation on dictionaries of value functions.
  \item As shown in \mysec{approx}, 
  the set-up naturally leads to novel theoretical results which are  simply formulated due to the max-plus algebra structure. For example, when considering a fixed (non adaptive) finite basis, the computational complexity of approximating the optimal value function is not directly related to the number of states, but to notions of covering numbers.
  
 \item  In \mysec{greedy}, in order to circumvent the curse of dimensionality in factored state-spaces, we consider adaptive basis that can adapt to particular problems leading to an algorithm similar to matching pursuit. It currently comes with no theoretical guarantees but works empirically well in \mysec{exp} on simple deterministic MDPs derived from low-dimensional  control problems.

 \EIT

\section{Markov Decision Processes}
\label{sec:mdp}
 We consider a standard MDP~\cite{puterman2014markov,sutton2018reinforcement}, defined by a finite state space $\mathcal{S}$, a finite action space~$\A$, a reward function $r: \S \times \A \to \rb$, transition probabilities $p(\cdot|\cdot, \cdot): \S \times \S \times \A \to \rb$, and a discount factor $\gamma \in   [0,1)$.
 In this paper, we focus on the goal of finding (or approximating) the optimal value function $V_\ast: \S \to \rb$, from which the optimal policy $\pi_\ast: \S \to \A$, that leads to the optimal sum of discounted rewards, can be obtained as $
\pi_\ast(s) \in \arg \max_{a \in \A} r(s,a) + \gamma \sum_{s' \in \S} p(s'|s,a) V_\ast(s')$. We assume that the transition probabilities and the reward function are known.

We denote by $T$ the Bellman operator from $\rb^\S$ to $\rb^\S$ defined as, for a function $V: \S \to \rb$,
\[ \textstyle
TV(s) =  \max_{a \in \A} r(s,a) + \gamma \sum_{s' \in \S} p(s'|s,a) V(s').
\]
The optimal value function $V_\ast$ is the unique fixed point of $T$.
In order to find an approximation of~$V_\ast$, we simply need to find $V$ such that $\| TV - V\|_\infty$ is small, as  $\| V_\ast - V\|_\infty \leqslant   (1-\gamma)^{-1} \| V - TV \|_\infty$ (see proof in App.~\ref{app:lemmaTVV} taken from~\cite[Prop.~2.1]{bertsekas2012weighted}).

\paragraph{Value iteration algorithm.}
The usual value iteration algorithm considers the recursion $V_t \!=\! T V_{t-1}$, which converges exponentially fast to $V_\ast$ if $\gamma <1$~\cite{sutton2018reinforcement}. 
More precisely, if ${\rm range}(r)$ is defined as ${\rm range}(r)  \!=\! \max_{s \in \S} \max_{a\in \A}  {r}(s,a) \!- \! \min_{s \in \S} \max_{a \in \A}  {r}(s,a)$, and we initialize at $V_0\!=\! 0$, we reach precision $   {{\rm range}(r) }\varepsilon ({1-\gamma} )^{-1}$ after at most $t= {\log ({1}/{\varepsilon})}({\log ( {1}/{\gamma})})^{-1}
\leqslant   {\log ( {1}/{\varepsilon})}{(1-\gamma)^{-1}} $ iterations~\cite{sutton2018reinforcement}. In this paper, we consider discount factors which are close to $1$, and the term dependent on $ (\log ({1}/{\gamma}) )^{-1}$ or $(1-\gamma)^{-1}$ will always be the leading one---this thus excludes from consideration sampling-based algorithms with a better complexity in terms of $|\S|$ and $|\A|$ but worse dependence on the discount factor $\gamma$ (see, e.g.,~\cite{sidford2018variance,kakade2018variance} and references therein). Throughout this paper, we are going to refer to $\tau = (1-\gamma)^{-1}$ as the \emph{horizon} of the MDP (this is the expectation of a random variable with geometric distribution proportional to powers of $\gamma$), which characterizes the expected number of steps in the future that need to be taken into account for computing rewards.

\paragraph{Deterministic MDPs.} In this paper, we consider \emph{deterministic MDPs}, i.e., MDPs for which given a state $s$ and an action $a$, a deterministic state $s'$ is reached. For these  MDPs, choosing an action is equivalent to choosing the reachable state $s'$. Thus, the transition behavior if fully characterized by an     edge set $\E \subset \S \times \S$, and we obtain the resulting reward function $
\bar{r}: \S \times \S \to \rb$, where $\bar{r}(s,s')$ is the maximal reward from actions leading from state $s$ to state $s'$, which is defined to be $-\infty$ if $(s,s') \notin \E$. The Bellman operator then takes the form
\[ \textstyle TV(s) = \max_{s' \in \S} \bar{r}(s,s') + \gamma V(s').\] 
 We focus primarily on deterministic MDPs but note in \mysec{conclusion} that the framework can be applied to all MDPs by considering a measure-based formulation~\cite{hernandez2012discrete,lasserre2008nonlinear}.

\paragraph{Factored state-spaces.} 
We also consider large state spaces (typically coming from the discretization of control problems). A classical example will be factored state-spaces where $\S = \S_1\times \cdots \times \S_d$, but we do not assume in general factorized dependences of the reward function on certain variables like in factored MDPs~\cite{guestrin2003efficient}.

\section{Max-plus Algebra applied to Deterministic MDPs}
\label{sec:maxplus}
Many works consider a regular linear parameterization of the value function~(see, e.g.,~\cite{geist2010brief} and references therein), as $V(s)  \!=\! \sum_{w \in \W} \alpha(w)  w(s)$ where~$\W$ is a set of basis functions $w: \S \to \rb$. Following~\cite{akian2008max}, we consider  max-plus linear combinations. For more general properties of max-plus algebra, see, e.g.,~\cite{baccelli1992synchronization,butkovivc2010max,cohen2004duality}.

\subsection{Max-plus-linear operators and max-plus-linear combinations}

In this section, we consider algebraic properties of our problem within the max-plus-algebra.
For deterministic MDPs, the key property is that the Bellman operator is additive and max-plus-homogeneous, that is, (a) $T (V + c) = TV + \gamma c$ for any constant $c$, which can be rewritten as
$T ( c \otimes V) = c^{\otimes \gamma} T V$ (where the equality $ \gamma c = c^{\otimes \gamma} $ for $\gamma \in \rb_+$, is an extension of the relationship $2c = c \otimes c$),
and (b) $T(\max\{V,V'\}) = \max \{ TV, TV'\}$, which can be rewritten as $T ( V \oplus V') = TV \oplus TV'$, where all operations are taken element-wise for all $s \in \S$. 

We  now explore various max-plus approximation properties~\cite{cohen2004duality}.
First,  regular linear combinations become:
$
V(s)  = \bigoplus_{w \in \W} \alpha(w) \otimes  w(s) = \max_{ w \in \W} \alpha(w) + w(s)
$.
We introduce the notation 
\BEQ
 \textstyle \W \alpha (s) =  \max_{ w \in \W} \alpha(w) + w(s),
 \EEQ
 so that the equation above may be rewritten as $V =  \W \alpha$.  Inverses of  max-plus-linear operator are typically not defined, but  a weaker notion of pseudo-inverse (often called ``residuation''~\cite{baccelli1992synchronization}) can be defined due to the idempotence of   $\otimes$. We thus consider the operator $\W^+ : \rb^\S \to \rb^\W$ defined as 
 \BEQ
  \textstyle \W^+ V(w) = \min_{s \in \S} V(s) - w(s).
  \EEQ
  We have, for the pointwise partial order on $\rb^\S$, $\W \alpha \leqslant V \Leftrightarrow
 \alpha \leqslant \W^+ V$, that is:
\[
\forall s \in \S, \ \W \alpha(s)  \!\leqslant \! V(s) \Leftrightarrow 
\forall (s,w) \in \S \times \W, \ \alpha(w) + w(s) \leqslant V(s)
\Leftrightarrow
\forall w \in \W, \ \alpha(w) \!\leqslant\W^+ V(w).    \]
Moreover, as shown in~\cite{cohen2004duality,akian2008max}, $\W \W^+ \W = \W $ and $\W^+ \W \W^+ = \W^+$, and thus $\W^+$ plays a role of pseudo-inverse, and $\W \W^+$ a role of projection on the image of $\W$; moreover, $\W\W^+ V \leqslant V$ for all~$V$, and $\W\W^+$ is idempotent and non-expansive for the $\ell_\infty$-norm. 

One approximation algorithm would be to replace $V_{t+1} = TV_t$ by
$
V_{t+1}= \W \W^+ T V_t
$, that is, if we consider $V_t$ of the form $V_t = \W \alpha_t$, then $V_{t+1}$ would be of the form
$\W \alpha_{t+1}$ where 
\[
\textstyle
\alpha_{t+1}(w) = \W^+ T \W \alpha_t (w) = \min_{s \in \S}  \max_{w' \in \W} \gamma \alpha_t(w') + T w'(s) - w(s),
\]
which requires to solve at each iteration an infimum problem over $\S$, which is computionally expensive
as $O(|\S| \cdot |\W|)$, which is typically worse than $O(|\E|)$ for classical value iteration. We are thus looking for an extra approximation that will lead to a decomposition where after some compilation, the iteration complexity is  only dependent on the number of basis functions.

\subsection{Max-plus transposition}
An important idea from~\cite{akian2008max} is to first project onto a different low-dimensional different image, with a projection which is efficient. In the regular functional linear setting, this would be equivalent to imposing equality only for certain efficient measurements.
We thus define, given a set $\Z$ of functions~$z$ from $\S$ to $\rb$ (which can be equal to $\W$ or not),
\BEQ
\textstyle
{{\Z\!}^\top} V(z) = \max_{s \in \S} V(s) + z(s).
\EEQ
The notation ${{\Z\!}^\top} $ comes from the following definition of  dot-product;
we define the max-plus dot-product between functions $V$ and $z$ from $\S$ to $\rb$ (and more generally for all functions defined on $\W$ or $\Z$)
as $
\langle z |   V \rangle = \bigoplus_{s \in \S} z(s) \otimes V(s)
= \max_{s \in \S} z(s) + V(s)$. We then have for any $\beta \in \rb^\Z$,
$\langle  {\Z\!}^\top V | \beta \rangle = \langle V | \Z \beta \rangle$, hence the transpose notation.  
We can also define the residuation as:
\BEQ 
\textstyle {{\Z\!}^{\top+}}\beta(s) = \min_{z \in \Z} \beta(z)  - z(s),
\EEQ
 so that ${{\Z\!}^{\top+}}{{\Z\!}^\top}$ goes from $\rb^\S$ to $\rb^\S$. The operator $ {{\Z\!}^{\top+}}{{\Z\!}^\top}$ on functions from $\S$ to $\rb$ is also the projection  on the image of ${{\Z\!}^{\top+}}$;
moreover, $ {{\Z\!}^{\top+}}{{\Z\!}^\top}  V \geqslant V$ for all $V$ and $ {{\Z\!}^{\top+}}{{\Z\!}^\top}$ is idempotent and non-expansive for the $\ell_\infty$-norm.
Note that $   {{\Z\!}^{\top+}}\beta = - \Z ( -\beta)$ so that properties of $   {{\Z\!}^{\top+}}\beta$ can be inferred from the ones of $\Z$. Similarly $\Z^{\top }V  = -\Z^+ (-V)$ and,
$  {{\Z\!}^{\top+}}{{\Z\!}^\top}  V = - \Z \Z^+ ( - V)$.

\subsection{Reduced value iteration}

Extending~\cite{akian2008max} to MDPs, we consider of $V_{t+1}   =   TV_t$,  the iteration $  V_{t+1} =   \W \W^+  {{\Z\!}^{\top+}}{{\Z\!}^\top} T V_t.$
If $V_t$ is represented as $\W \alpha_t$, then $V_{t+1}$ is represented as
$\W \alpha_{t+1}$ where $\alpha_{t+1} = \W^+  {{\Z\!}^{\top+}}{{\Z\!}^\top} T \W \alpha_t$, which we can decompose as
$
\beta_{t+1} = {{\Z\!}^\top} T \W \alpha_t$  and  $\alpha_{t+1} = \W^+  {{\Z\!}^{\top+}}\beta_{t+1} =  ({{\Z\!}^\top} \W)^+   \beta_{t+1}$.

The key point is then that the two operators $ {{\Z\!}^\top} T \W \!:\! \rb^\W \to \rb^\Z$ and  $\W^+  {{\Z\!}^{\top+}} \!:\! \rb^\Z \to \rb^\W$, can be pre-computed at a cost that will be independent of the discount factor $\gamma$. Indeed, given
$\langle z | w \rangle =  \max_{s \in \S} z(s)  + w(s)$ and
$\langle z | T w \rangle = \max_{s \in \S} z(s)  + Tw(s)$, we have:
\BEAS
 {{\Z\!}^\top} T \W \alpha(z) \!\!
 & \!\!\!=\!\! \!&  \!\max_{s \in \S} T \W \alpha (s) + z(s)
 = \max_{s \in \S} \max_{w \in \W} \gamma \alpha(w) + Tw(s) + z(s)
 = \max_{w \in \W} \gamma \alpha(w) + \langle z | T w \rangle \\
 \!\!     \W^+  {{\Z\!}^{\top+}}\beta(w) \!\!& \!\!\!= \!\!\!& 
\!\min_{s \in \S}  {{\Z\!}^{\top+}}\beta(s) - w(s)
 =  \min_{s \in \S}  \min_{z \in \Z} \beta(z) - z(s)  - w(s)
 = \min_{z \in \Z} \beta(z)  -\langle z | w \rangle.
 \EEAS
 Therefore, the computational complexity of the iteration is $O(|\W|\cdot |\Z|)$, once the $|\W| \cdot |\Z|$ values $
\langle z | w \rangle=  \textstyle\max_{s \in \S} z(s)  + w(s)$,
 and  $
\langle z | T w \rangle= \textstyle\max_{s \in \S} z(s)  + Tw(s)$    have been computed.  This requires some compilation time which is \emph{independent} of $\gamma$ and the final required precision. Note that if these values are computed  up to some precision $\varepsilon$, then we get an overall extra approximation factor of  $\varepsilon /  (1 - \gamma)$.
The iterations then become
\BEA
\label{eq:beta} \beta_{t+1}(z) & = &   {{\Z\!}^\top} T \W \alpha(z) = \textstyle \max_{w \in \W} \gamma \alpha_t(w) + \langle z | T w \rangle \\
\label{eq:alpha} \alpha_{t+1}(w) & = & \W^+  {{\Z\!}^{\top+}}\beta_{t+1}(w) =  \textstyle\min_{z \in \Z} \beta_{t+1}(z)  - \langle z | w \rangle.
\EEA
As seen below, they correspond to $\gamma$-contractant operators and are thus converging exponentially fast. In the worst case (full graph), the complexity is $O(|\W| \cdot |\Z|)$. Moreover, as presented below, a good approximation of $V_\ast$ by $\W$ and $\Z^\top$ leads to an approximation guarantee (see proof in  App.~\ref{app:propapprox}).

\begin{proposition}
\label{prop:approx}
(a) The operator $\hat{T} = \W \W^+  {{\Z\!}^{\top+}}{{\Z\!}^\top} T $ is $\gamma$-contractive and has a unique fixed point~$V_\infty$. (b) If 
$
\|  \W \W^+   V_\ast - V_\ast \|_\infty  \leqslant \eta
$ and $
\|   {{\Z\!}^{\top+}}{{\Z\!}^\top}  V_\ast - V_\ast \|_\infty \leqslant \eta
$, then $\| V_\infty - V_\ast \|_\infty \leqslant \frac{ 2\eta}{1-\gamma}$.
 \end{proposition}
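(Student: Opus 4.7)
\textbf{Plan for Proposition~\ref{prop:approx}.}

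For part (a), the strategy is to recognize $\hat T$ as a composition of known operators. The classical Bellman operator $T$ is $\gamma$-contractive in $\|\cdot\|_\infty$ (this is standard and already invoked for value iteration). The paper has already stated that both $\W\W^+$ and ${\Z\!}^{\top+}{\Z\!}^\top$ are idempotent projections that are non-expansive for the $\ell_\infty$-norm. Composing a $\gamma$-contraction with non-expansive operators yields a $\gamma$-contraction, so $\hat T$ is $\gamma$-contractive on the complete metric space $(\rb^\S,\|\cdot\|_\infty)$, and the Banach fixed point theorem gives a unique $V_\infty$ with $\hat T V_\infty = V_\infty$.

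For part (b), I would use the standard perturbation bound for fixed points of a contraction: if $\hat T$ is $\gamma$-contractive with fixed point $V_\infty$, then for any $V$,
\[
\| V_\infty - V \|_\infty \leqslant \frac{1}{1-\gamma}\,\| \hat T V - V \|_\infty.
\]
Applying this with $V = V_\ast$ reduces the problem to bounding $\| \hat T V_\ast - V_\ast \|_\infty$. Since $T V_\ast = V_\ast$, we have $\hat T V_\ast = \W \W^+ {\Z\!}^{\top+}{\Z\!}^\top V_\ast$, so I would split via the triangle inequality:
\[
\| \W \W^+ {\Z\!}^{\top+}{\Z\!}^\top V_\ast - V_\ast \|_\infty \leqslant \| \W\W^+ ( {\Z\!}^{\top+}{\Z\!}^\top V_\ast - V_\ast ) \|_\infty + \| \W \W^+ V_\ast - V_\ast \|_\infty.
\]
The second term is at most $\eta$ by hypothesis, and the first term is at most $\| {\Z\!}^{\top+}{\Z\!}^\top V_\ast - V_\ast \|_\infty \leqslant \eta$ using that $\W\W^+$ is non-expansive for $\|\cdot\|_\infty$. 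Hence $\|\hat T V_\ast - V_\ast\|_\infty \leqslant 2\eta$, and plugging into the perturbation bound gives $\|V_\infty - V_\ast\|_\infty \leqslant 2\eta/(1-\gamma)$.

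There is no real obstacle here: the argument is entirely a bookkeeping exercise once contractivity of $T$ and non-expansiveness of the two residuation-based projections are in hand, and both of these have been recorded earlier in the excerpt. The only mild subtlety worth double-checking is that the order of the two projections in $\hat T$ does not matter for the argument: one could equally split off $\W\W^+ V_\ast$ first or ${\Z\!}^{\top+}{\Z\!}^\top V_\ast$ first, and in either ordering the non-expansiveness of the outer operator absorbs the inner error, yielding the symmetric constant $2\eta$.
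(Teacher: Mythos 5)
Your proposal is correct and follows essentially the same route as the paper: part (a) by composing the $\gamma$-contraction $T$ with the two non-expansive idempotent projections, and part (b) by bounding $\|\hat T V_\ast - V_\ast\|_\infty \leqslant 2\eta$ via $TV_\ast = V_\ast$, the triangle inequality through $\W\W^+ V_\ast$, and non-expansiveness of $\W\W^+$, then applying the fixed-point perturbation bound (the paper's Lemma~\ref{lemma:TVV} argument). One cosmetic point: since $\W\W^+$ is a max-plus (nonlinear) operator, the first term in your triangle-inequality display should be written as $\|\W\W^+\,{{\Z\!}^{\top+}}{{\Z\!}^\top} V_\ast - \W\W^+ V_\ast\|_\infty$ rather than $\W\W^+$ applied to a difference, which is exactly what your appeal to non-expansiveness bounds anyway.
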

Therefore, an approximation guarantee for $V_\ast$ translates to an approximation error multiplied by the horizon $\tau= (1-\gamma)^{-1}$. Thus large horizons $\tau$ (i.e., large $\gamma$) will degrade performance (see examples in \myfig{fixed_basis_mdp}). If we   consider a discount factor of $\gamma^\rho$ (corresponding to the operator $T^\rho$ instead of $T$, for $\rho>1$, see below), in the result above, $\tau $ is replaced by $(1+\tau/\rho)$ (see proof in  App.~\ref{app:propapprox}).

\paragraph{Algorithmic complexity.} After $t$ steps of the approximate algorithms in \eq{beta} and \eq{alpha}, starting from zero, we get $V_t$
such that $\|\hat{T} V_t - V_t \|_\infty \leqslant \gamma^t \| \hat{T}V_0 - V_0 \|_\infty
\leqslant \gamma^t {\rm range}(r)$, and thus such that
$\| V_t - V_\infty \|_\infty \leqslant  {\gamma^t {\rm range}(r)}{(1-\gamma)^{-1}}$, with the overall bound
$\| V_t - V_\ast \|_\infty \leqslant ({2 \eta + \gamma^t {\rm range}(r)}){(1-\gamma)^{-1}} $ if the assumption~(b) in Prop.~\ref{prop:approx} is satisfied. Thus to get an error of $  {   \varepsilon {\rm range}(r) } \tau$ for a fixed $\varepsilon$,  it is sufficient to have
an approximation error $\eta =  {\rm range}(r)  {\varepsilon}/{4} $ and a number of iterations $t =  { \log( 2/\varepsilon)}{(1-\gamma)^{-1}}= \log( 2/\varepsilon) \tau $. Thus, the overall complexity will be proportional to
$ |\W| \cdot |\Z| \cdot  { \log( 2/\varepsilon)}\cdot \tau $ in addition to the compilation time required to compute the $|\W| \cdot |\Z|$ values $\langle z|w \rangle$ and $\langle z|T w \rangle$ (which is independent of $\gamma$).

\paragraph{Extensions.}
We  can apply the reasoning above to $T^\rho$ and replace $\gamma$ by $\gamma^\rho$, with then $\rho$ fewer iterations in the leading terms, but a more expensive compilation time.  The horizon $\tau$ is then  equal to $\tau_\rho = (1 - \gamma^\rho)^{-1}$ which is equivalent to $\tau/\rho$ for $\rho = o(\tau)$.  Moreover,  when $\rho = O(\tau)$, the approximation error is reduced and we only need to have $\eta = \rho \cdot  {\rm range}(r)   {\varepsilon}/{4} $. This will also be helpul within matching pursuit (see \mysec{greedy}).

\section{Approximation by Max-plus Operators}
\label{sec:approx}

In this section, we consider several classes of functions $\W$ and $\Z$, with their associated approximation properties, that are needed for Prop.~\ref{prop:approx} to provide interesting guarantees. 
  Some of these sets are already present in~\cite{akian2008max} (distance and squared distance functions), whereas others are new (piecewise constant approximations and Bregman divergences). We present examples of such approximations in
\myfig{fixed_basis} and \myfig{fixed_basis_mdp}, where we note a difference between the  approximation of some function $V$ by $\W \W^+ $ or ${{\Z\!}^{\top+}}{{\Z\!}^\top}  V$ and their approximation with the same basis functions \emph{within an MDP}, as obtained from Prop.~\ref{prop:approx} (with $\rho$  very large, it would be equivalent, but not for small~$\rho$).

\subsection{Indicator functions and piecewise constant approximations}
\label{sec:clusters}
\label{sec:constant}

We consider functions $w:\S \to \rb$ of the form $w(s) = 0$ if $s \in A(w)$ and $-\infty$ otherwise, where $A(w)$ is a subset of $\S$, with $(A(w))_{w \in \W}$ forming a partition of $\S$. For simplicity, we assume here that $\Z = \W$.
The image of $\W$ is the set of piecewise constant functions with respect to this partition. Given a function $V$, $\W\W^+ V$ is the lower approximation of $V$ by a piecewise constant function, while $ {{\Z\!}^{\top+}}{{\Z\!}^\top}   V$ is the upper approximation of $V$ by a piecewise constant function (see \myfig{fixed_basis}).

\paragraph{Reduced iteration.}
Since the image of $ {{\Z\!}^{\top+}}$ and $\W$ are the same, the iteration reduces to
$ V_t =\W\W^+    {{\Z\!}^{\top+}}{{\Z\!}^\top}  TV_{t-1} = {{\Z\!}^{\top+}}{{\Z\!}^\top}  TV_{t-1} $. Thus, representing 
 $V_t$ as $ {{\Z\!}^{\top+}} \alpha_t$
 with $\alpha_t \in \rb^{\Z}$, we get  
\BEQ
\textstyle
 \alpha_{t+1}(w) = \max_{w' \in \W, (w,w') \in \E(A) }  R(w,w') +  \gamma \alpha_t(w') , 
 \EEQ
where $R(w,w') =  \max_{s \in A(w)} \max_{s' \in A(w')} \bar{r}(s,s')$ and $\E(A)$ is the set of $(w,w')$ such that $R(w,w')$ is finite.
This is  is exactly a deterministic MDPs on the clusters. The complexity of the algorithm depends on some \emph{compilation} time, in order to compute the reduced matrix $R$ (or the one corresponding to $T^\rho$), and some running-time \emph{per iteration}. These depends whether we consider a dense graph or a sparse $d$-dimensional graphs (corresponding to a $d$-dimensional grid). The complexities are presented below (with all constants dependent on $d$ removed), and proved in App.~\ref{app:comp}.

\vspace*{-.1cm}

\begin{center}
\begin{tabular}{|l|l|l|}
\hline
 & Compilation time & Iteration time   \\
 \hline
 Value iteration (dense) & $|\S|^3 \log \rho \textcolor{white}{\big|} $ & $|\S|^2$    \\
 Value iteration (sparse) & $|\S| \rho^{2d}$ & $ |\S| \rho^d$  \\
 Reduced MDP (dense) & $|\S|^2 + \min\{ |\S|^3 \log \rho, |\W| |\S|^2 \rho \} $ & $|\W|^2$  
  \\
 Reduced MDP (sparse) & $|\S| + \min \{ |\S| \rho^{2d} , |\W| |\S| \rho \}$ &  $|\W| \rho^d$
  \\
\hline
\end{tabular}
\end{center}

\vspace*{-.1cm}

For all cases above, the optimization error (to approximate $V_\infty$ in Prop.~\ref{prop:approx}) is ${\rm range}(r) \tau e^{- \rho t/\tau}$; thus, in the sparse case, the number of iterations to achieve precision $
{\rm range}(r) \tau \varepsilon
$ is proportional to  $ (\tau/\rho) \log (1/\varepsilon)$. Below, for simplicity, we are only considering trade-offs for the sparse graph cases. For plain value iteration, having $\rho>1$ larger than one could be beneficial only when $d=1$ (otherwise, the compilation time scales as $|\S|  \rho^{2d}$ and the iteration running time as $|\S|  \rho^{d-1}$, which are both increasing in $\rho$). When using the clustered representation, it seems that the situation is the same, but as shown later the approximation error comes into play and larger values of $\rho$ could be useful (both for running time with fixed dictionaries and for stability of matching pursuit).

 \paragraph{Approximation properties.} See the proof of the following proposition in App.~\ref{app:propconstant}, which provided approximation guarantees for Prop.~\ref{prop:approx}. We assume that $\W$ is composed of piecewise constant functions.
 
 \begin{proposition}
 \label{prop:constant}   If we denote by $\eta(n,\S)$ the smallest radius of a cover of $\S$ by balls of a given radius, by considering the  Voronoi partition associated with the $n$ ball centers, we get:
\BEQ
\label{eq:approxpartition}
\textstyle
 \min_{\alpha \in \rb^\W} \| V - \W \alpha\|_\infty \leqslant \| V - \W\W^+ V\|_\infty  
\leqslant {\rm Lip}_p(V) [2\eta(n,\S)]^p,
\EEQ
where $ {\rm Lip}_p(V)$ is the $p$-th order H\"older continuity constant of $V$ (i.e., so that for all $s,s' \in \S$, $|V(s)-V(s')| \leqslant 
 {\rm Lip}_p(V) d(s,s')^p$).
\end{proposition}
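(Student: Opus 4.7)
The plan is to unpack the two operators $\W$ and $\W^+$ on the specific partition basis, then use H\"older continuity and the geometry of the Voronoi cells to bound the pointwise gap.

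First I would note the first inequality is immediate: since $\W \W^+ V$ has the form $\W \alpha$ for the specific choice $\alpha = \W^+ V$, it cannot do better than the infimum over all $\alpha$. So the work is entirely in the second inequality. For the indicator basis described in \mysec{constant}, I would compute the two operators explicitly: for $w \in \W$ indexing the cell $A(w)$, one has $\W^+ V(w) = \min_{s \in \S} V(s) - w(s) = \min_{s \in A(w)} V(s)$ (the contribution $s \notin A(w)$ gives $+\infty$). Then for any $s \in A(w_0)$, $\W \alpha(s) = \alpha(w_0)$ since $w(s) = -\infty$ for all other basis elements. Combining, $\W\W^+ V(s) = \min_{s' \in A(w_0)} V(s')$ for $s \in A(w_0)$, i.e., the cellwise infimum.

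With this in hand, the pointwise error on the cell $A(w_0)$ becomes
\[
V(s) - \W\W^+ V(s) = V(s) - \min_{s' \in A(w_0)} V(s') = \max_{s' \in A(w_0)} \bigl( V(s) - V(s') \bigr) \leqslant {\rm Lip}_p(V) \max_{s' \in A(w_0)} d(s,s')^p.
\]
It remains to bound the diameter of each Voronoi cell. Let $c_{w_0}$ be the center generating the Voronoi cell $A(w_0)$. By definition of an $\eta(n,\S)$-cover and of the Voronoi partition, for any $s \in A(w_0)$ the nearest center to $s$ is $c_{w_0}$, and since some center lies within $\eta(n,\S)$ of $s$, we get $d(s, c_{w_0}) \leqslant \eta(n,\S)$. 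Applying the triangle inequality to two points $s, s' \in A(w_0)$ yields $d(s,s') \leqslant 2\eta(n,\S)$, and substituting into the previous display gives the claimed bound uniformly in $s$.

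The main (and essentially only) obstacle is the Voronoi diameter step: one needs to be careful that the radius guarantee from the covering transfers to the Voronoi cells with only the factor $2$, which follows cleanly from the nearest-center property. Everything else is algebraic manipulation of the max-plus operators on the indicator dictionary.
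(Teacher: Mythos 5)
Your proof is correct and follows essentially the same route as the paper's: identify $\W\W^+V$ with the cellwise minimum of $V$ on the partition, bound the pointwise gap by ${\rm Lip}_p(V)$ times the $p$-th power of the cell diameter, and bound the Voronoi cell diameter by $2\eta(n,\S)$. You actually spell out the nearest-center/triangle-inequality argument for the diameter bound, which the paper only asserts, so your write-up is if anything slightly more complete.
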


\begin{figure}[t]
\begin{center}
\hspace*{-.5cm}
\includegraphics[width=3.75cm]{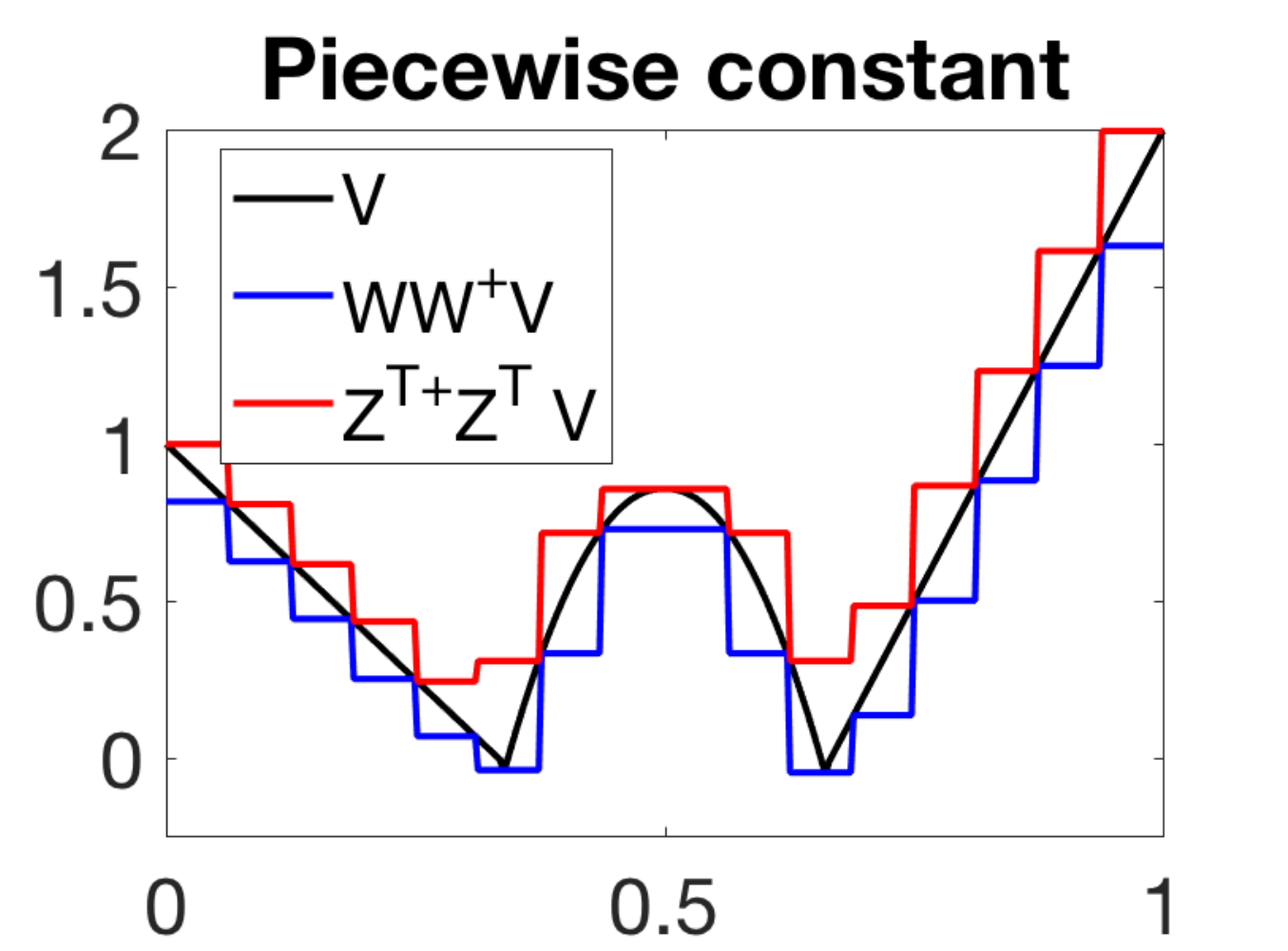} \hspace*{.234cm}
\includegraphics[width=3.75cm]{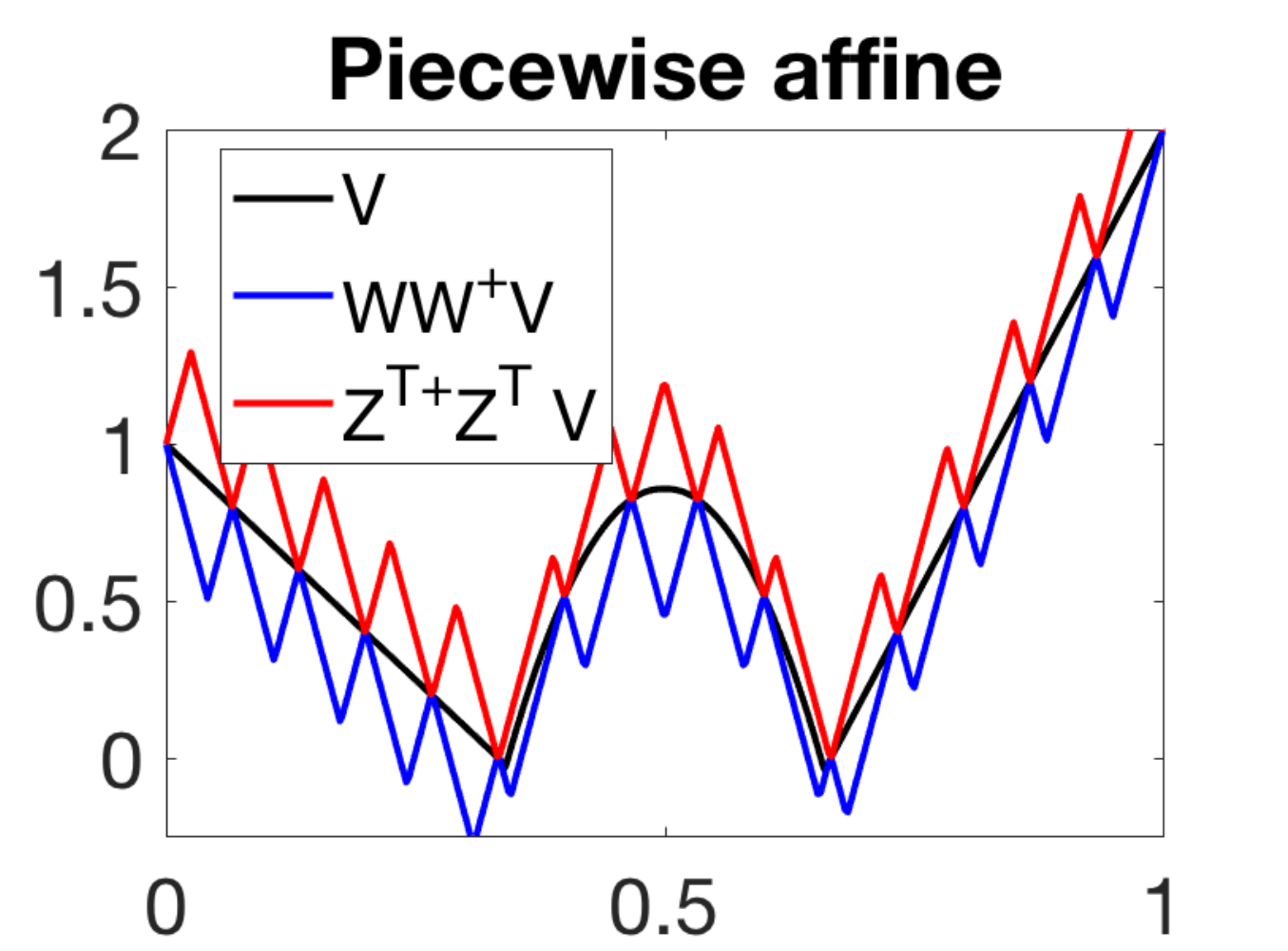} \hspace*{.234cm}
\includegraphics[width=3.75cm]{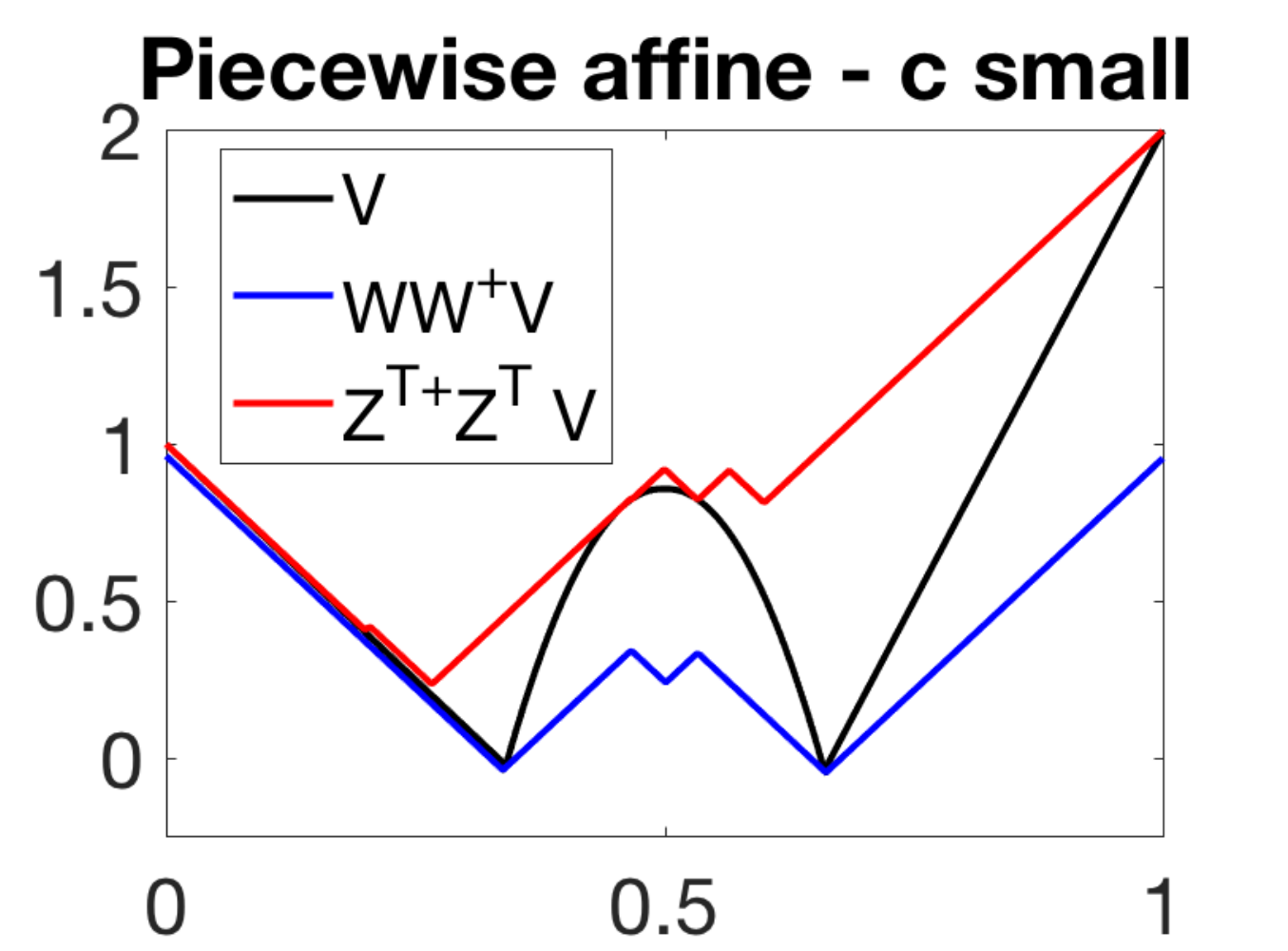} \hspace*{.234cm}
\includegraphics[width=3.75cm]{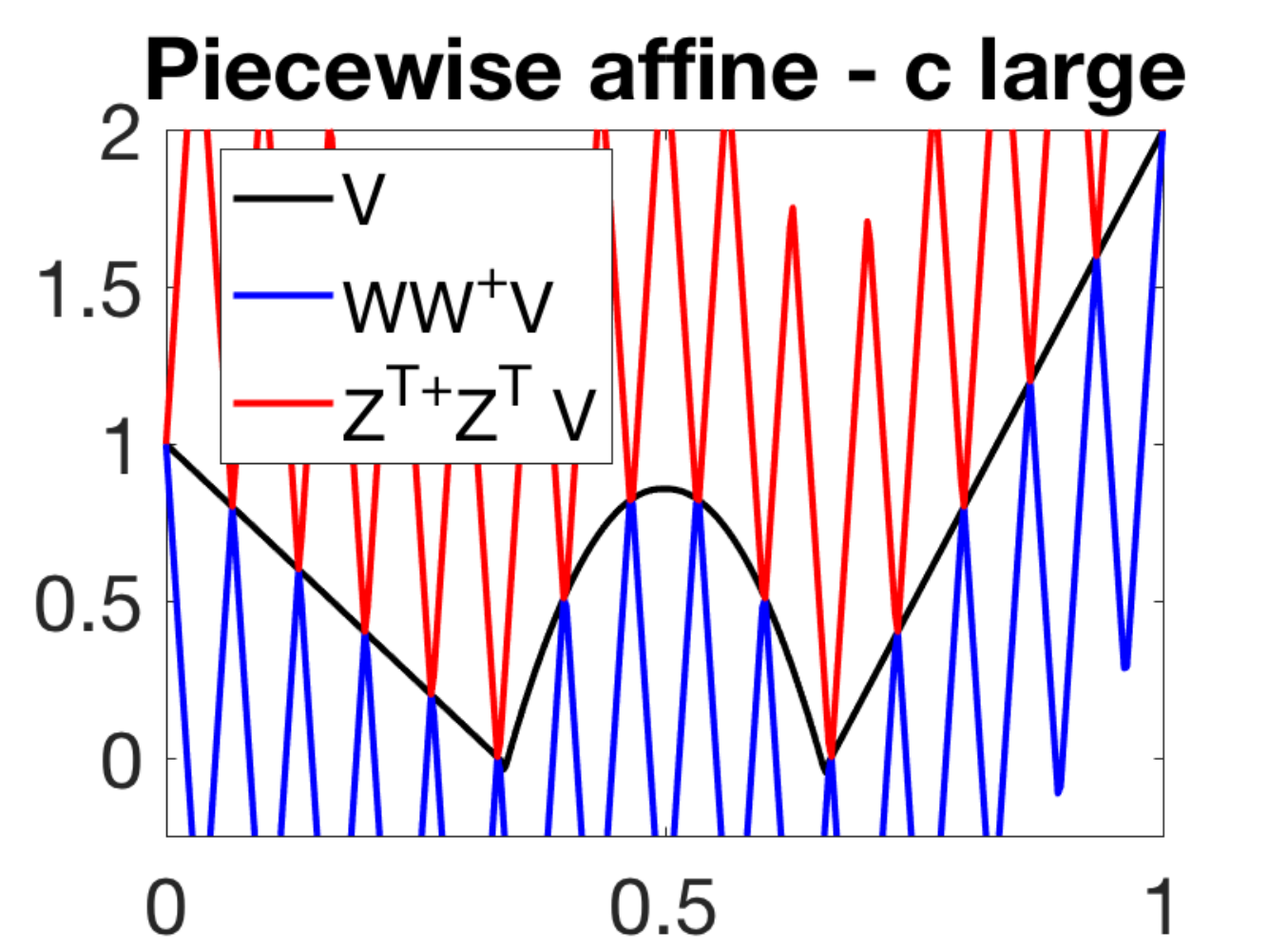}
\hspace*{-.5cm}\end{center}

\vspace*{-.4cm}

\caption{Approximation of a function $V$ with different finite basis with 16 elements. One-dimensional  case.  From left to right: piece-wise constant basis function, piecewise affine basis functions with well chosen value of~$c$, then too small and too large.
\label{fig:fixed_basis}}
\end{figure}

\begin{figure}[t]
\begin{center}
\hspace*{-.5cm}
\includegraphics[width=3.75cm]{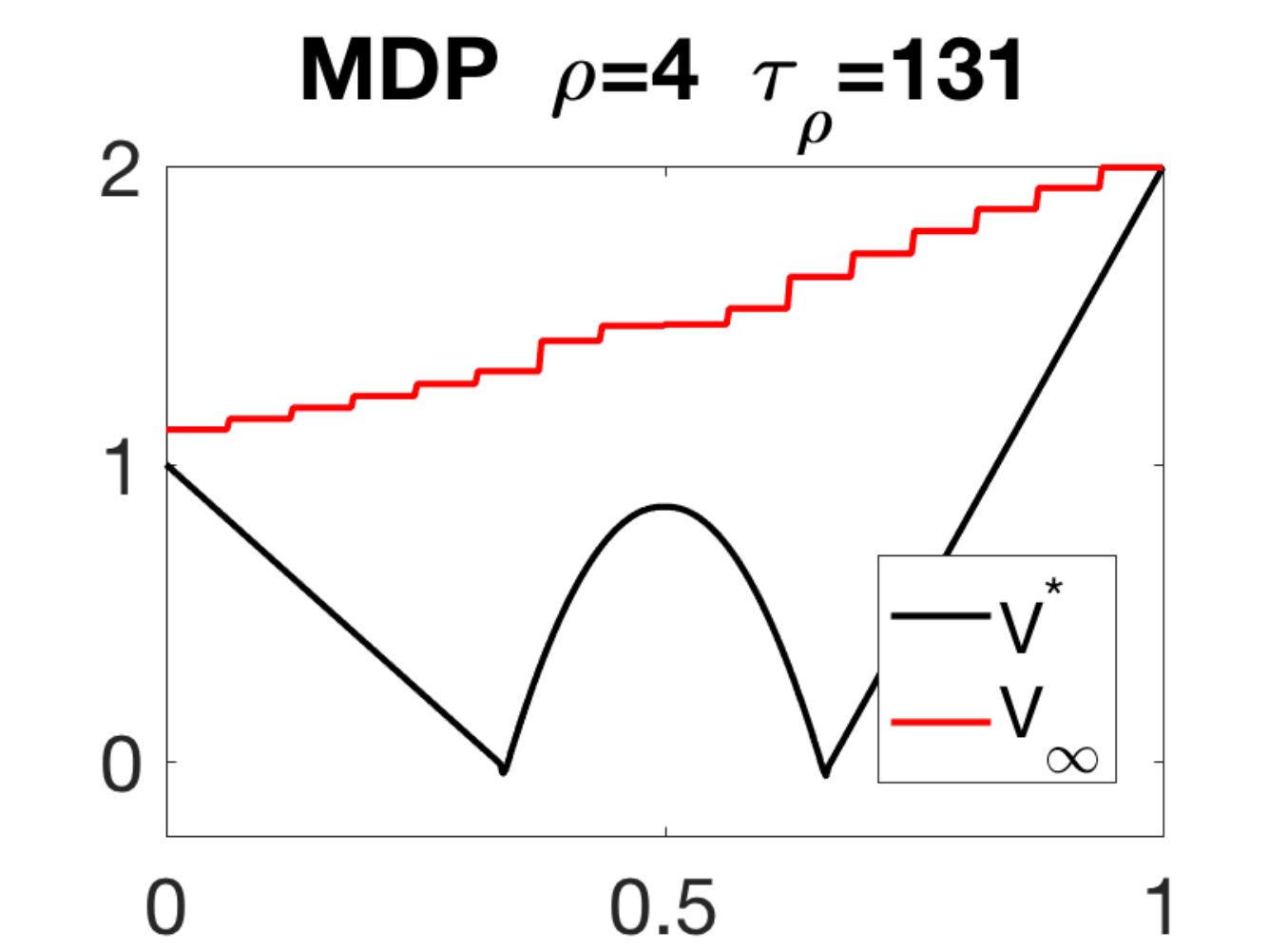} \hspace*{.234cm}
\includegraphics[width=3.75cm]{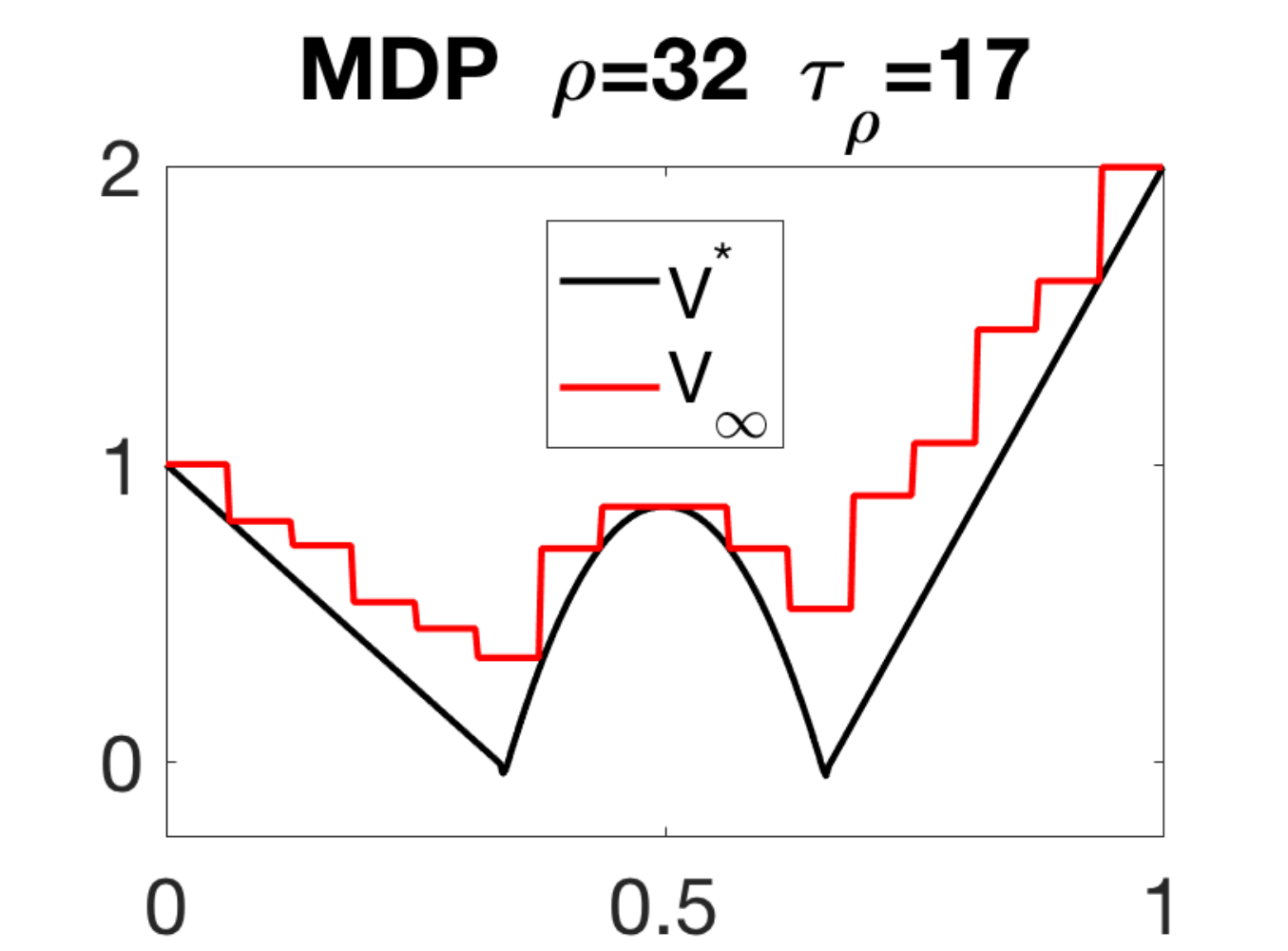} \hspace*{.234cm}
\includegraphics[width=3.75cm]{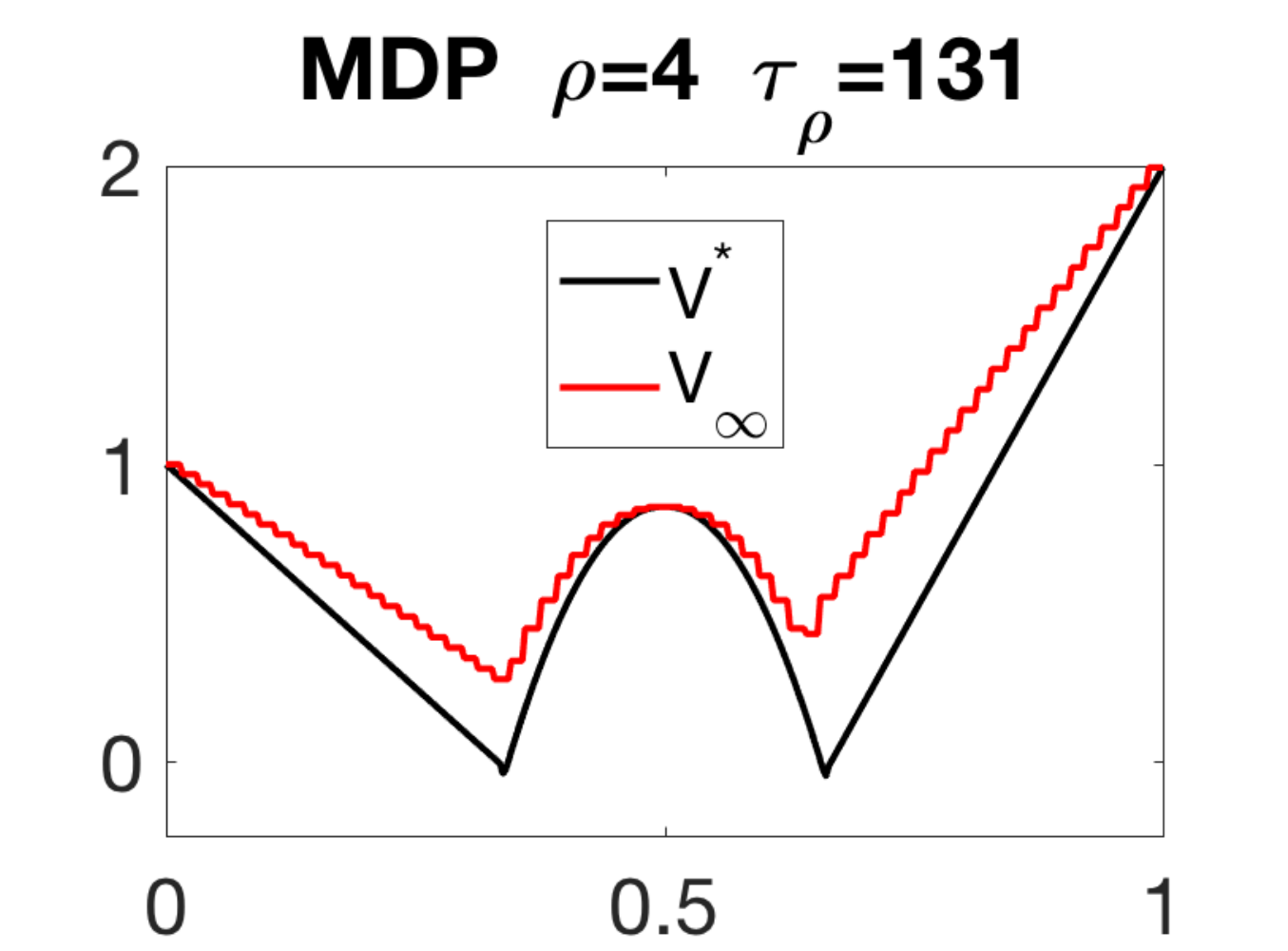} \hspace*{.234cm}
\includegraphics[width=3.75cm]{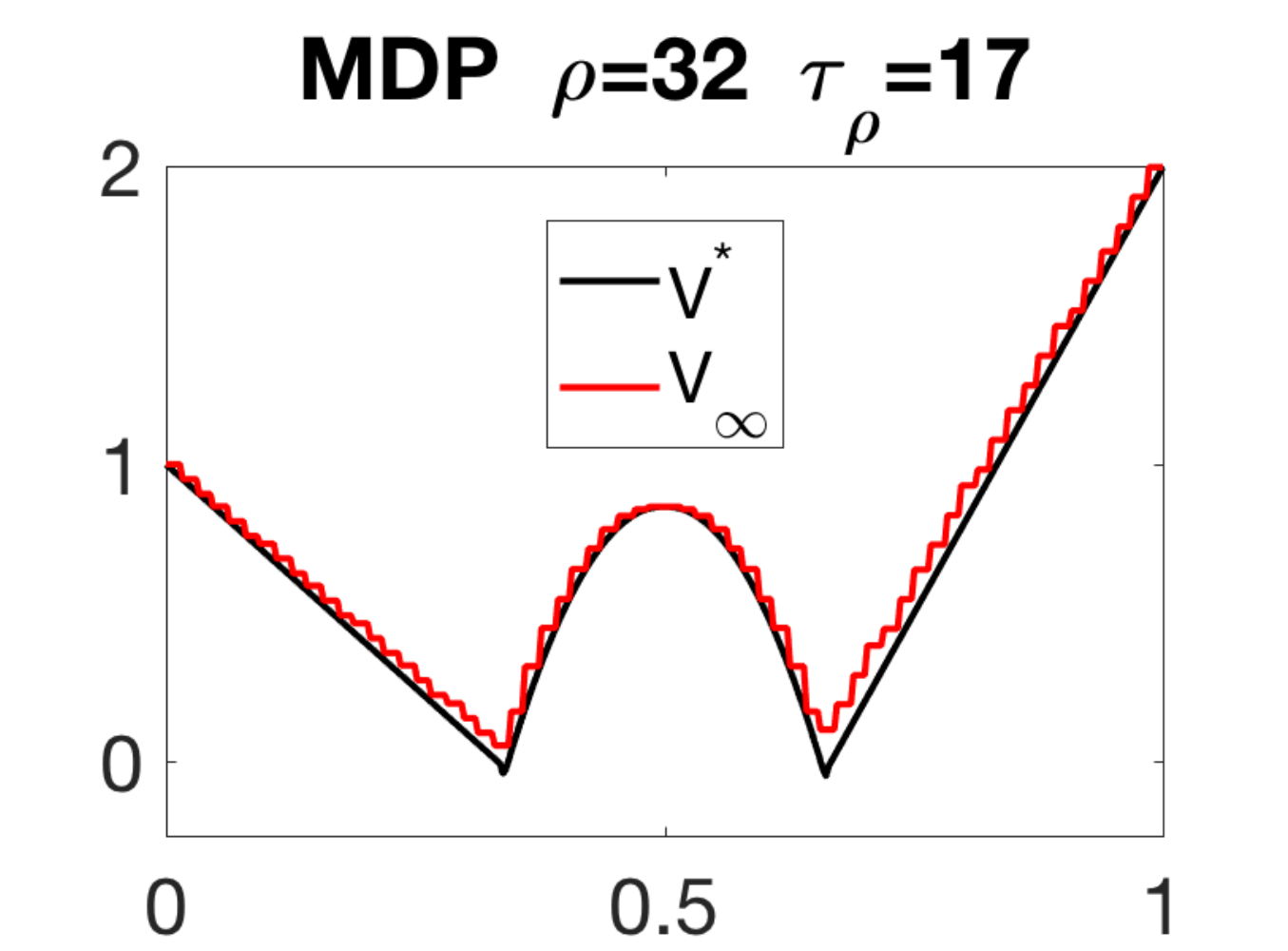}
\hspace*{-.5cm}\end{center}

\vspace*{-.4cm}

\caption{Approximation of a function $V$ with different finite basis with 16 or 64 elements, \emph{within the MDP} (i.e., leading to $V_\infty$ in Prop.~\ref{prop:approx}), and with values of $\rho$ that are $4$ and $32$. One-dimensional   case.  From left to right: $(n=16, \rho=4)$, $(n=16, \rho=32)$, $(n=64, \rho=4)$, $(n=16, \rho=32)$. \label{fig:fixed_basis_mdp}}
\end{figure}

While for factored spaces and with no assumptions on the optimal value function $V_\ast$ beyond continuity, $\eta(n,\S)$ may  not scale well, the approximation could be much better in special cases (e.g., when $V_\ast$ depends only on a subset of variables).
For factored spaces $\S = \S_1 \times \cdots \times \S_d$, with a $\ell_\infty$-metric, then $\eta_n(\S) \leqslant \max_{i \in \{1,\dots,d\}} \eta(n^{1/d}, \S_i)$.
If each factor $\S_i$ is simple (e.g., a chain graph), then $\eta(n, \S_i) = O(1/n)$, and we get 
$\eta_n(\S) = O(1/n^{1/d})$. Here, we do not escape the curse of dimensionality.

\paragraph{Going beyond exponential complexity.} In order to avoid the rate $O(1/n^{1/d})$ above, we can consider further assumptions:  if $V_\ast$ is well approximated by a piecewise constant function with respect to a specific partition (dedicated to $V_\ast$), the bound in \eq{approxpartition} can be greatly reduced. Note that the approximation with a small number of basis functions here is the same for $\W \W^\top$ and ${{\Z\!}^\top} ^+{{\Z\!}^\top}$ when $\Z = \W$ (this will not be the case in \mysec{affine} below).

We can get a reduced set of basis functions, if for example $V_\ast$ depends only on $k$ variables, then there is a partition for which the error in
\eq{approxpartition} is of the order $O(1/n^{1/k})$, and we can then escape the curse of dimensionality if $k$ is small and we can find these $k$ relevant variables. This will be done algorithmically in \mysec{greedy}, by considering a greedy algorithm in $[0,1]^d$ with sets $\prod_{i=1}^d \big[
\frac{j_i}{2^{k_i}}, \frac{j_i+1}{2^{k_i}}
\big]$ and a split according to a single dimension at every iteration. This variable selection does not need to be global and the covering number can benefit from local independences.

\paragraph{Optimal choices of hyperparameters.} The approximation error from Prop.~\ref{prop:constant} is of  order $(1\!+\! \tau/\rho) {\rm Lip}(V_\ast)  |\W|^{-1/k}$, and thus, for $\rho = O(\tau)$, in order to achieve a final approximation error of ${\rm range}(r) \tau \varepsilon$, we need $|\W|$ to be of the order  of  $  [   {\rm Lip}(V_\ast) {\rm range}(r)^{-1} \varepsilon^{-1} \rho^{-1} ]^{k}$. Without compilation time this leads to a complexity proportional 
to $ \big[   {\rm Lip}(V_\ast) {\rm range}(r)^{-1} \varepsilon^{-1}  \big]^{k} \tau \rho^{d-k-1} \log(1/\varepsilon)$; thus, it is advantageous to have large values of $\rho$ when $k=d$ (full dependence). This has to be mitigated by the compilation time that is less than $|\S| \rho^{2d}$, which grows   with $\rho$ and $d$. We will see in \mysec{exp} that larger values are also better for a good selection of dictionary elements in matching pursuit.

\subsection{Distance functions and piecewise-affine functions}
\label{sec:distance}
\label{sec:affine}
Following~\cite{akian2008max}, we consider functions of the form $w(s) = - c \cdot d(s,w)$ for $w \in \S$, and $d$ a distance on~$\S$. Thus $\W$ may be identified to a subset of $\S$. When $\S$ is a subset of $\rb^d$ and with $\ell_1$ or $\ell_\infty$ metrics, we get piecewise affine functions (see \myfig{fixed_basis}). We then have (see proof in App.~\ref{app:propdist}):

\begin{proposition}
\label{prop:dist}
(a) If $\W =\S$ and $c \geqslant {\rm Lip}(V)$ (Lipschitz-constant of $V$), then $\W\W^+V = V$.
\\
(b) If $c \geqslant {\rm Lip}_1(V)$, then $\| V - \W\W^+ V\|_\infty \leqslant  2c \cdot \max_{s \in \S} \min_{w \in \W}  d(w,s)$.
\end{proposition}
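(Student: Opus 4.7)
The plan is to unpack both residuations explicitly and then exploit Lipschitz continuity to bound the relevant $\max$–$\min$ expressions by evaluating them at judiciously chosen points. With $w(s) = -c\,d(s,w)$, the operators read
\[
\W^{+}V(w) = \min_{s' \in \S}\, V(s') + c\,d(s',w), \qquad
\W\W^{+}V(s) = \max_{w \in \W}\, \W^{+}V(w) - c\,d(s,w),
\]
and we already know the generic inequality $\W\W^{+}V \leq V$, so in both parts only the reverse direction must be established.

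For part (a), I would fix $s \in \S$ and, since $\W = \S$, evaluate the outer $\max$ at the admissible choice $w = s$, giving the lower bound $\W\W^{+}V(s) \geq \W^{+}V(s) = \min_{s'} V(s') + c\,d(s',s)$. The Lipschitz assumption $c \geq \mathrm{Lip}(V)$ yields $V(s') \geq V(s) - c\,d(s,s')$, so every term in that minimum is at least $V(s)$, and the choice $s' = s$ attains $V(s)$. Hence $\W^{+}V(s) = V(s)$ and therefore $\W\W^{+}V(s) \geq V(s)$, proving equality.

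For part (b), let $\eta = \max_{s \in \S}\min_{w \in \W} d(w,s)$ denote the covering radius, and for a fixed $s$ pick $w_\star \in \W$ realizing $d(s,w_\star) \leq \eta$. Evaluating the outer max at $w = w_\star$ gives $\W\W^{+}V(s) \geq \W^{+}V(w_\star) - c\,d(s,w_\star)$. Using the order-$1$ Hölder (Lipschitz) bound $V(s') \geq V(w_\star) - c\,d(s',w_\star)$, each term inside $\W^{+}V(w_\star)$ is at least $V(w_\star)$, so $\W^{+}V(w_\star) \geq V(w_\star)$. A second application of the Lipschitz inequality then yields $V(w_\star) \geq V(s) - c\,d(s,w_\star)$, so chaining the two estimates gives
\[
\W\W^{+}V(s) \;\geq\; V(s) - 2c\,d(s,w_\star) \;\geq\; V(s) - 2c\eta.
\]
Combined with $\W\W^{+}V \leq V$ this is exactly the claimed $\ell_\infty$ bound.

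Neither step presents a real obstacle: both arguments reduce to picking the right witness ($w = s$ in (a), $w = w_\star$ in (b)) and applying Lipschitz continuity twice, once to control the inner minimum and once to convert a value at a nearby point into a value at $s$. The only subtlety worth flagging is that the factor $2$ in (b) is essentially tight and arises from this double use of Lipschitz — one instance to show that the residuated dual coordinate $\W^{+}V(w_\star)$ at the nearest basis element is close to $V(w_\star)$, and a second to transport that value back to $s$.
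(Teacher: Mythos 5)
Your proof is correct and follows essentially the same route as the paper's: both rest on the identity $\W^+V(w)\geqslant V(w)$ (with equality) obtained from the Lipschitz assumption, followed by a second application of Lipschitz continuity to transport the value from the nearest basis point back to $s$, yielding the factor $2c$. The only cosmetic difference is that you instantiate an explicit witness $w_\star$ where the paper keeps the $\min$ over $w\in\W$ until the last line.
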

We thus get an approximation guarantee for Prop.~\ref{prop:approx} from a good covering of $\S$. The approximation also applies to~$\Z^\top$. In terms of approximation guarantees, then we need to cover $\S$ with sufficiently many elements of~$\W$, and thus with $n$ points in~$\W$ we get an approximation of exactly the same order than for clustered partitions (but with the need to know the Lipschitz constant ${\rm Lip}_1(V)$ to set the extra parameter).

In terms of approximation by a finite basis, a problem here is that being well approximated by some $\W \alpha$, for $|\W|$ small, does not mean that one can be well approximated by $\Z \beta$, with $|\Z|$ small (it is in one dimension, not in higher dimensions).  Moreover, these functions are not local so computing $\langle z | w \rangle$ and $\langle z | Tw \rangle$ could be harder. Indeed, the compile time is $O( ( |\E| + |\Z| \cdot |\S|) \cdot |\W|)$ while each iteration of the reduced algorithm is $O(|\W| \cdot |\Z|)$.

\paragraph{Distance functions for variable selection.} To allow variable selection, we can consider a more general family of distance functions, namely, function of the form $w(s) =  -\max_{i \in A } d(s_i,w_i)$ or $w(s) =  -\sum_{i \in A } d(s_i,w_i)$ where $A \subset \{1,\dots,d\}$ for factored spaces. We consider the second option in our experiments in \mysec{exp}.

\subsection{Extensions}

\paragraph{Smooth functions.} As outlined by~\cite{akian2008max}, in the Euclidean continuous case, we can consider square distance funtions, which we generalize to Bregman divergences in Appendix~\ref{app:bregman}. Note that these will approximate smooth functions with more favorable approximation guarantees (but note that in MDPs obtained from the discretization of a continuous control problem, optimal value functions are  non-smooth in general~\cite{crandall1983viscosity}). Finally, other functions could be considered as well, such as linear functions restricted to a subset, or ridge functions of the form $ w(s) = \sigma (A^\top s +b)$.

 \paragraph{Random functions.} If we select a random set of points from a Poisson process with fixed intensity, then the maximal diameter of the associated random Voronoi partition has a known scaling~\cite{calka2014extreme}, similar to the covering number up to logarithmic terms. Thus, we can use distance functions from \mysec{distance} (where we can sample the constant $c$ from an exponential distribution) or clusters from \mysec{clusters}, or also squared distances like in \mysec{quadratic}.

 \section{Greedy Selection by Matching Pursuit}
 \label{sec:greedy}
The sets of functions proposed in \mysec{approx} still suffer from the curse of dimensionality, that is, the cardinalities $|\W|$ and $|\Z|$ should still scale (at most linearly) with $|\S|$, and thus exponentially in dimension if $|\S|$ is a factored state-space. We consider here greedily selecting new functions, to use dictionaries adapted to a given MDP, mimicking the similar approach of sparse decompositions in signal processing and unsupervised learning~\cite{mallat2008wavelet,mairal2014sparse}.

We assume that we are given $\W$ and $\Z$, and we want to test new sets $\W_{\rm new}$
and $\Z_{\rm new}$, which are  close to $\W$ and $\Z$ (typically one function $w$ split in two, that is, $w = w_1 \otimes w_2 = \max \{w_1,w_2\}$ or one additional function $w_{\rm new}$). Note that pruning~\cite{gaubert2011curse} could also be considered as well.

We assume that we have the optimal $\alpha \in \rb^\W$ and $\beta\in \rb^\Z$ (after convergence of the iterations defined in \eq{beta} and \eq{alpha}), such that 
$ \beta = {{\Z\!}^\top} T \W \alpha$ and $\alpha = \W^+  {{\Z\!}^{\top+}}\beta$. We denote by $V = 
\W \alpha$ and $U = {{\Z\!}^{\top+}}\beta =   {{\Z\!}^{\top+}} {{\Z\!}^\top} T V$. The criterion will be based on considering the best improvement of the new projections $\W_{\rm new} \W_{\rm new}^+$ and ${{\Z\!}}_{\rm new}^{\top+} {{\Z\!}^\top}_{\rm new}$ to the relevant function.

\paragraph{Criterion for $\W_{\rm new}$.}
Given the current difference between $U$ and its projection
$U - \W\W^+ U = U - V \geqslant 0$, the criterion to minimize is $\|
U - \W_{\rm new} \W_{\rm new}^+ U
\|$, for a given norm $\| \cdot\|$.
If  $\W_{\rm new} = \W \cup \{{w}_{\rm new}\}$, we have 
$\W_{\rm new} \W_{\rm new}^+U(s)
=   \max \big\{ w_{\rm new}(s) - \langle w_{\rm new} | -U \rangle, V(s) \big\}  $, 
and our criterion thus becomes $\|
U - \W_{\rm new} \W_{\rm new}^+ U
\|_\infty
= \max_{s \in \S} \min \big\{
U(s) - V(s), U(s) -  w_{\rm new}(s) + \langle w_{\rm new} | -U \rangle
\big\}$ for the $\ell_\infty$-norm.

\paragraph{Criterion for $\Z_{\rm new}$.}
Given the current difference between $TV$ and its projection
$ {{\Z\!}^{\top+}}{{\Z\!}^\top}   TV  - TV  = U - TV \geqslant 0$,
 the criterion to minimize is  $\|
{{\Z\!}}_{\rm new}^{\top+} {{\Z\!}^\top}_{\rm new}  TV - TV
\|_\infty$. If $\Z_{\rm new} = \Z \cup \{{z}_{\rm new}\}$, we have
${{\Z\!}}_{\rm new}^{\top+} {{\Z\!}^\top}_{\rm new}  TV(s)
=  \min \big\{-
 z_{\rm new}(s)  + \langle TV |z_{\rm new} \rangle,
U (s) \big\}
 $, and and our criterion becomes
$\|
{{\Z\!}}_{\rm new}^{\top+} {{\Z\!}^\top}_{\rm new}  TV - TV
\|_\infty
= \max_{s \in \S} \min \big\{
U(s) - TV(s), - TV(s) - z_{\rm new}(s)   + \langle TV |z_{\rm new} \rangle
\big\}$.

Like in regular matching pursuit for classical linear approximation, allowing full flexibility for $z_{\rm new}$ or $w_{\rm new}$ would lead to trivial choices, here   $z_{\rm new} = -TV$ and $w_{\rm new} = U$: in our MDP situation, we essentially end up with few changes compared to plain value iteration as the new atoms are just obtained by using our own approximation $U$ or applying the transition operator to another approximation (i.e., $TV$). Thus, within matching pursuit, in order to benefit from a reduction in global approximation error, ensuring the diversity of the dictionary of atoms which we select from is crucial. To go beyond selection from a finite set (and learn a few parameters), we propose a convex relaxation of estimating $z_{\rm new}$ or $w_{\rm new}$ in App.~\ref{app:cvxrelax}.

\paragraph{Special case of partitions.}
In this case , we have  $\W\W^+ {{\Z\!}^{\top +}}{{\Z\!}^\top} = {{\Z\!}^{\top +}}{{\Z\!}^\top}$, and thus we only need to consider $\|
{{\Z\!}}_{\rm new}^{\top+} {{\Z\!}^\top}_{\rm new}  TV - TV
\|_\infty$ above. A simplification there is to consider the current set $A(w)$ which contains $s$ attaining the $\ell_\infty$ bound above, and only try to split this cluster.

\paragraph{Related work.}
Variable selection within an MDP could be seen as a special case of factored MDPs~\cite{guestrin2003efficient} where the reward function depends on a subset of variables, it would be interesting to study theoretically more complex dependences.
Beyond factored MDPs, variable selection and more generally function approximation within MDPs has been considered in several works~(see, e.g.,~\cite{bertsekas1989adaptive,patrascu2002greedy,keller2006automatic,mahadevan2007proto,busoniu2011cross,liu2015feature}), but do not provide the theoretical analysis that we provide here or consider linear function approximations, while we consider max-plus combinations; \cite{munos2002variable} considers variable resolution within the discretization of an optimal control problem, but not for a generic Markov decision process.

\section{Experiments}
\label{sec:exp}

All experiments can be exactly reproduced with the Matlab code which can be obtained from the author's webpage.\footnote{{\small \url{www.di.ens.fr/~fbach/maxplus.zip}}}

\paragraph{Simulations on discretizations of control problems on $[0,1]$.}
We consider the discretization $\S = \big\{  ({i-1})({S-1})^{-1}, \ i \in \{1,\dots,S\} \big\}$ of $[0,1]$, which we can identify to $\{1,\dots,S\}$.
Given the set of actions $\mathcal{A} = \{-1,1\}$, following~\cite{munos2000study}, we consider the dynamical systems
$dx/dt = a$ (going left or right). The goal of the control problem is to maximize $\int_0^u \eta^t b(x(t)) dt + \eta^t B(x(u))$ where $u$ is the exit time of $x$ from $[0,1]$ (i.e., reaching the boundary). We then define the value function $V(x)$ as the supremum over controls $a(\cdot)$ starting from $x(0)=x$.

Then $V(x)$ satisfies the Hamilton-Jacobi-Bellman (HJB) equation~\cite{crandall1983viscosity,munos2000study}:
$V(x) \log \eta +  |V'(x)|  + b(x)= 0$, with the boundary condition $V(x) \geqslant B(x)$ for $x \in \{0,1\}$. With the discretization above, with a step $\delta = 1/(S\!-\!1)$, we have the absorbing states $1$ and $S$. We thus need to construct the reward from $s$ to $s+1$, and from $s$ to $s-1$, for any $s \in \{2,\dots,S-1\}$ equal to the reweighted function $\delta b(x)$ for the reached state, and a discount factor equal to $\gamma = \eta^\delta$. From states $1$ and $S$, no move can be made and the reward $r(S,S)$ is equal to $(1-\gamma)B(1)$ and $r(1,1) = (1-\gamma)B(0)$. When~$\delta$ goes to zero, then the MDP solution converges to the solution of the optimal control problem.

In our example in \myfig{fixed_basis} and \myfig{fixed_basis_mdp}, we consider pairs $(b,V)$ that satisfy the HJB equation. In \myfig{perf_1d} (top), for the same problem, we consider the performance of our greedy method (matching pursuit with an $\ell_1$-norm criterion) or fixed basis for piecewise constant functions from \mysec{constant} and piecewise affine functions from \mysec{affine}, when the horizon $\tau_\rho$ varies (from values of $\rho$) and the number of basis functions varies. We can see (a) the benefits of piecewise affine over piecewise constant  functions in the sets $\W$ and $\S$ (that is, better approximation properties with the same number of basis functions), (b) the beneficial effect of larger $\rho$ (i.e., using $T^\rho$ instead of $T$), in particular for greedy techniques where the selection of good atoms does require a larger $\rho$.

\begin{figure}[t]
\begin{center}
\hspace*{-2.5cm}
\includegraphics[width=3.8cm]{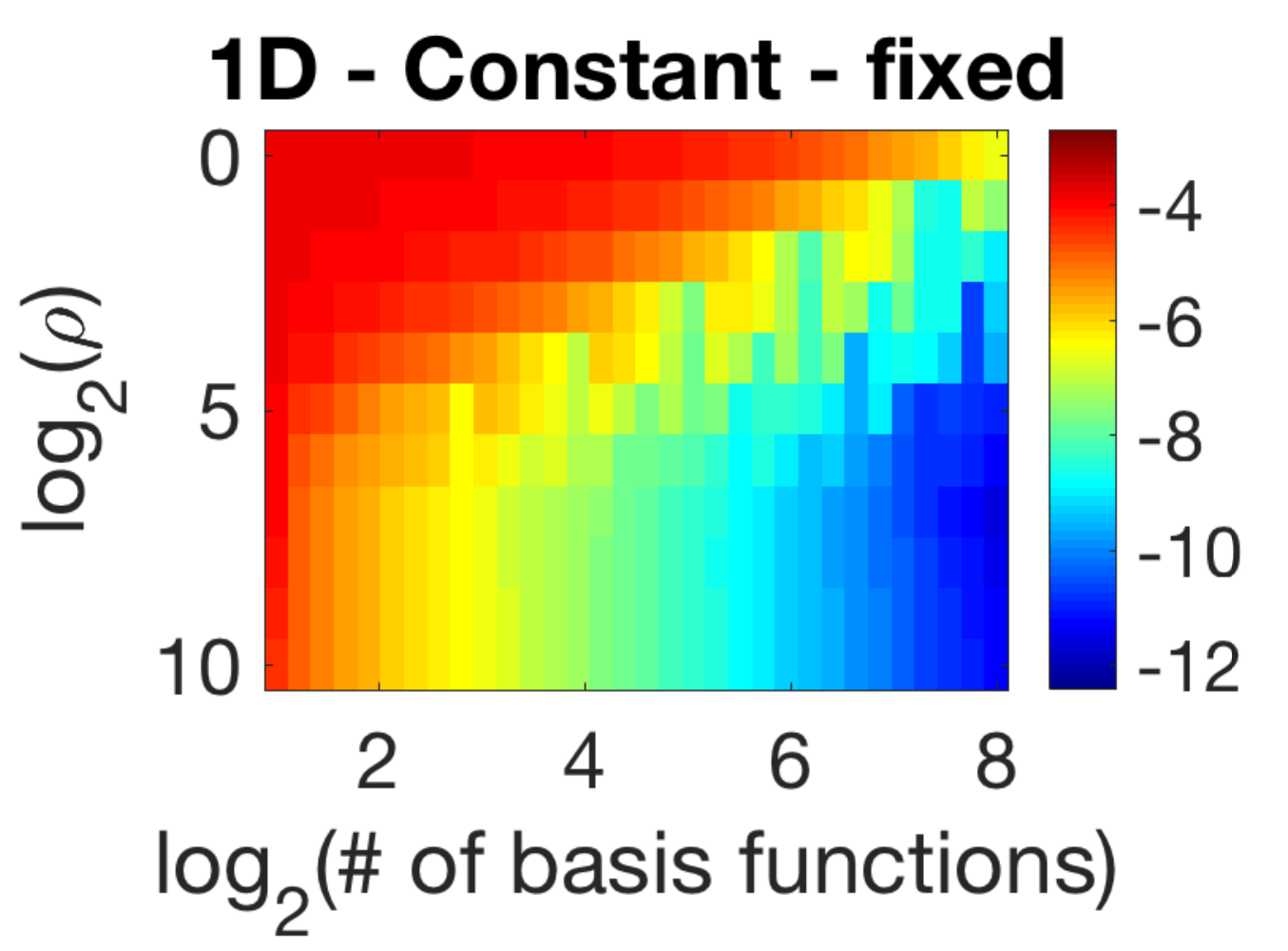} \hspace*{.1234cm}
\includegraphics[width=3.8cm]{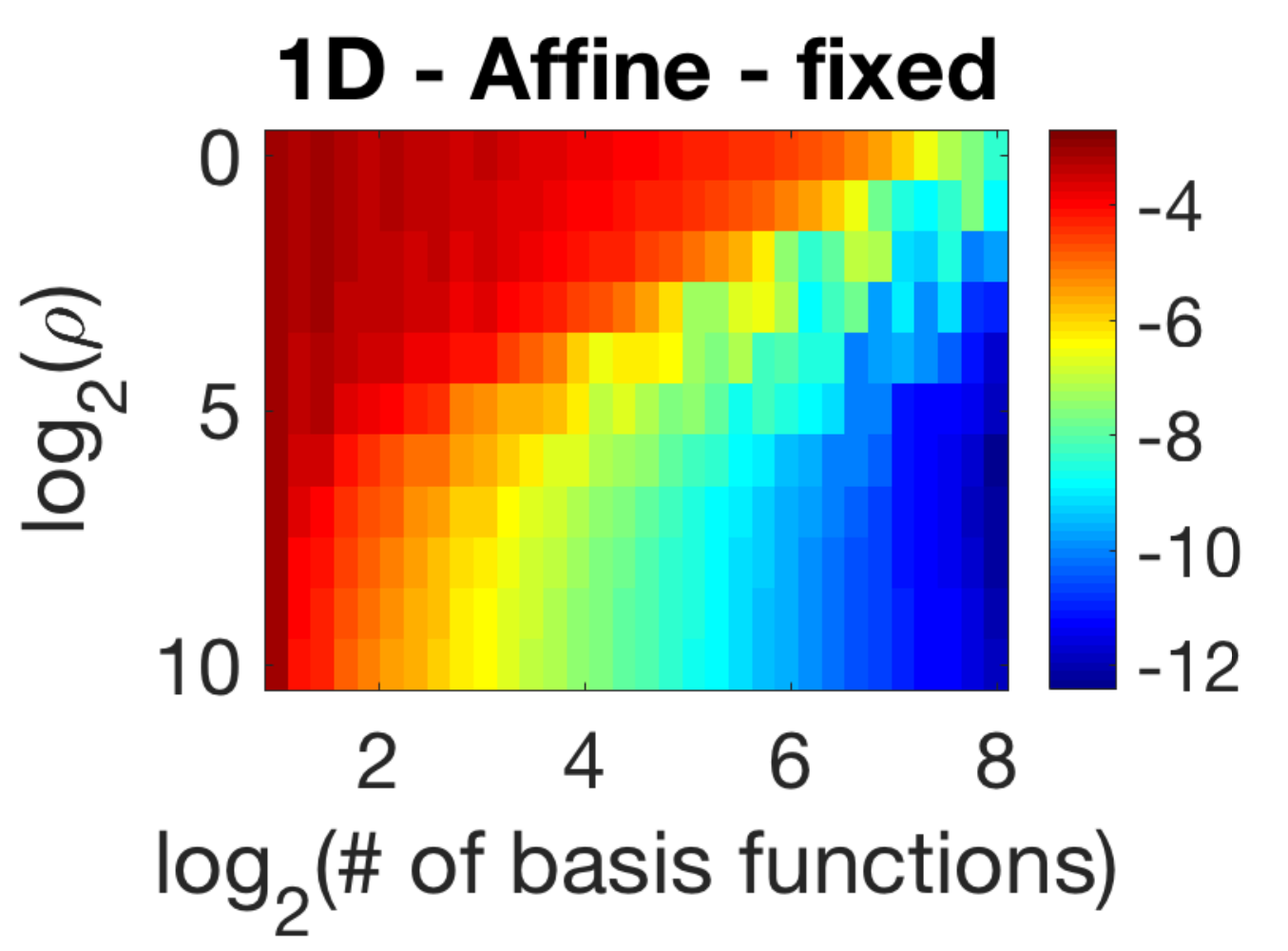} \hspace*{.1234cm}
\includegraphics[width=3.8cm]{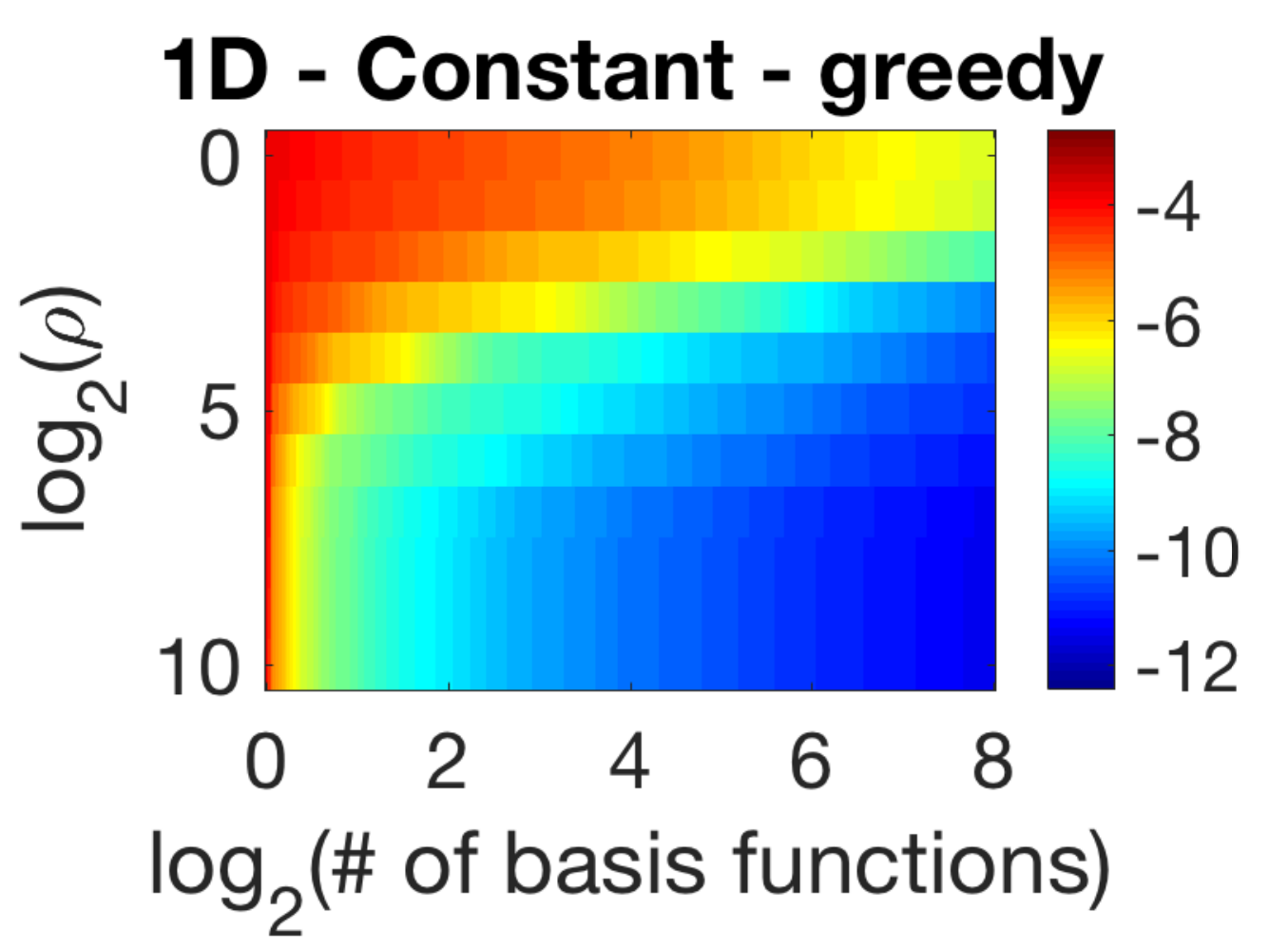} \hspace*{.1234cm}
\includegraphics[width=3.8cm]{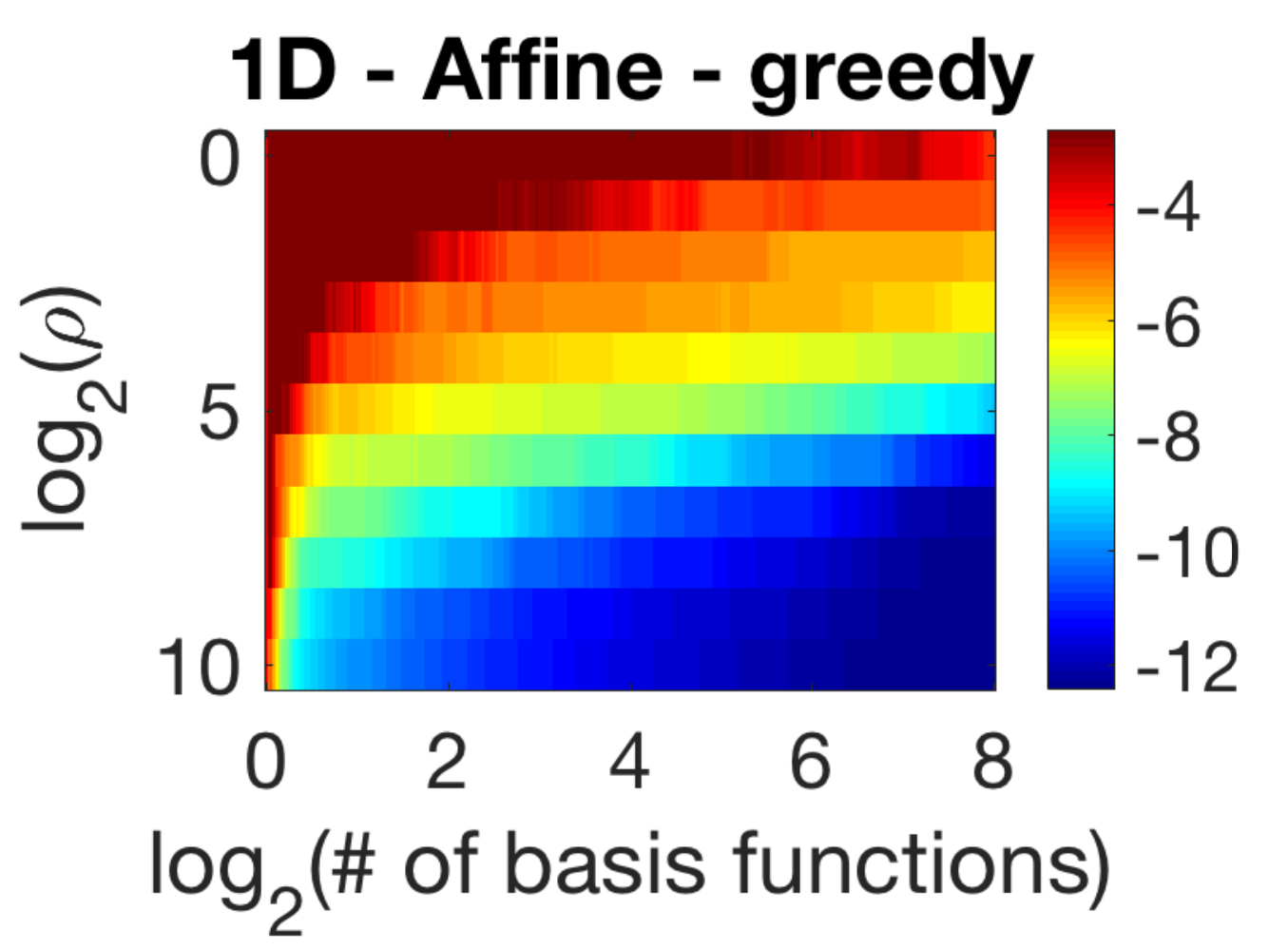}  
\hspace*{-2.5cm}

\vspace*{.2cm}

\hspace*{-2.5cm}
\includegraphics[width=3.8cm]{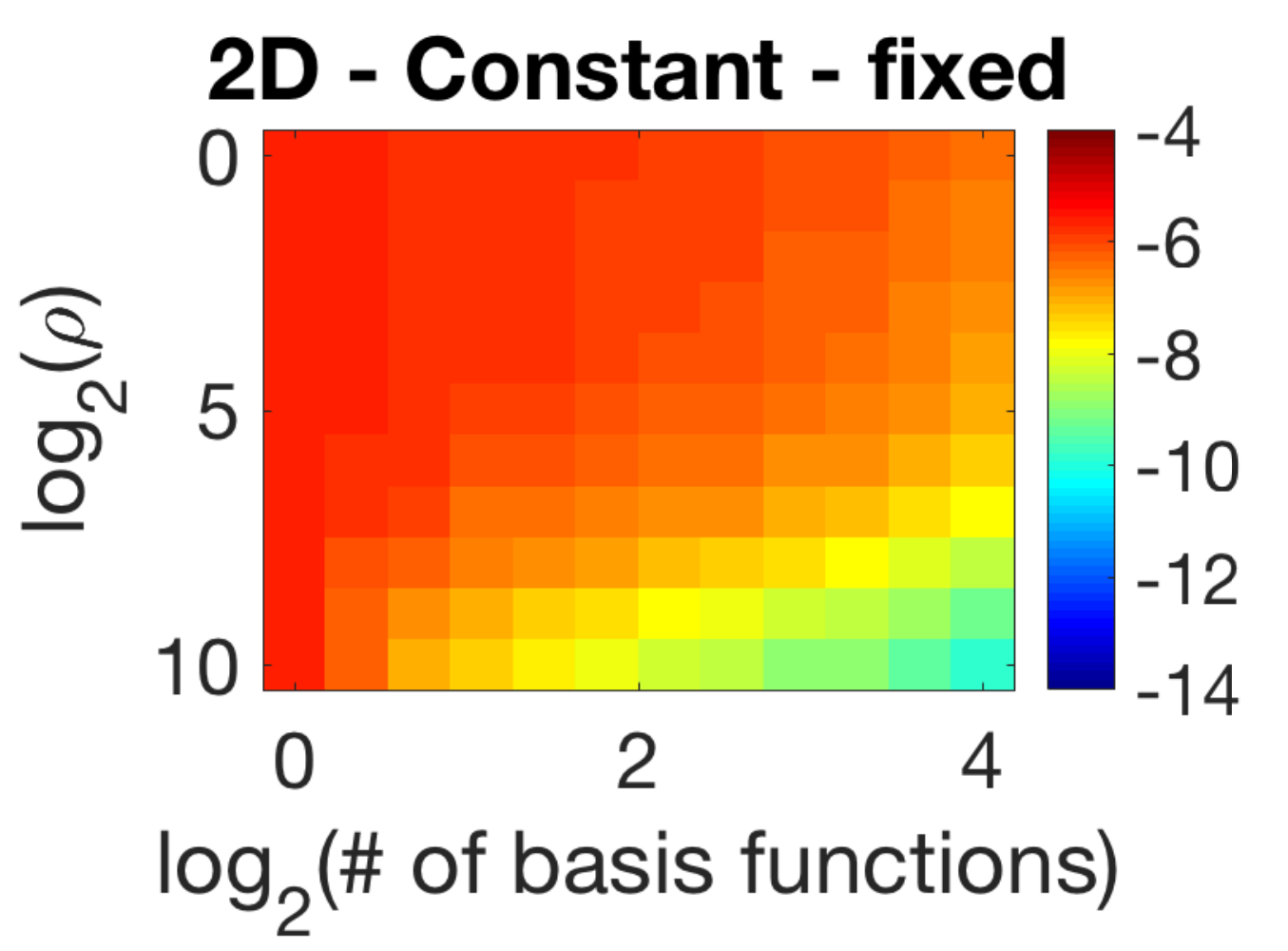} \hspace*{.1234cm}
\includegraphics[width=3.8cm]{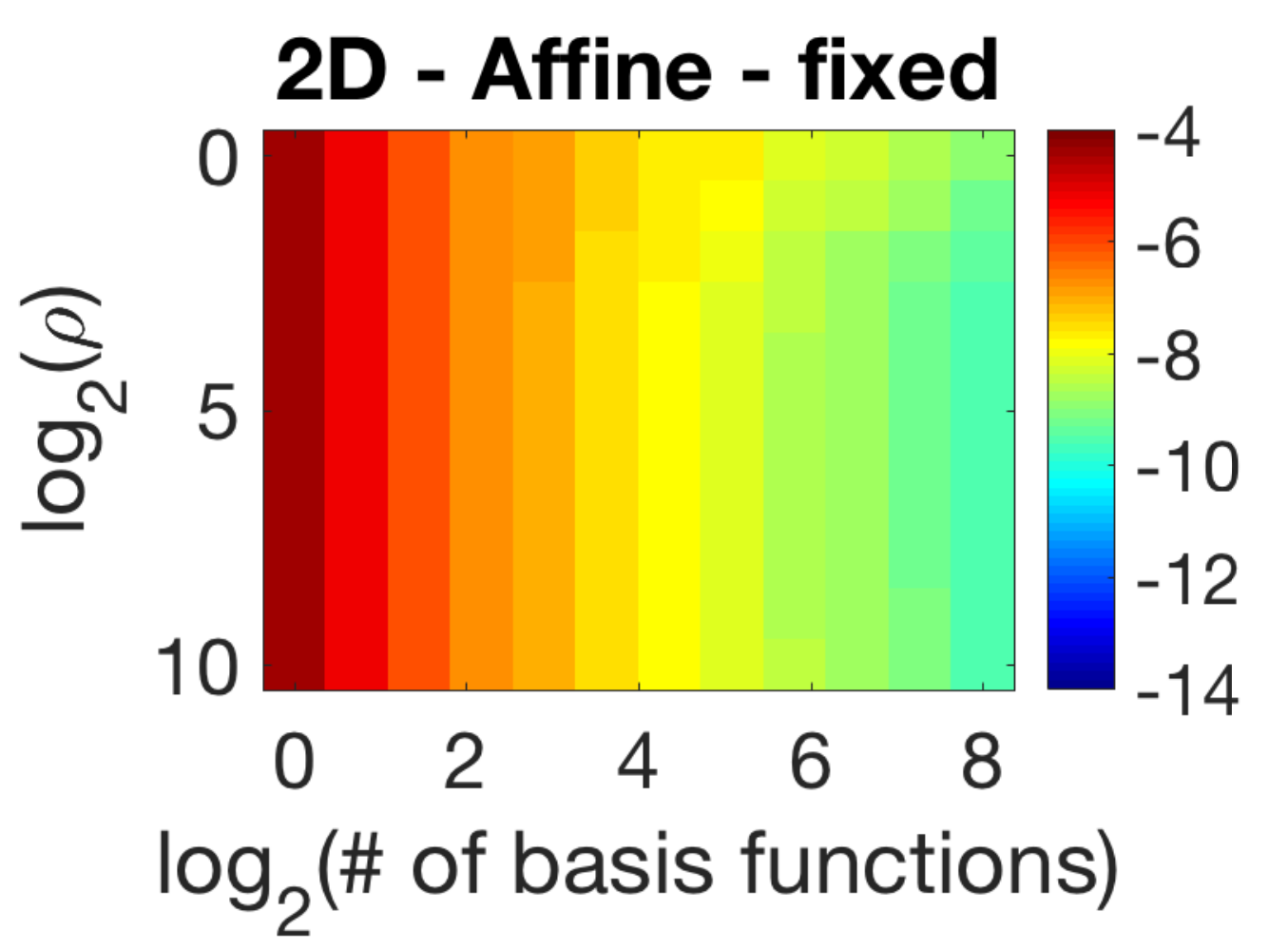} \hspace*{.1234cm}
\includegraphics[width=3.8cm]{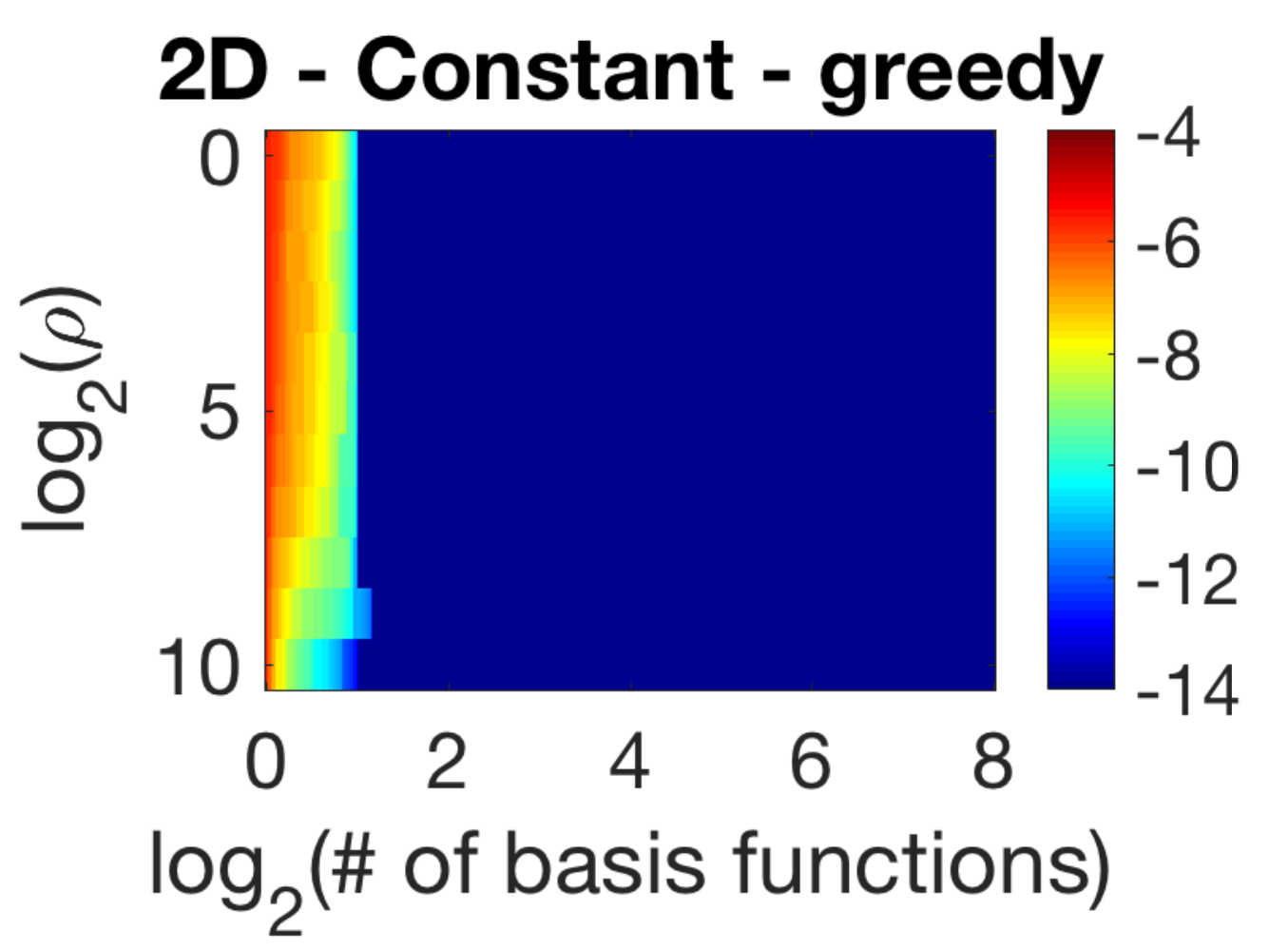} \hspace*{.1234cm}
\includegraphics[width=3.8cm]{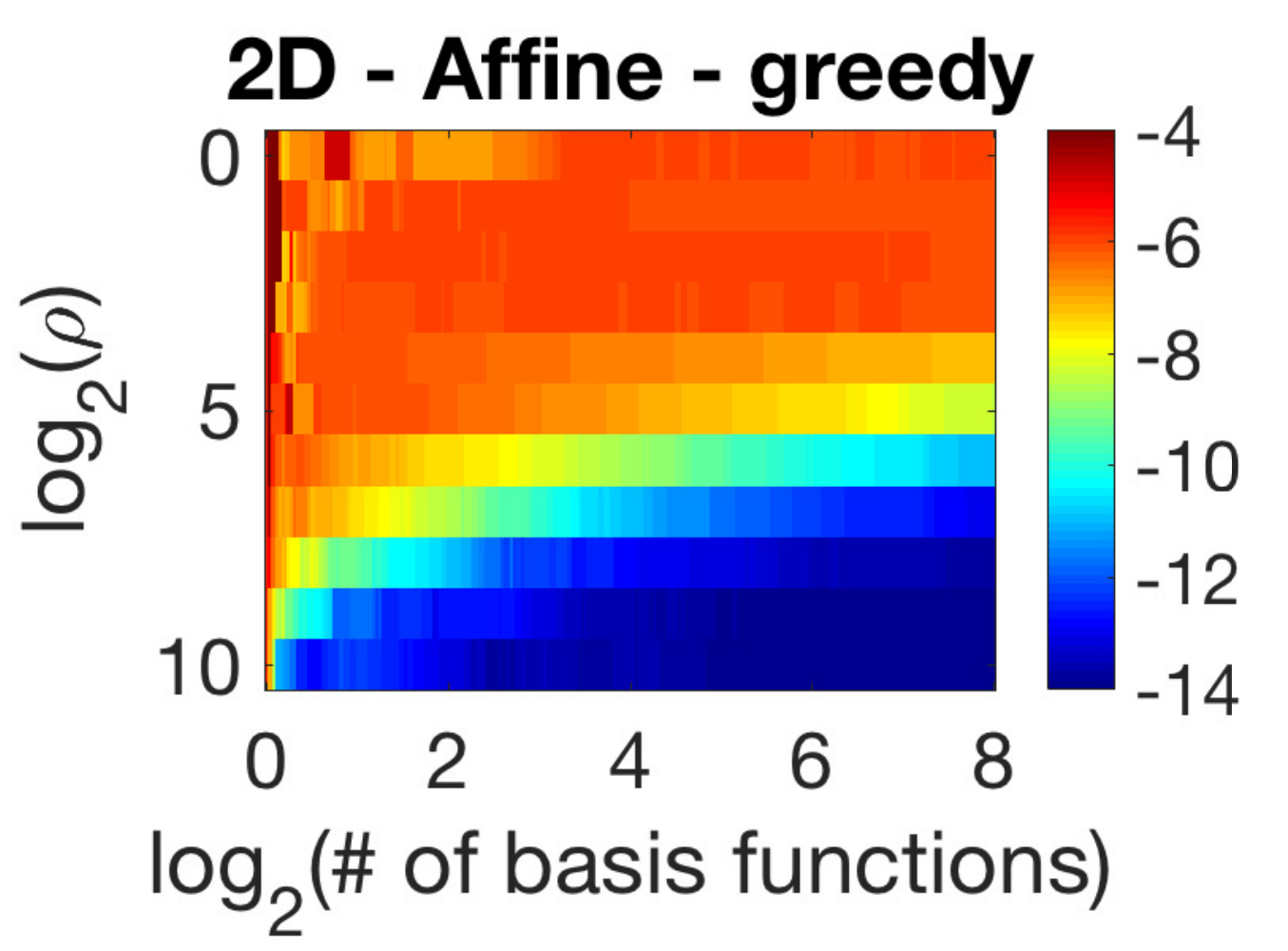}  
\hspace*{-2.5cm}

\end{center}

\vspace*{-.45cm}

\caption{Approximation error $\|V-V_\ast\|_1$ of a function $V$ as a function of $\rho$ and the number of basis functions, for piecewise constant or affine functions.  Top: One-dimensional case, bottom: two-dimensional case. Left: fixed non adaptive basis, right: greedy (matching pursuit). \label{fig:perf_1d}}
\end{figure}

\paragraph{Simulations on discretizations of control problems on $[0,1]^d$.} We consider two-dimensional extensions  where the optimal value function only depends on a single variable (see details and more experiments in Appendix~\ref{app:exp_2d}, with a full dependence on the two variables) and we show performance plots in \myfig{perf_1d} (bottom). Because of the sparsity assumption, the benefits of matching pursuit are greater than for the one-dimensional case (empirically, only relevant atoms are selected, and thus $\|V-V_\ast\|_1$ converges to zero faster as the number of basis functions increases).

\section{Conclusion}
 \label{sec:conclusion}
 
 In this paper, we have presented a max-plus framework for value function approximation with a greedy matching pursuit algorithm to select atoms from a large dictionary. While our current framework and experiments deal with low-dimensional deterministic MDPs, there are many avenues for further algorithmic and theoretical developments.
 
 First, for non-deterministic MDPs, the Bellman operator is unfortunately not max-plus additive; however,
  there are natural extensions to  general MDPs using measure-based formulations on probability measures on $\S$~\cite{hernandez2012discrete,lasserre2008nonlinear}, where using a policy $\pi(\cdot | \cdot)$, one goes from a measure~$\mu$ to $\mu'(s') = \sum_{s \in \S} \sum_{a \in \A} p(s'|s,a) \pi(a|s) \mu(s)$, with a reward going to $\mu$ and $\mu'$ equal to $ \sum_{s \in \S} \sum_{a \in \A} r(s,a) \pi(a|s) \mu(s)$. The difficulty here is the increased dimensionality of the problem due a new problem defined on probability measures. Also, going beyond model-based reinforcement learning could be done by estimation of model parameters; the max-plus formalism is naturally compatible with dealing with confidence intervals. Finally, it would be worth exploring multi-resolution techniques which are common in signal processing~\cite{mallat2008wavelet}, to deal with higher-dimensional problems, where short-term and long-term interactions could be partially decoupled.
 
\subsection*{Acknowledgements}
We acknowledge support  the European Research
Council (grant SEQUOIA 724063).

{ 
\bibliography{maxplus}
}

\appendix

\section{Value iteration and reduced value-iteration convergence}

In this appendix, we provide short proofs for the lemmas and propositions of the main paper related to (reduced) value iteration.

\subsection{Approximation of the value function through approximate fixed points}
\label{app:lemmaTVV}

This is taken  from~\cite[Prop.~2.1]{bertsekas2012weighted} and presented because the proof structure is used later.

\begin{lemma}[\cite{bertsekas2012weighted}]
\label{lemma:TVV}
If $\| V - T V \|_\infty \leqslant \varepsilon$,  then
  $\| V_\ast - V\|_\infty \leqslant  {\varepsilon}(1-\gamma)^{-1}$.
\end{lemma}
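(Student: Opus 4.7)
The plan is to use the fact that the Bellman operator $T$ is a $\gamma$-contraction in $\ell_\infty$-norm, together with $V_\ast$ being its unique fixed point. I would present two equivalent routes and use the shorter one, then remark on the other (which is the one hinted at by ``the proof structure is used later'').

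\textbf{Route 1 (triangle inequality + fixed point).} First I would write
\[
\|V - V_\ast\|_\infty \leqslant \|V - TV\|_\infty + \|TV - TV_\ast\|_\infty.
\]
The first term is at most $\varepsilon$ by hypothesis. For the second term, I would use that $T$ is $\gamma$-contractive in $\ell_\infty$-norm, a consequence of $\gamma < 1$ and the elementary inequality $|\max_a f(a) - \max_a g(a)| \leqslant \max_a |f(a) - g(a)|$ applied to the Bellman operator (the transition probabilities, being nonnegative and summing to one, do the rest). Together with $TV_\ast = V_\ast$, this gives $\|TV - TV_\ast\|_\infty \leqslant \gamma \|V - V_\ast\|_\infty$. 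Rearranging yields $(1-\gamma)\|V - V_\ast\|_\infty \leqslant \varepsilon$, which is the claim.

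\textbf{Route 2 (telescoping, the structure reused later).} Alternatively, since $T$ is a $\gamma$-contraction on the complete metric space $(\mathbb{R}^\S, \|\cdot\|_\infty)$, Banach fixed point gives $T^n V \to V_\ast$ as $n \to \infty$. Then
\[
\|V - V_\ast\|_\infty = \lim_{N \to \infty}\Bigl\| \sum_{n=0}^{N-1} (T^n V - T^{n+1} V) \Bigr\|_\infty \leqslant \sum_{n=0}^{\infty} \gamma^n \|V - TV\|_\infty \leqslant \frac{\varepsilon}{1-\gamma}.
\]
This is the bound displayed in the lemma statement.

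There is no real obstacle: the only ingredient beyond elementary manipulations is the $\gamma$-contractivity of $T$, which is standard and follows immediately from the definition of $TV$ together with $\gamma \in [0,1)$ and the fact that $p(\cdot\mid s,a)$ is a probability distribution. I would state that contractivity explicitly as a one-line verification before concluding with either route.
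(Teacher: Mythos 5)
Your proposal is correct, and your Route~2 is essentially the paper's own argument: the paper sets $\varepsilon_t = \|V - T^tV\|_\infty$, derives the recursion $\varepsilon_t \leqslant \gamma\varepsilon_{t-1} + \varepsilon$ from the triangle inequality and $\gamma$-contractivity, unrolls it to the geometric sum $\sum_{u=0}^{t-1}\gamma^u\varepsilon$, and lets $t\to\infty$ --- which is exactly your telescoping decomposition written as a recursion. Your Route~1 is a genuinely different and shorter derivation: bounding $\|V - V_\ast\|_\infty \leqslant \varepsilon + \gamma\|V-V_\ast\|_\infty$ and rearranging avoids any limit argument, at the cost of needing to know in advance that $\|V-V_\ast\|_\infty$ is finite (immediate here since $\S$ is finite) so that the subtraction is legitimate. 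The paper's choice of the longer route is deliberate: the recursion $\varepsilon_t \leqslant \gamma\varepsilon_{t-1}+\varepsilon$ is the template reused verbatim in the proof of Proposition~\ref{prop:approx}(b) with $\hat{T}$ in place of $T$, and it is also the structure underlying the iteration-count bounds in the algorithmic complexity discussion, whereas the one-line rearrangement does not expose the per-iteration error accumulation. Your explicit remark that $\gamma$-contractivity of $T$ should be verified as a preliminary step is appropriate; the paper takes it as known.
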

\begin{proof}
Consider 
$\varepsilon_t = \| V - T^t V \|_\infty$. We have
$\varepsilon_t \leqslant \| T T^{t-1} V - T V\|_\infty + \| T V- V\|_\infty \leqslant \gamma \varepsilon_{t-1} + \varepsilon$, because $T$ is $\gamma$-contractive. This leads to
$\varepsilon_t \leqslant \sum_{u=0}^{t-1}\gamma^u \varepsilon \leqslant \frac{\varepsilon}{1-\gamma}$, thus $\| V - V_\ast\|_\infty \leqslant \frac{\varepsilon}{1-\gamma}$ by letting $t$ tend to infinity. 
\end{proof}

\subsection{Proof Prop.~\ref{prop:approx}}

 \label{app:propapprox}
(a) This is consequence of the non-expansiveness of  $\W \W^+ $ and ${{\Z\!}^{\top+}}{{\Z\!}^\top}$, and the $\gamma$-contractiveness of $T$. 
\\
(b) We have: $
\| \hat{T} V_\ast - V_\ast \|_\infty
=\| \W \W^+  {{\Z\!}^{\top+}}{{\Z\!}^\top} V_\ast - V_\ast \|_\infty
\leqslant 
\| \W \W^+  {{\Z\!}^{\top+}}{{\Z\!}^\top} V_\ast - \W \W^+   V_\ast \|_\infty
+ \|  \W \W^+   V_\ast - V_\ast \|_\infty$. Using the non-expansivity of $\W \W^+$, we get
\[
\| \hat{T} V_\ast - V_\ast \|_\infty
 \leqslant 
\|  {{\Z\!}^{\top+}}{{\Z\!}^\top} V_\ast -  V_\ast \|_\infty
+ \|  \W \W^+   V_\ast - V_\ast \|_\infty \leqslant 2 \eta.\]
This imples implies that
$\| V_\infty - V_\ast \|_\infty \leqslant \frac{2 \eta }{1-\gamma}$ using the same reasoning as in the proof of Lemma~\ref{lemma:TVV}.
 
 The result extends to assumptions on $\min_{\alpha \in \rb^\W} \| \W \alpha - V_\ast \|_\infty$ instead of $\|  \W \W^+   V_\ast - V_\ast \|_\infty$ (and similarly for $\Z$). Indeed, we have, with $\tilde{\alpha} =   \W^+   V_\ast $,
 \BEAS
 \min_{\alpha \in \rb^\W} \| \W \alpha - V_\ast \|_\infty
&  \leqslant &   \| \W \tilde{\alpha} - V_\ast \|_\infty = \|  \W \W^+   V_\ast - V_\ast \|_\infty, \mbox{ and  for any } \alpha, \\
\|  \W \W^+   V_\ast - V_\ast \|_\infty
 & \leqslant & \|  \W \W^+   V_\ast  - \W \alpha \|_\infty + \|\W \alpha  - V_\ast \|_\infty
 \leqslant \|   \W^+   V_\ast  -  \alpha \|_\infty + \|\W \alpha  - V_\ast \|_\infty
 \EEAS
 by non-expansivity of $\W$. By taking the infimum over $\alpha$, we get that
 \[
 \min_{\alpha \in \rb^\W} \| \W \alpha - V_\ast \|_\infty \leqslant  |  \W \W^+   V_\ast - V_\ast \|_\infty \leqslant 2 \min_{\alpha \in \rb^\W} \| \W \alpha - V_\ast \|_\infty .
 \]
 
 In order to prove the result when replacing $T$ by $T^\rho$, we simply have to notice that the contraction factor of $\tau^\rho$ is $\gamma^\rho$ and that for $\tau = ( 1- \gamma)^{-1} \geqslant 1$ and $\rho \geqslant 1$, we have
 \[
 \frac{1}{1 - ( 1- 1/\tau)^\rho} \leqslant ( 1 + \tau/\rho).
 \]

 \section{Approximation properties of max-plus-linear combinations}
Here we provide proofs for approximation properties of max-plus linear combinations.

\subsection{Proof of Prop.~\ref{prop:constant} (piecewise-constant functions)}
\label{app:propconstant}

We have, for any $s \in \S$:
\BEAS
V(s) - \W\W^+ V(s)  &  = &     
V(s) - \max_{w \in \W} w(s) + \min_{s' \in \S} V(s')  -w(s'),
\EEAS
which is thus always non-negative (and valid for any family $\W$ of functions).

Thus, for partition-based set of functions, if $w$ is chosen so that $s \in A(w)$, then,  $w$ is the maximizer above and $s'$ above is restricted to $A(w)$, because the value of $w$ is $-\infty$ outside of $A(w)$. Thus

\BEAS
V(s) - \W\W^+ V(s)  &  = &     
V(s)  - \min_{s' \in A(w) } V(s')  \\
&= & \max_{s' \in A(w)} V(s') - V(s)\leqslant \max_{s' \in A(w)} | V(s') - V(s) |
\\
& \leqslant &   {\rm Lip}_p(V) \max_{s',s'' \in A(w) } d(s',s'')^p.
\EEAS
The result follows from the fact that with the choice of partition, $\max_{s',s'' \in A(w) } d(s',s'')$ is less than $ 2\eta(n,\S)$.

\subsection{Proof of Prop.~\ref{prop:dist} (distance functions)}
\label{app:propdist}

The proof is similar to~\cite{akian2008max} (but slightly tighter). First, we have, for
 $c \geqslant {\rm Lip}_1(V)$, and any $w \in \S$:
\[V(w) =   \min_{s' \in \S} V(s') + c \cdot d(s',w) .\]
 Indeed, (a) $V(w)$ is equal to $V(s') + c \cdot d(s',w)$ for $s'=w$, which implies $V(w) \geqslant  \min_{s' \in \S} V(s') + c \cdot d(s',w)$;
 moreover, (b) $V(s') + c \cdot d(s',w) \geqslant V(w)-  {\rm Lip}_1(V) \cdot d(s'w) + c \cdot d(s',w) \geqslant V(w)$, for all $s' \in \S$, which implies that  $V(w) \leqslant  \min_{s' \in \S} V(s') + c \cdot d(s',w)$.
 
 Therefore, we get:
\BEAS
\| V - \W\W^+ V\|_\infty & = &  \textstyle \max_{s \in \S}
V(s) - \max_{w \in \W} -c \cdot d(w,s) + \min_{s' \in \S} V(s') + c \cdot d(s',w) \\
& = & \textstyle \max_{s \in \S} \min_{w \in \W}
V(s) + c \cdot d(w,s) - V(w)  \\
&  \leqslant  & 
  \max_{s \in \S} \min_{w \in \W}  c \cdot d(w,s)  + {\rm Lip}_1(V) \cdot d(w,s)  \\
  & \leqslant & 2c \cdot  \max_{s \in \S} \min_{w \in \W}    d(w,s).
\EEAS

\subsection{Bregman basis functions smooth functions}
\label{sec:quadratic}
\label{app:bregman}

In this section, we consider approximations of functions on a subspace of $\rb^d$. These results can be used for discretizations of smooth problems.

Assuming that $\S$ is a convex compact subset of $\rb^d$, and $h$ is any convex function from $\rb^d$ to $\rb$, then, we consider the functions of the form $-\mathcal{D}_h(s,v) = - h(s) + h(v) - h'(v)^\top(s-v)$, which is the negative Bregman divergence associated to $h$, which is an extension of \cite{akian2008max} from quadratic to all convex functions $h$. Since the functions need only be defined up to constants, we can reparameterize them with $w=-h'(v)$, and thus consider
$w(s) = - h(s) + w^\top s$,  where $\W$ is identified to a convex subset of $\rb^d$.  Then $\W\W^+ V$ is related to $(V+h)^{\ast \ast}-h$, where $g^*$ is the Fenchel-conjugate~\cite{boyd2004convex} of a function $g$. More precisely, we have (see proof in App.~\ref{app:proofbregman}).
\begin{proposition}
\label{prop:bregman}
(a) If $\W$ contains the domain of $(V+h)^\ast$, then $\W\W^+ V = (V+h)^{\ast \ast} - h$.
\\
(b) More generally, for any norm $\Omega$ on $\rb^d$,
\[  \textstyle \| (V+h)^{\ast \ast} - h  - \W\W^+ V\|_\infty \leqslant 
{\rm diam}_\Omega(\S)
\max_{w \in {\rm domain}(( V+h)^\ast) } \min_{w' \in \W}  \Omega^\ast(w - w').\]
(c) If $(V+h)^{\ast \ast} - g$ is convex, for a convex function $g$, then:
\[  \textstyle  \| (V+h)^{\ast \ast} - h  - \W\W^+ V\|_\infty \leqslant 
\max_{w \in {\rm domain}(( V+h)^\ast) }  \min_{w' \in \W}  \mathcal{D}_{g^\ast}(w',w).\]
\end{proposition}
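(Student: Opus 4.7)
\textbf{Plan for proving Proposition~\ref{prop:bregman}.} The unifying observation is that $\W^+$ is essentially a Fenchel conjugation. Writing $U = V + h$, a direct computation gives
\begin{equation*}
\W^+V(w) = \min_{s \in \S}\bigl\{V(s) + h(s) - w^\top s\bigr\} = -U^*(w),
\end{equation*}
where $U^*$ denotes the Fenchel conjugate of $U$ taken over $\S$. Consequently $\W\W^+V(s) + h(s) = \max_{w \in \W}\{w^\top s - U^*(w)\}$. Part (a) is then immediate: if $\W$ covers $\mathrm{dom}(U^*)$, the right-hand side is exactly $U^{**}(s)$, so $\W\W^+V = (V+h)^{**} - h$. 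Also, for any $\W \subseteq \mathrm{dom}(U^*)$ the map $\W\W^+V + h$ is pointwise dominated by $U^{**}$, so the error in (b) and (c) is automatically nonnegative and the $\ell_\infty$ norm is a supremum over $s$.

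For (b), I would fix $s \in \S$ and let $w^\ast = w^\ast(s)$ be a maximizer in $U^{**}(s) = \max_{w \in \mathrm{dom}(U^*)} \{w^\top s - U^*(w)\}$. For every $w' \in \W$,
\begin{equation*}
U^{**}(s) - \bigl(w'^\top s - U^*(w')\bigr) = (w^\ast - w')^\top s + U^*(w') - U^*(w^\ast).
\end{equation*}
Now exploit that $U^*$ is itself a supremum over $\S$: picking $s_1 \in \S$ that achieves $U^*(w')$, I get $U^*(w') - U^*(w^\ast) \leq (w' - w^\ast)^\top s_1$, so the difference above is at most $(w^\ast - w')^\top (s - s_1) \leq \Omega^*(w^\ast - w')\,\Omega(s - s_1) \leq \Omega^*(w^\ast - w')\,\mathrm{diam}_\Omega(\S)$. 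Minimizing over $w' \in \W$ and taking the supremum over $s$ (hence over $w^\ast \in \mathrm{dom}(U^*)$) yields (b).

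For (c), the same first identity is now read as a Bregman divergence. Choose $w_0 = w_0(s) \in \partial U^{**}(s) \subseteq \mathrm{dom}(U^*)$, so that $s \in \partial U^*(w_0)$. Then expanding $U^*(w')$ around $w_0$ with subgradient $s$ gives
\begin{equation*}
U^{**}(s) - \bigl(w'^\top s - U^*(w')\bigr) = U^*(w') - U^*(w_0) - s^\top(w' - w_0) = D_{U^*}(w', w_0),
\end{equation*}
and therefore $\|(V+h)^{**} - h - \W\W^+V\|_\infty = \sup_s \min_{w' \in \W} D_{U^*}(w', w_0(s))$. It remains to replace $D_{U^*}$ by $D_{g^*}$. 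The hypothesis that $U^{**} - g$ is convex is exactly that $U^{**}$ is $g$-strongly convex in the Bregman sense ($D_{U^{**}} \geq D_g$), and I would invoke the convex-conjugation duality that this implies $g^*$-smoothness of $U^*$, i.e.\ $D_{U^*}(w', w) \leq D_{g^*}(w', w)$. Taking the minimum over $w' \in \W$ and the maximum over $w = w_0(s) \in \mathrm{dom}(U^*)$ yields the bound claimed in (c).

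\textbf{Main obstacle.} The delicate step is the pointwise inequality $D_{U^*} \leq D_{g^*}$ deduced from $U^{**} - g$ convex. A constructive route is through the infimal convolution: writing $U^{**} = g + \phi$ with $\phi$ convex, one has (under standard regularity) $U^* = g^* \mathbin{\square} \phi^*$, with an optimal decomposition $w_0 = u^\ast + v^\ast$ at which the common subgradient is $s$. Bounding $U^*(w')$ by picking the suboptimal decomposition $w' = u^\ast + (w'-u^\ast)$ and Bregman-expanding around $v^\ast$ gives an inequality of the desired shape; showing that this upper bound can be rewritten as $D_{g^*}(w', w_0)$ (rather than as a Bregman divergence with shifted arguments) is where care with differentiability and the identification $\nabla g^*(u^\ast) = \nabla \phi^*(v^\ast) = s$ is essential, and it is the only place where additional mild regularity on $g$ or $U^{**}$ is needed.
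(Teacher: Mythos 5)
Your proposal is correct and follows essentially the same route as the paper's proof: the same identity $\W\W^+V(s)=\max_{w\in\W}\,w^\top s-h(s)-(V+h)^\ast(w)$ gives (a) and the one-sided bound, (b) is the same Lipschitz argument (the paper phrases it as ``$w\mapsto w^\top s-(V+h)^\ast(w)$ has subgradients $s-s_1$ with $s,s_1\in\S$, hence $\Omega$-bounded by ${\rm diam}_\Omega(\S)$,'' which is exactly your choice of $s_1$), and (c) starts from the same optimality condition $s\in\partial (V+h)^\ast(w^\ast(s))$. The only place you diverge is at your ``main obstacle'': the paper does not pass through infimal convolutions, but simply asserts that $(V+h)^{\ast\ast}-g$ convex implies $g^\ast-(V+h)^\ast$ convex and applies the subgradient inequality to this convex function at $w^\ast(s)$ with subgradient $\nabla g^\ast(w^\ast(s))-s$; rearranging yields precisely your inequality $\mathcal{D}_{(V+h)^\ast}(w',w^\ast(s))\leqslant \mathcal{D}_{g^\ast}(w',w^\ast(s))$ in one line (implicitly assuming $g^\ast$ differentiable at $w^\ast(s)$, the same mild regularity you flag), so your extra machinery is not needed but your identification of where care is required matches the one step the paper takes for granted.
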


Thus, when $\W$ is not discretized, projection onto the image space of $\W$ corresponds to projection on the set of functions such that $V+h$ is convex: indeed, (a) implies that if $V+h$ is convex, $(V+h)^{\ast \ast} = V+h$ and  $\W\W^+ V =V$. When $\W$ is discretized, then we get an approximation of the result above, with an approximation error that vanishes when $\W$ covers the domain of $( V+h)^\ast$.
Similarly, for $\Z$, we obtain a similar behavior but for concave functions, because $  {{\Z\!}^{\top+}}{{\Z\!}^\top}  V = - \Z \Z^+ ( - V)$.

\paragraph{Smooth functions.}  If we make the assumption that the function $V$ is smooth with respect to the Bregman divergence defined from the convex function $\frac{1}{2} h$~\cite{bauschke2016descent}, that is, for all $s,s'$,
$-\frac{1}{2}\mathcal{D}_h(s',s) \leqslant V(s') - V(s) - \nabla V(s)^\top (s'-s) \leqslant \frac{1}{2}\mathcal{D}_h(s',s)$, that is, $V + \frac{1}{2} h$ and $\frac{1}{2}h - V$ are convex; this implies that with $g = \frac{1}{2} h$,
$(V+h)^{\ast \ast}  - g = V+h - \frac{1}{2} h = V+\frac{1}{2} h$ is convex, and thus statement (c) from Prop.~\ref{prop:bregman} leads to approximation guarantee for $\| V   - \W\W^+ V\|_\infty$. Similarly, the fact that $\frac{1}{2}h - V$ is convex leads to a similar guarantee for  $\| {{\Z\!}^{\top+}}{{\Z\!}^\top}  V - V\|_\infty$.

Morever, with $g = \frac{1}{2} h$, if $h$ is strongly-convex, then $g^\ast$ is smooth and we get a squared norm $\Omega(w-w')^2$ in the guarantees in Prop.~\ref{prop:bregman}.

In order to obtain guarantees from a finite number of basis functions, we just need a cover of the state-space for the Bregman divergence.

\subsection{Proof of Prop.~\ref{prop:bregman}}
\label{app:proofbregman}
The proof follows the same structure as \cite{akian2008max}, but extended to all convex functions $h$ (and not only quadratic).
We have, by definition of the Fenchel-conjugate $(V+h)^\ast$ of $V+h$ (see~\cite{boyd2004convex}):
\BEA
 \nonumber \W\W^+ V(s) & = &  \max_{w \in \W} w^\top s - h(s) + \min_{s' \in \S} V(s') + h(s') - w^\top s' \\
\label{eq:breg} & = & \max_{w \in \W} w^\top s - h(s) - (V+h)^\ast(w).
\EEA

Thus,  if $\W$ contains the domain of $(V+h)^\ast$, we have $\W\W^+ V(s) = (V+h)^{\ast \ast}(s) - h(s)$, which shows (a). Moreover, this implies that in all situations, we have  $ \W\W^+ V(s) \leqslant  (V+h)^{\ast \ast}(s) - h(s)$.

We now denote $w^\ast(s)$ the unconstrained minimizer in the optimization problem in \eq{breg} but do not assume that $\W$ contains the domain of $(V+h)^\ast$. Moreover, since $\S$ is compact, the function $(V+h)^\ast$ has subgradients in $\S$, thus the function
$w \mapsto w^\top s  -  (V+h)^\ast(w)$ has gradients bounded in the norm $\Omega$ by ${\rm diam}(\S)$. We thus get:
\BEAS
\!\!\!  (V+h)^{\ast \ast}(s) - h(s) - \W\W^+ V(s)
 &\!\!\!= \!\!\!& \min_{w \in \W} - w^\top s  + (V+h)^\ast(w) - w^\ast(s)^\top s -(V+h)^\ast(w^\ast(s)) \\
 & \!\!\!\leqslant\!\!\! & {\rm diam}(\S) \min_{w \in \W}  \Omega^\ast(w - w^\ast(s) ).
\EEAS
This leads to (b).

Since $w^\ast(s)$ is the unconstrained maximizer of $w^\top s - h(s) - (V+h)^\ast(w)$, it is such that
$s \in \partial  (V+h)^\ast(w^\ast(s))$. Since $(V+h)^{\ast  \ast} - g$ is convex, $g^\ast - (V+h)^{\ast }$ is convex, and then, for any $w$, 
\BEAS
g^\ast(w) - (V+h)^{\ast  }(w) & \geqslant  & 
g^\ast(w^\ast(s)) - (V+h)^{  \ast}(w^\ast(s))
+ ( w - w^\ast(s))^\top   \big[ \nabla g^\ast(w^\ast(s) ) - s \big],
\EEAS
leading to, by rearranging terms:
\[
 - (V+h)^{\ast  }(w)  \geqslant -\mathcal{D}_{g^\ast}(w,w^\ast(s)) - (V+h)^{  \ast}(w^\ast(s))
-  ( w - w^\ast(s))^\top s.
\]
This leads to, for any $w \in \W$,
\[
  (V+h)^{\ast  }(w) - w^\top s -
  (V+h)^{  \ast}(w^\ast(s)) +  w^\ast(s)^\top s \leqslant \mathcal{D}_{g^\ast}(w,w^\ast(s)).
\]
Taking the infimum with respect to $w \in \W$, we get:
\[
- \W\W^+ V(s) - h(s)
 + (V+h)^{\ast \ast}(s)  
 \leqslant   \min_{w \in \W}   \mathcal{D}_{g^\ast} (w,w^\ast(s)),
 \]
 which in turn leads to (c), since the two quantities above are non-negative.

\section{Detailed proofs of complexities of iterations}
\label{app:comp}
\paragraph{Compile time for sparse 1D graph.}
Given an order $k$ chain graph, the square of the adjacency matrix is of order $2k$ and getting it can be done in $O( k^2 |\S|)$ (with $2k|\S|$ elements to obtain, each with $O(k)$ complexity). Thus, the overall complexity is
$\sum_{i=0}^{\log_2 \! \rho} (2^{i})^2 |\S| = O( |\S| \rho^2) $.

\paragraph{Compile time for sparse  graph (general dimension).}
Given an order $k$ $d$-dimensional grid (where each node is connected to $2d$ neighbors, two per dimension), up to multiplicative constants depending on $d$, the square of the adjacency matrix is of order\footnote{This degree ${\rm deg}(\rho,d)$ is equal to the number of elements of the elements of the $\ell_1$-ball of radius $\rho$ with integer coordinates. Following~\cite{serra2000enumeration}, we have
$ {\rm deg}(\rho,d) = \sum_{i=0}^{ \min \{ d, \rho \}} 2^i { d \choose i} {\rho \choose i}$.
In particular ${\rm deg}(\rho,d)  = {\rm deg}(d,\rho) $, and we have the bound
$ {\rm deg}(\rho,d) \leqslant \rho^d \frac{2^d}{d!}$, which is the volume of the $\ell_1$-ball of radius $\rho$ in dimension $d$, which grows as $\rho^d$.
}
 $ k^d$ and getting it can be done in $O( k^{2d} |\S|)$ (with $k^d|\S|$ elements to obtain, each with $O(k^d)$ complexity). Thus, the overall complexity is
$\sum_{i=0}^{\log_2 \! \rho} (2^{i})^{2d} |\S| = O( |\S| \rho^{2d}) $.

\section{Convex optimization for estimating $z_{\rm new}$ and $w_{\rm new}$}

\label{app:cvxrelax}

In order to go beyond a finite set of functions, one can optimze $z_{\rm new}$ as follows (the optimization problem for $w_{\rm new}$ would follow similarly).

We can write the criterion that needs to be optimized from \mysec{greedy} as follows:
\BEAS
\!\!\!\!\!& & \!\!\!\!\max_{s \in \S} \min \big\{
U(s) - TV(s), - TV(s) - z_{\rm new}(s)   + \langle TV |z_{\rm new} \rangle,
\big\}
\\
\!\!\!\!\!& = &\!\!\!\! \max_{s \in \S} \min_{ \eta(s,z_{\rm new}) \in [0,1]}  \! \eta(s,z_{\rm new}) \big[
U(s) \!-\! TV(s)\big] + ( 1 \!-\! \eta(s,z_{\rm new})) \big[ \!-\! TV(s)\! -\! z_{\rm new}(s)   \!+\! \langle TV |z_{\rm new} \rangle \big]
\\
\!\!\!\!\!& \leqslant &\!\!\!\! \max_{s \in \S}    \eta^\ast(s,\tilde{z}_{\rm new}) \big[
U(s) - TV(s)\big] + ( 1-  \eta^\ast(s,\tilde{z}_{\rm new})) \big[ - TV(s) - z_{\rm new}(s)   + \langle TV |z_{\rm new} \rangle \big],
\EEAS
where $ \eta^\ast(s,\tilde{z}_{\rm new})$ is the minimizer for a fixed $\tilde{z}_{\rm new}$. The previous function is \emph{convex} in $z_{\rm new}$.
This leads to a natural majorization-minimization algorithm, which needs to be initialized in a problem dependent way, and can thus be used to  learn linear parametrizations of $z_{\rm new}$.

\section{Extra experiments}
In this section, we provide extra experiments and complements on one-dimensional and two-dimensional problems.
\label{app:exp}

\subsection{One-dimensional}

We consider pairs $(b,V)$ of reward and optimal value functions that satisfy the Hamilton-Jacobi-Bellman (HJB) equation~\cite{crandall1983viscosity,munos2000study}:
$V(x) \log \eta +  |V'(x)|  + b(x)= 0$. We consider two functions~$V(x)$, from which we can recover the function $b(x)$:
\BIT
\item $V(x) = (1  - 3 x)_+ + (6x - 4  )_+   + 
 ( 1  - 36 ( x - 1/2  )^2)_+ $, as plotted in \myfig{fixed_basis}.
 
 \item $V(x) = (1  - 3 x)_+ + (6x - 4  )_+$, as plotted in \myfig{cvx_1d}.

\EIT
We consider $\eta = 1/2$, a number of nodes equal to $S= 362 \approx 2^{17/2}$, such that $\gamma \approx 0.9981$ and $\tau \approx 521$.

\begin{figure}
\begin{center}
\hspace*{-.5cm}
\includegraphics[width=3.75cm]{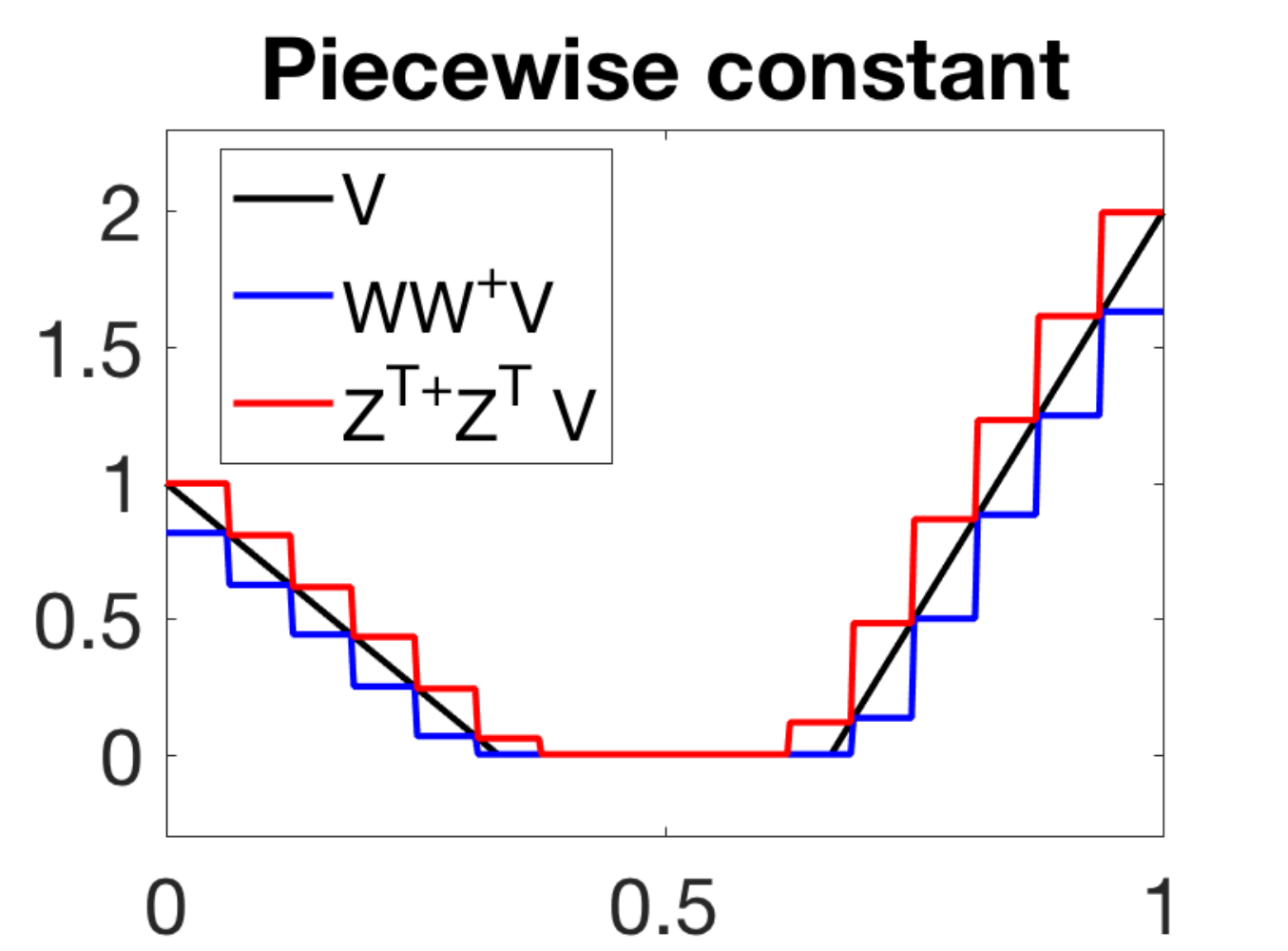} \hspace*{.234cm}
\includegraphics[width=3.75cm]{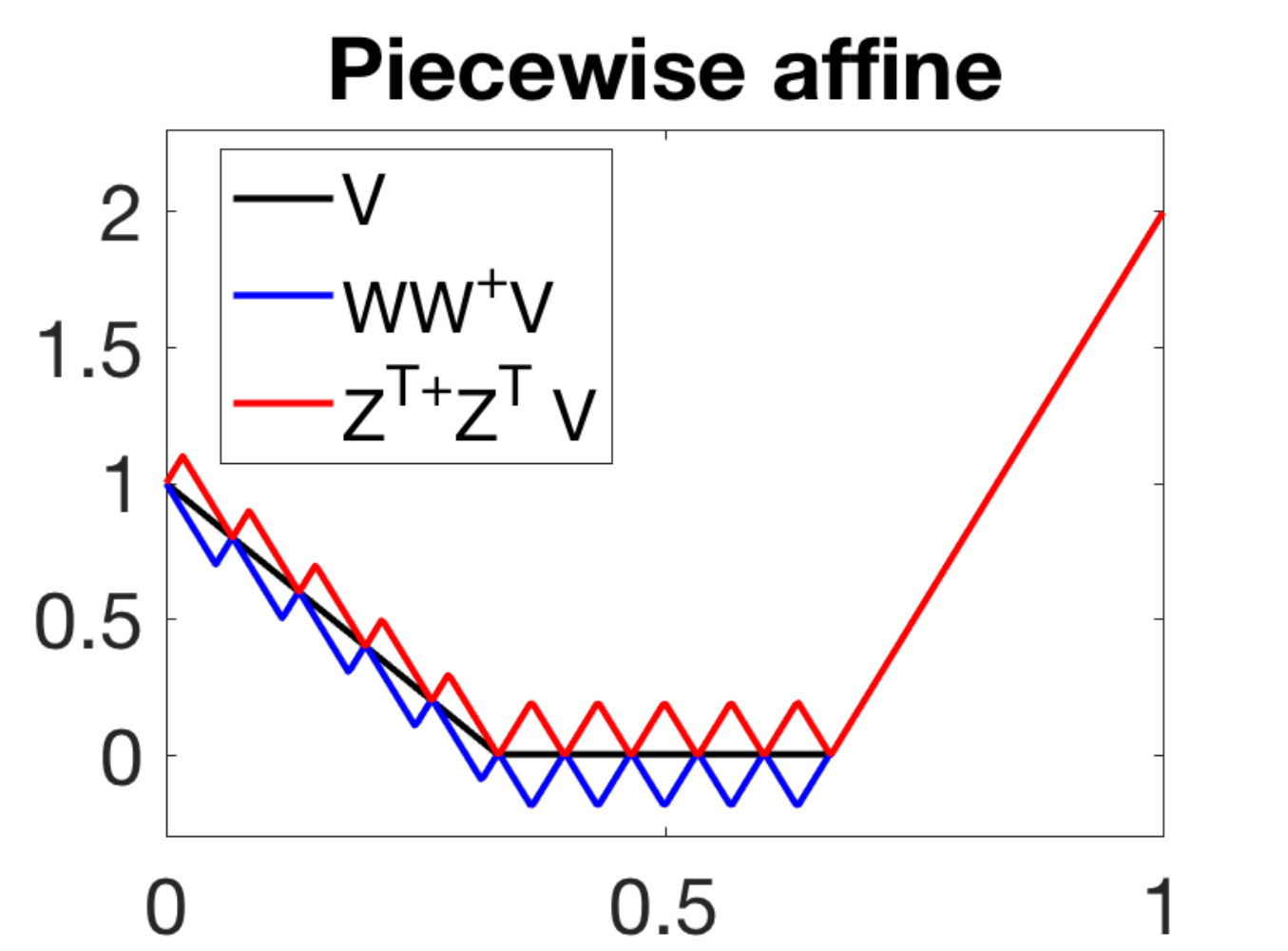} \hspace*{.234cm}
\includegraphics[width=3.75cm]{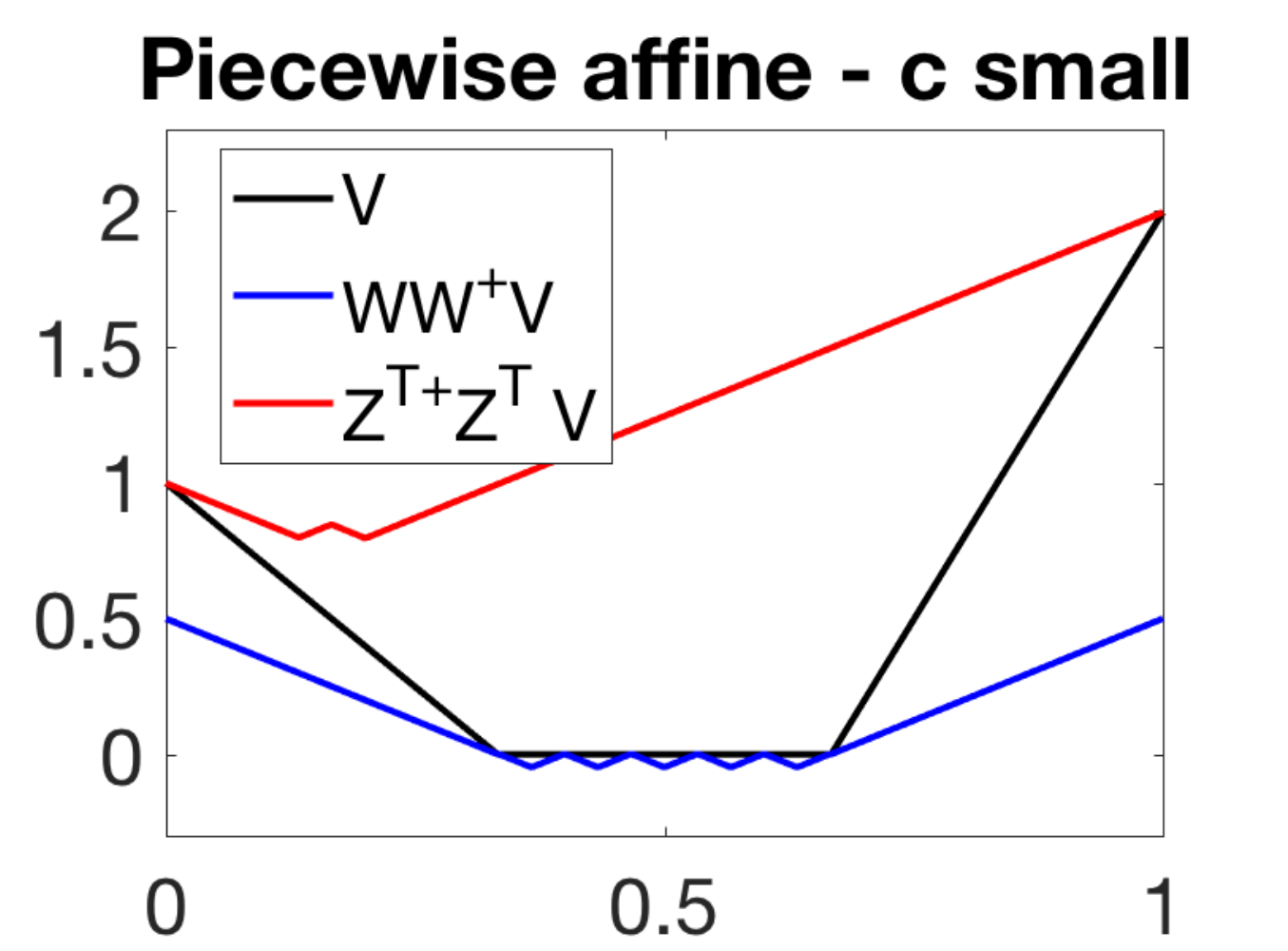} \hspace*{.234cm}
\includegraphics[width=3.75cm]{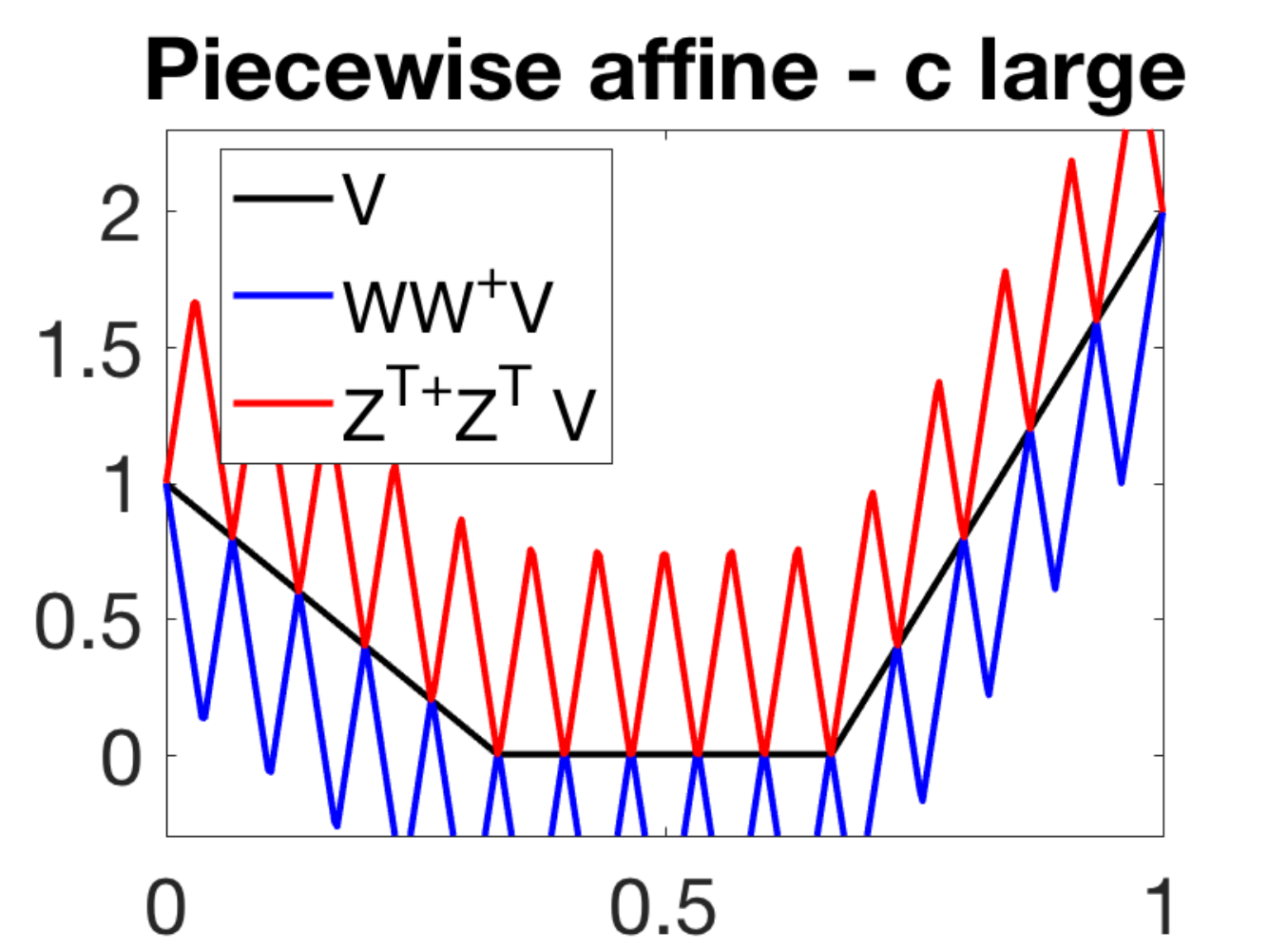}
\hspace*{-.5cm}\end{center}

\vspace*{-.45cm}

\caption{Approximation of a function $V$ with different finite basis with 16 elements. One-dimensional case (with a convex optimal value function).  From left to right: piece-wise constant basis function, piecewise affine basis functions with well chosen value of~$c$, then too small and too large. \label{fig:cvx_1d}}
\end{figure}

\begin{figure}
\begin{center}
\hspace*{-.5cm}
\includegraphics[width=3.75cm]{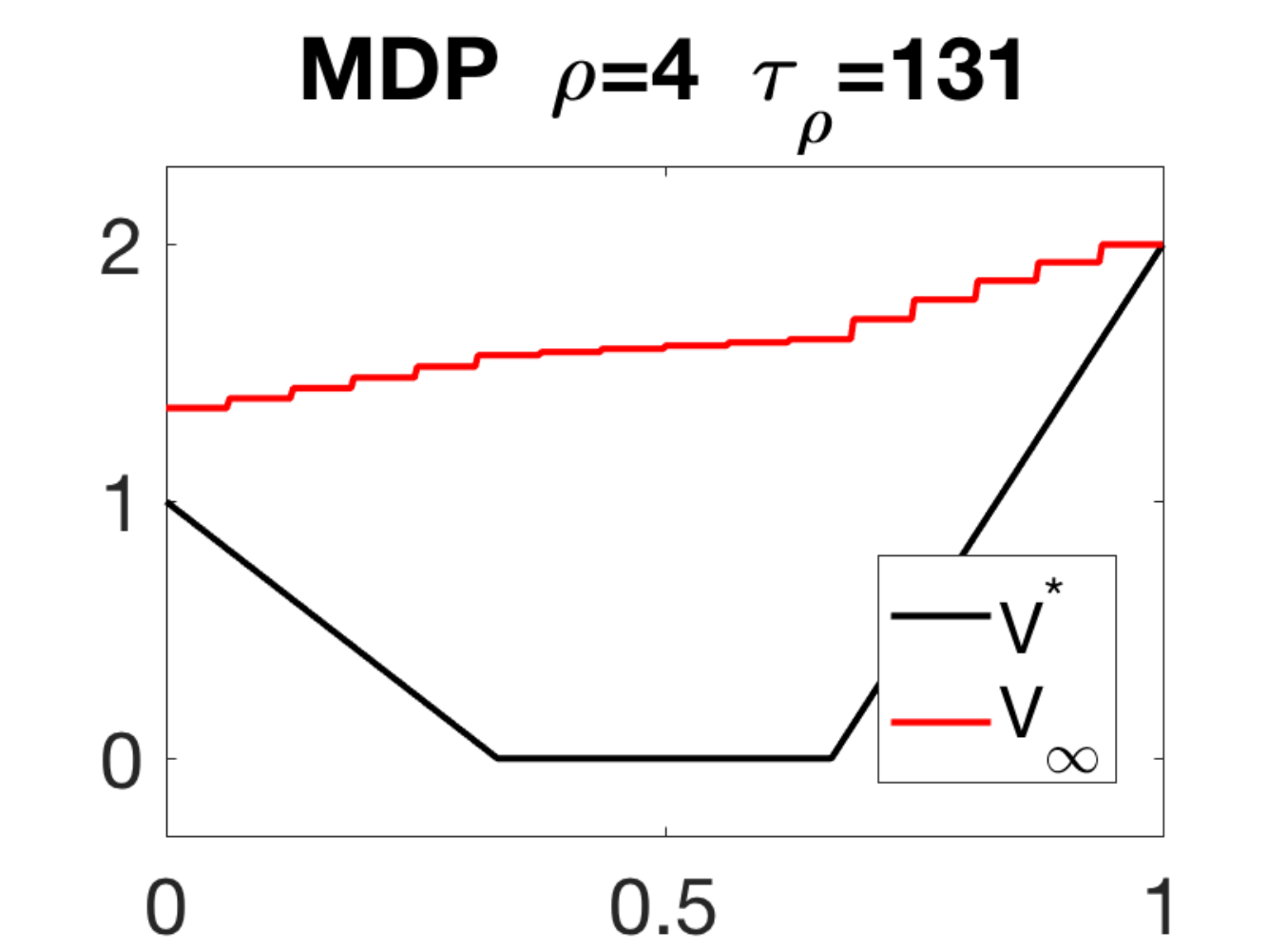} \hspace*{.234cm}
\includegraphics[width=3.75cm]{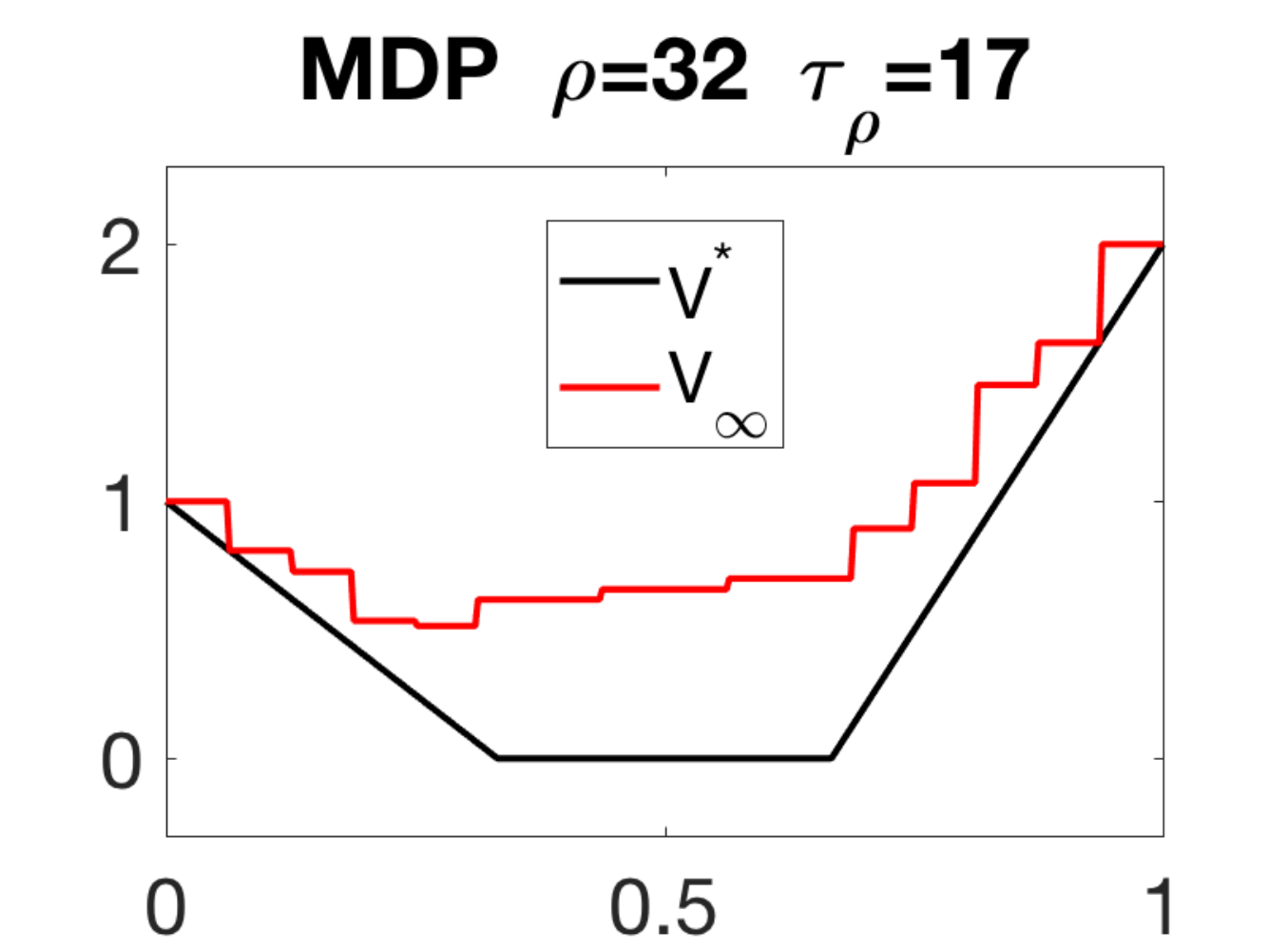} \hspace*{.234cm}
\includegraphics[width=3.75cm]{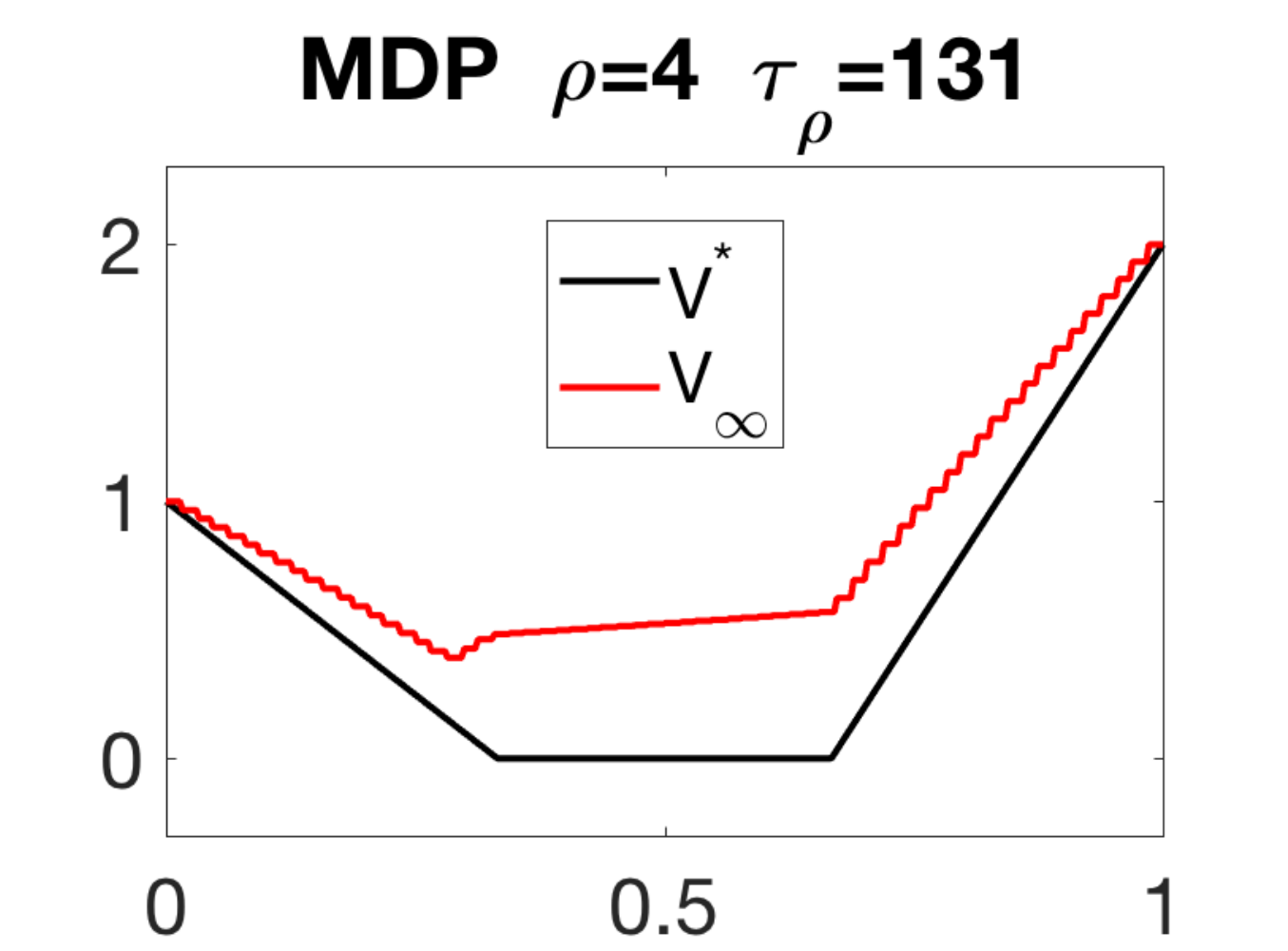} \hspace*{.234cm}
\includegraphics[width=3.75cm]{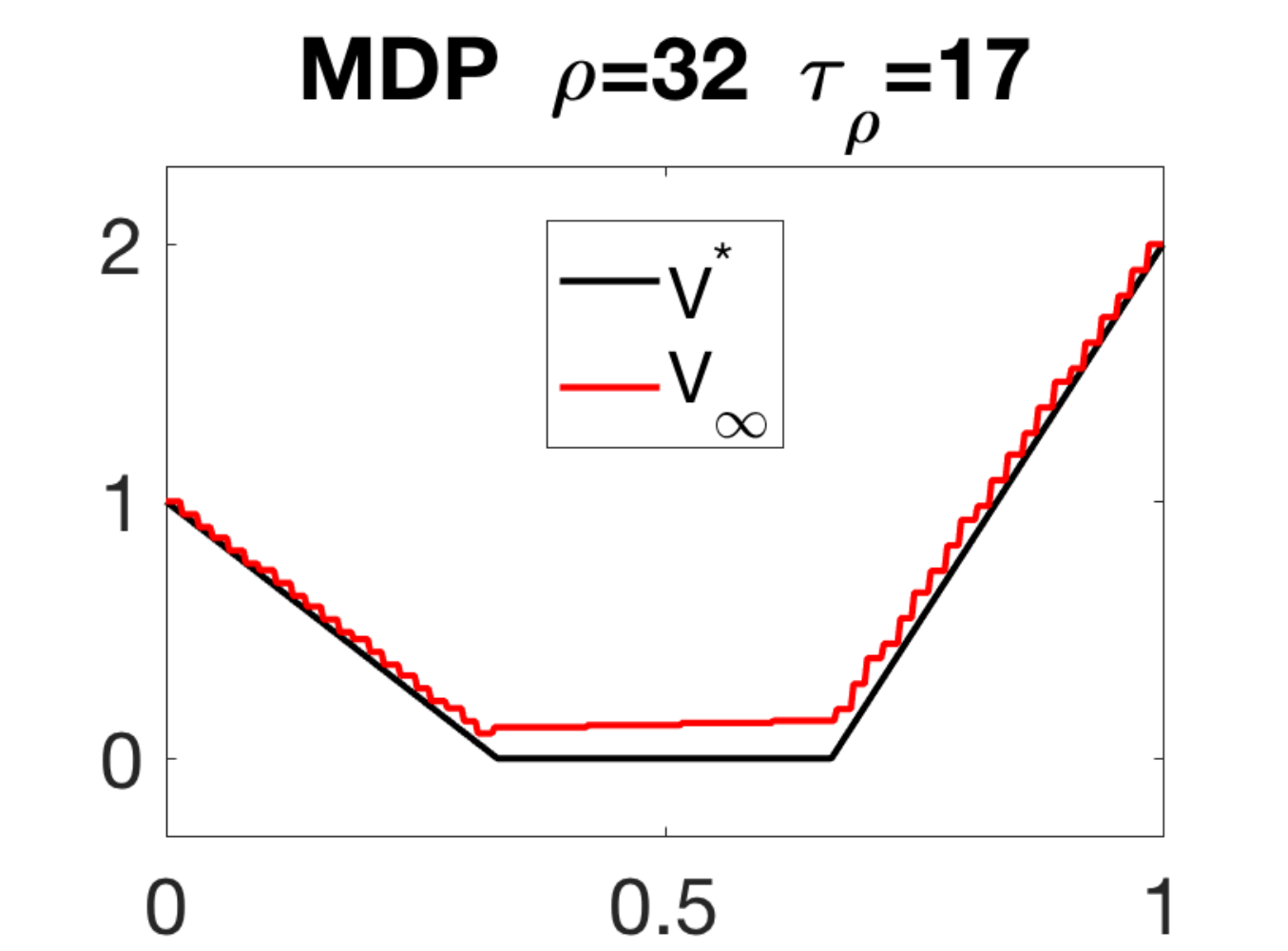}
\hspace*{-.5cm}\end{center}

\vspace*{-.45cm}

\caption{Approximation of a function $V$ with different finite basis with 16 or 64 elements, \emph{within the MDP}, and with values of $\rho$ that are $4$ and $32$. One-dimensional case (with a convex optimal value function).  From left to right: $(n=16, \rho=4)$, $(n=16, \rho=32)$, $(n=64, \rho=4)$, $(n=16, \rho=32)$. \label{fig:cvx_mdp_1s}}

\end{figure}

\begin{figure}
\begin{center}

\hspace*{-.5cm}
\includegraphics[width=3.86cm]{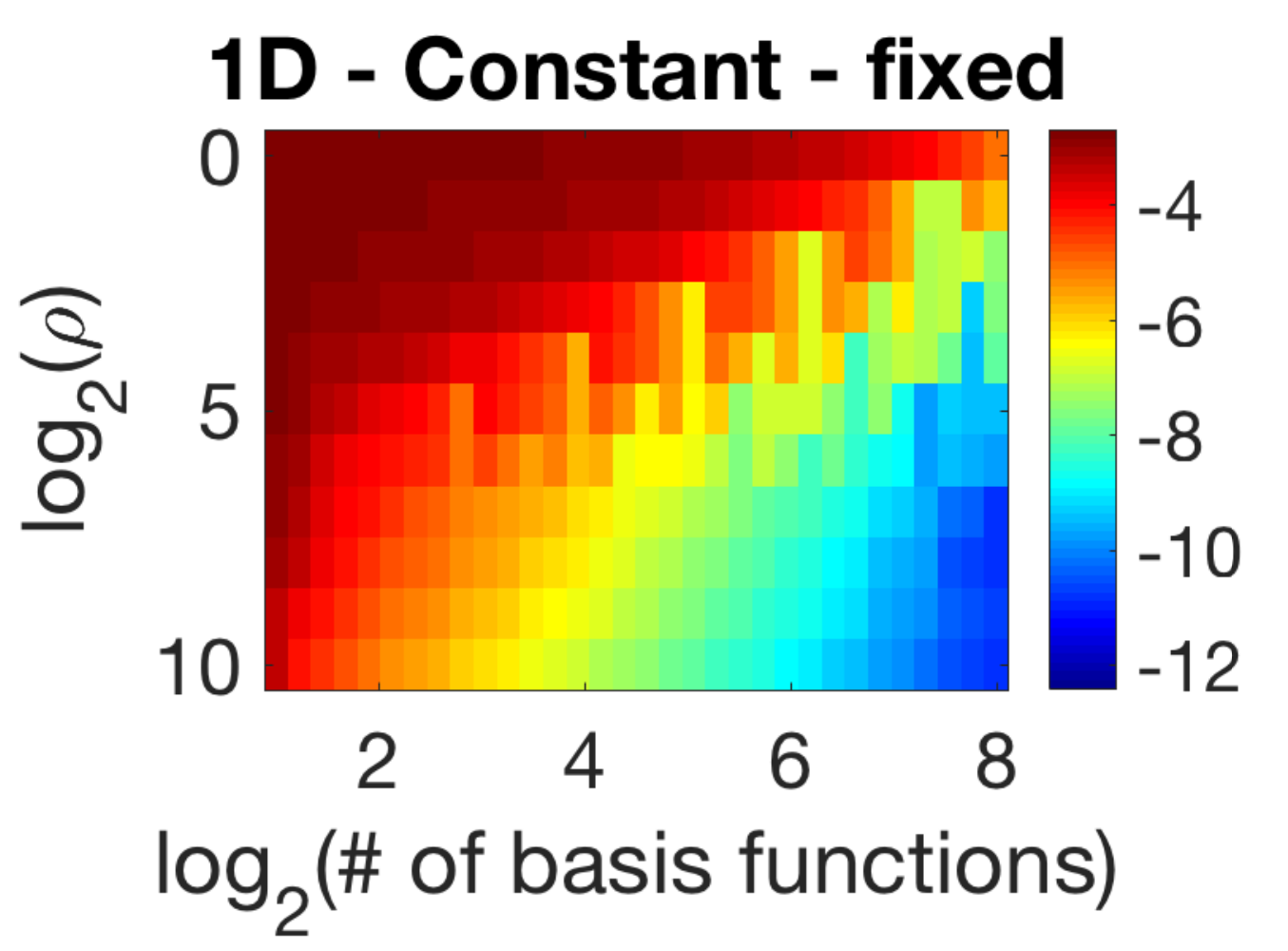} \hspace*{.1234cm}
\includegraphics[width=3.86cm]{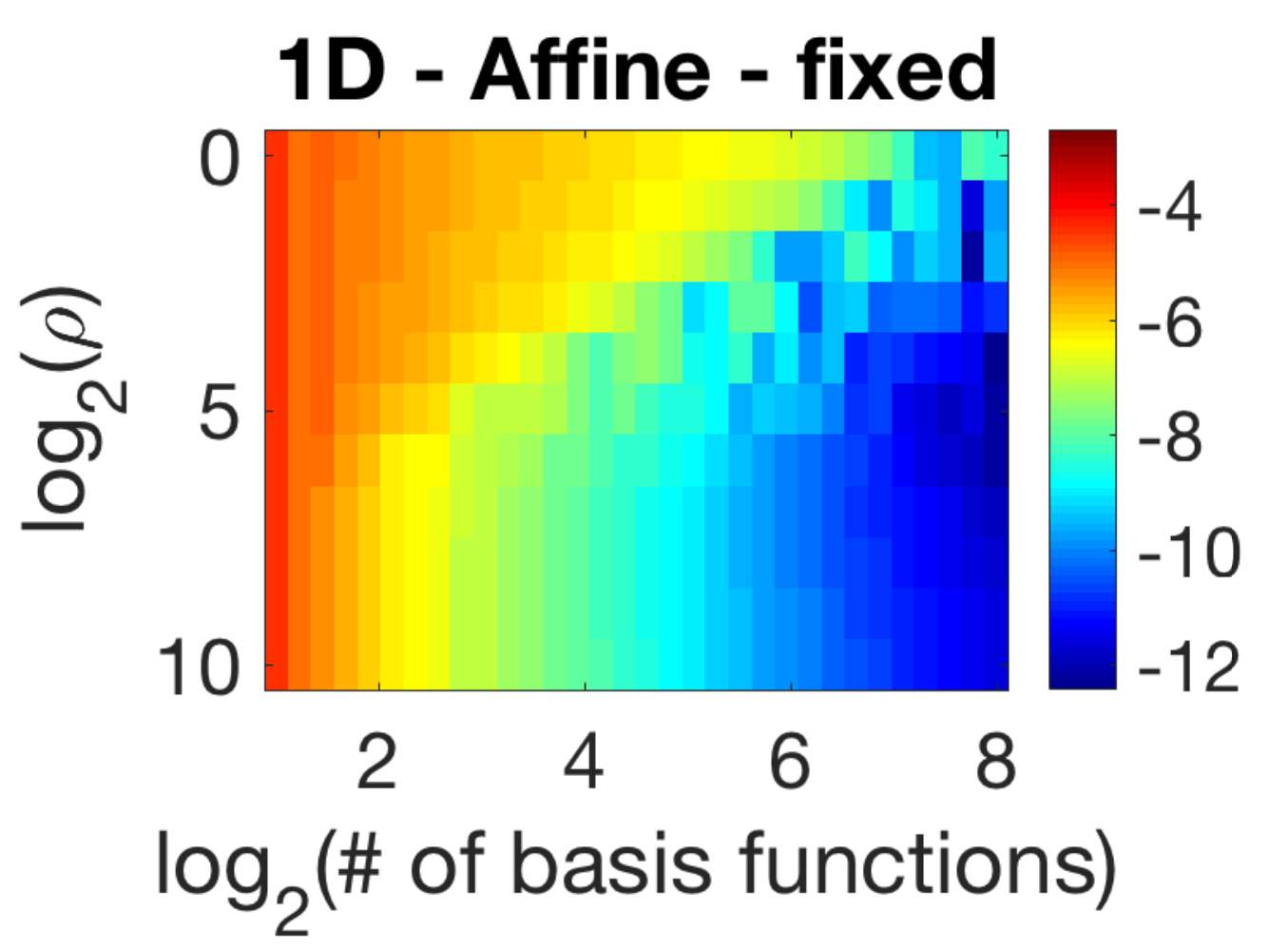} \hspace*{.1234cm}
\includegraphics[width=3.86cm]{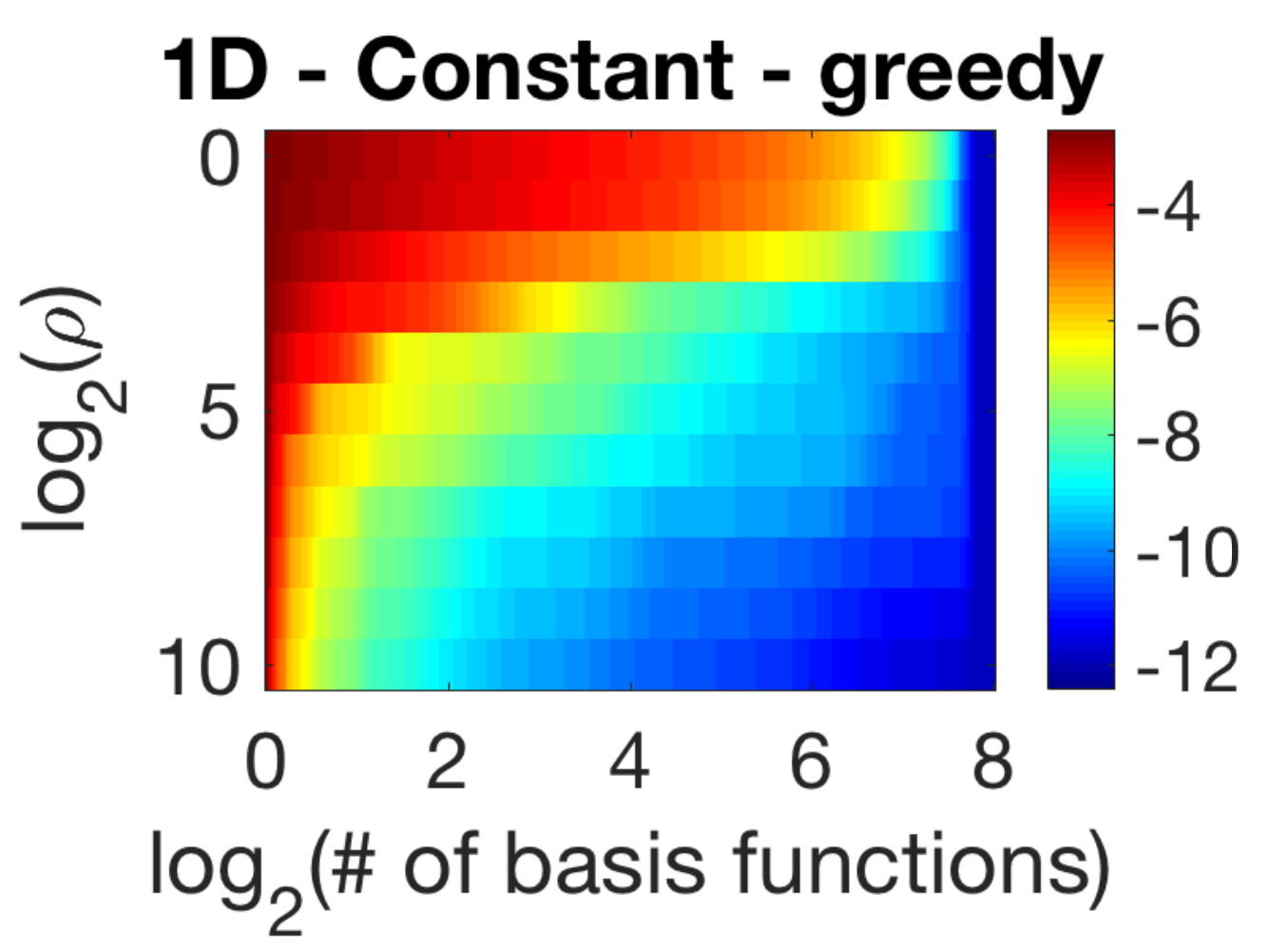} \hspace*{.1234cm}
\includegraphics[width=3.86cm]{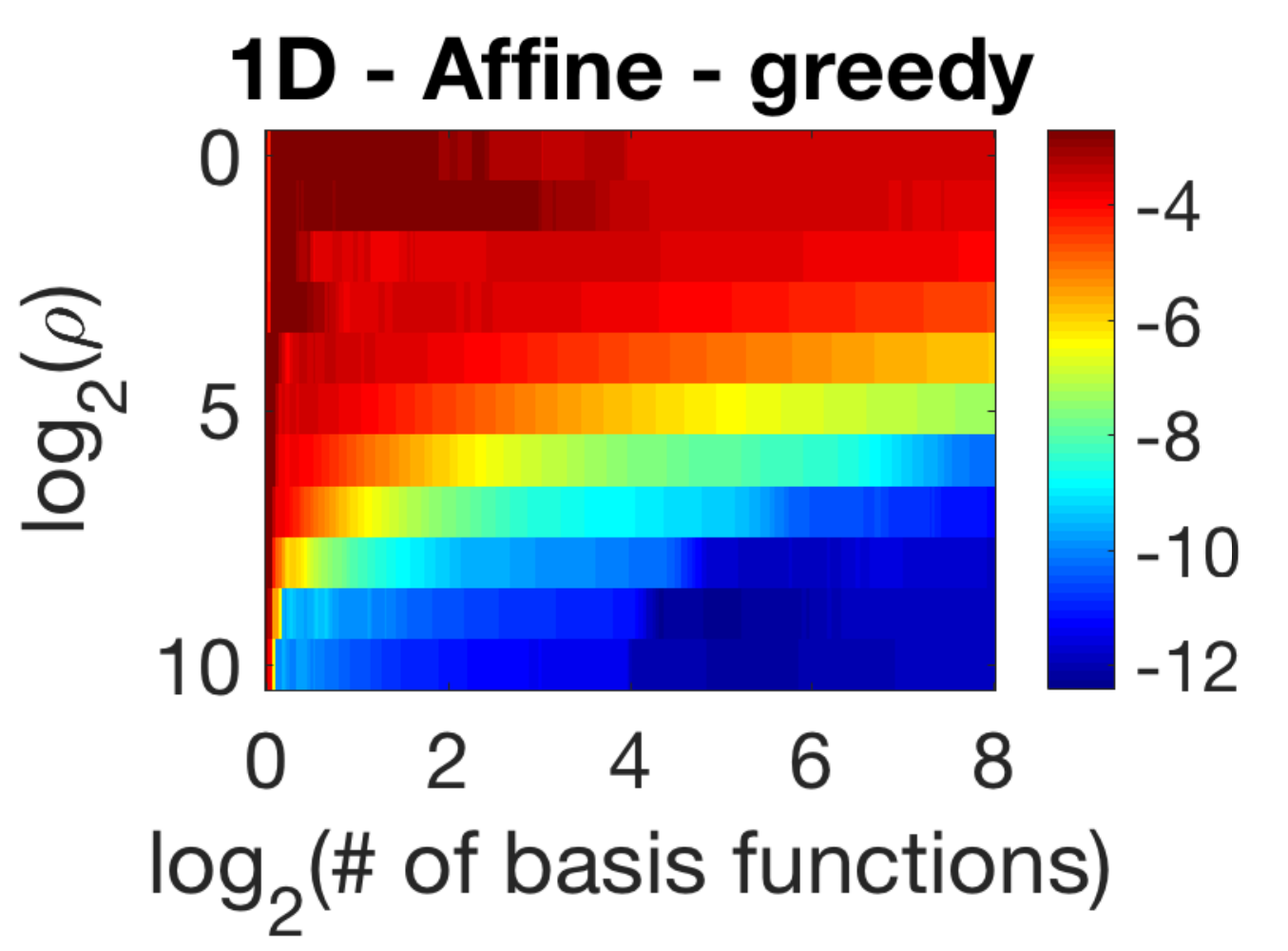}  
\hspace*{-2.5cm}\end{center}

\vspace*{-.45cm}

\caption{Approximation error of a function $V$ as a function of $\rho$ and the number of basis functions, for piecewise constant functions and piecewise affine functions. One-dimensional case (with a convex optimal value function). Left: fixed, right: greedy.
\label{fig:cvx_1d_plots}}
\end{figure}

\subsection{Two-dimensional}

\label{app:exp_2d}

We consider control problems on $[0,1]^d$, with $d=2$, with $2d$ potential discrete actions (one in each coordinate direction).
The corresponding Hamilton-Jacobi equation is 
\[ V(x) \log \eta +  \max_{i \in \{1,\dots,d\}}\Big|\frac{\partial V}{\partial x_i}(x) \Big|  + b(x)= 0,  \]
with natural boundary conditions. We consider the following  functions:
\BIT
\item $V(x_1,x_2) = (1  - 3 x_1)_+ + (6x_1 - 4  )_+  $. In the main paper, we consider this value function (with potential for variable selection) plotted in \myfig{plot_2d_sparse}, with performance plots in the bottom of \myfig{perf_1d}.

\item $V(x_1,x_2) = (1  - 3 x_1)_+ + (6x_1 - 4  )_+ + (1  - 3 x_2)_+ + (6x_2 - 4  )_+ $.
This function, without potential for variable selection, is plotted in \myfig{plot_2d_nonsparse}, with performance plots in \myfig{Perf_2d_non_sparse}.

\EIT
We consider $\eta \approx 0.919$, a number of nodes per dimension equal to $\tilde{S} = 45 \approx 2^{11/2}$ and thus a total number of nodes equal to $S = \tilde{S}^2 = 2025$, such that $\gamma \approx 0.9981$ and $\tau \approx 521$ (like in the one-dimensional case).

 \begin{figure}
\begin{center}
\hspace*{-.5cm}
\includegraphics[width=6.37cm]{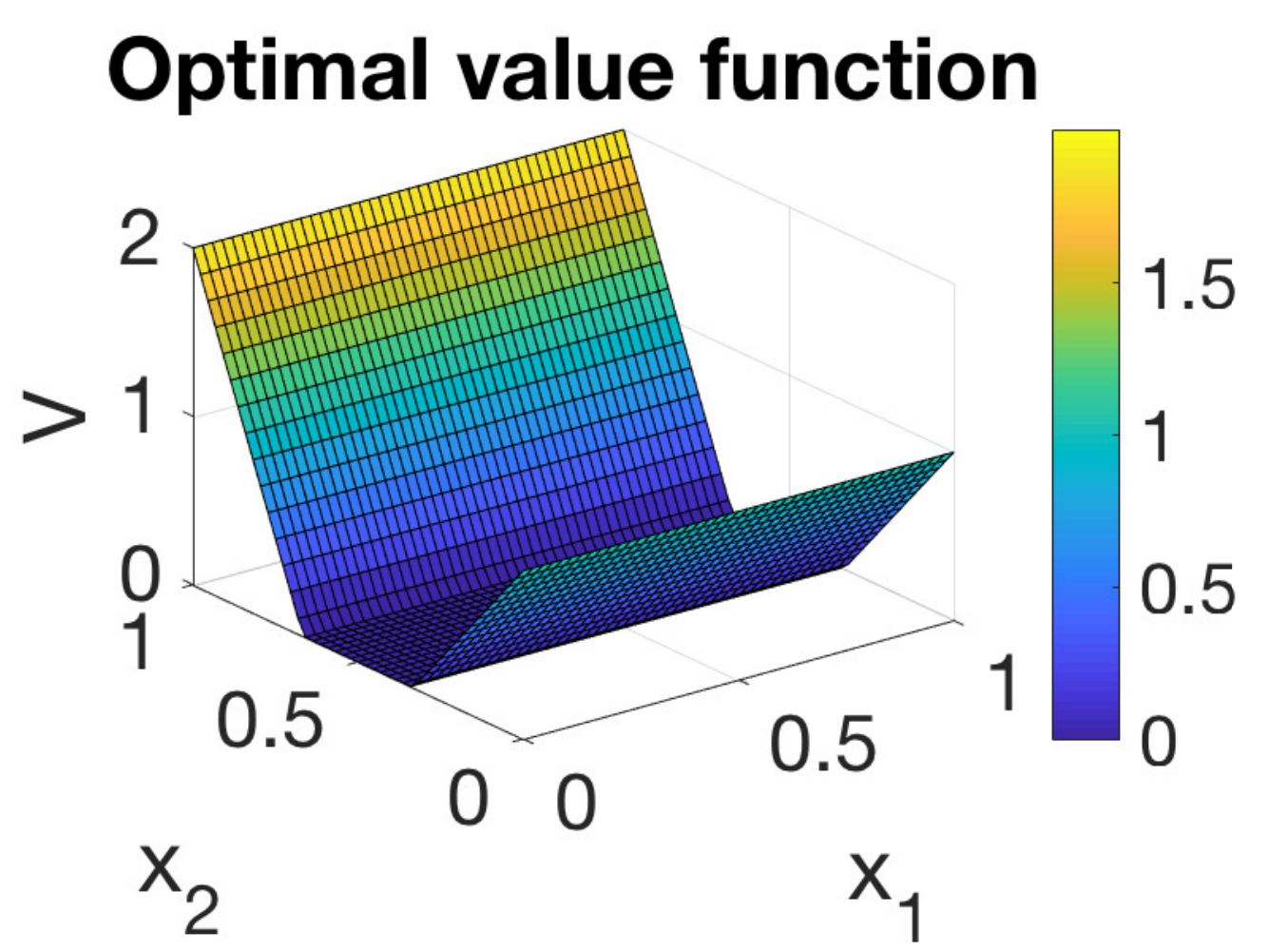} \hspace*{1cm}
 \includegraphics[width=6.37cm]{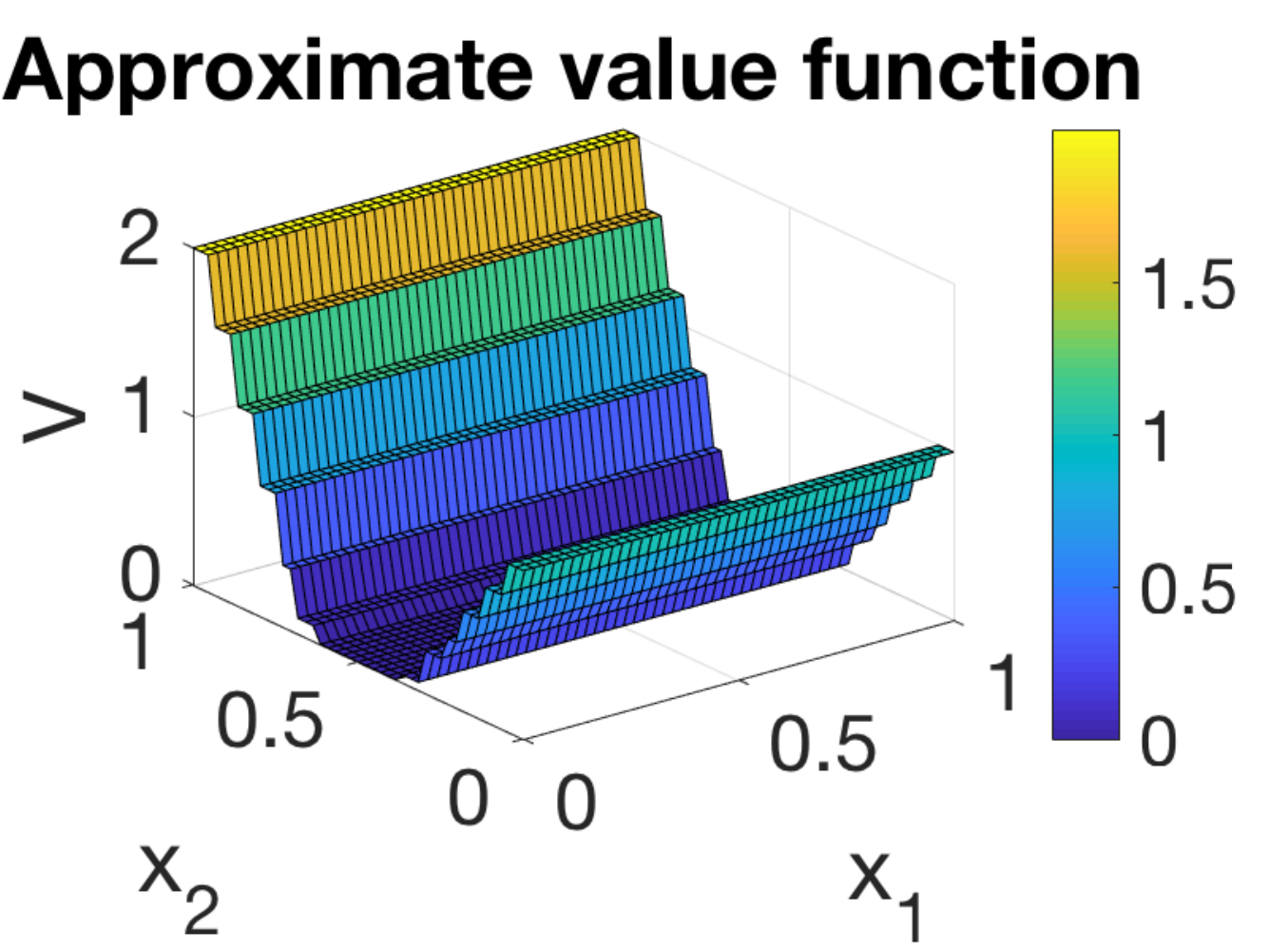}  
\hspace*{-2.5cm}\end{center}

\vspace*{-.45cm}

\caption{Value function with potential for variable selection. Left: optimal, right: approximation with piecewise constant functions.
\label{fig:plot_2d_sparse}}
\end{figure}

 \begin{figure}
\begin{center}
\hspace*{-.5cm}
\includegraphics[width=7.37cm]{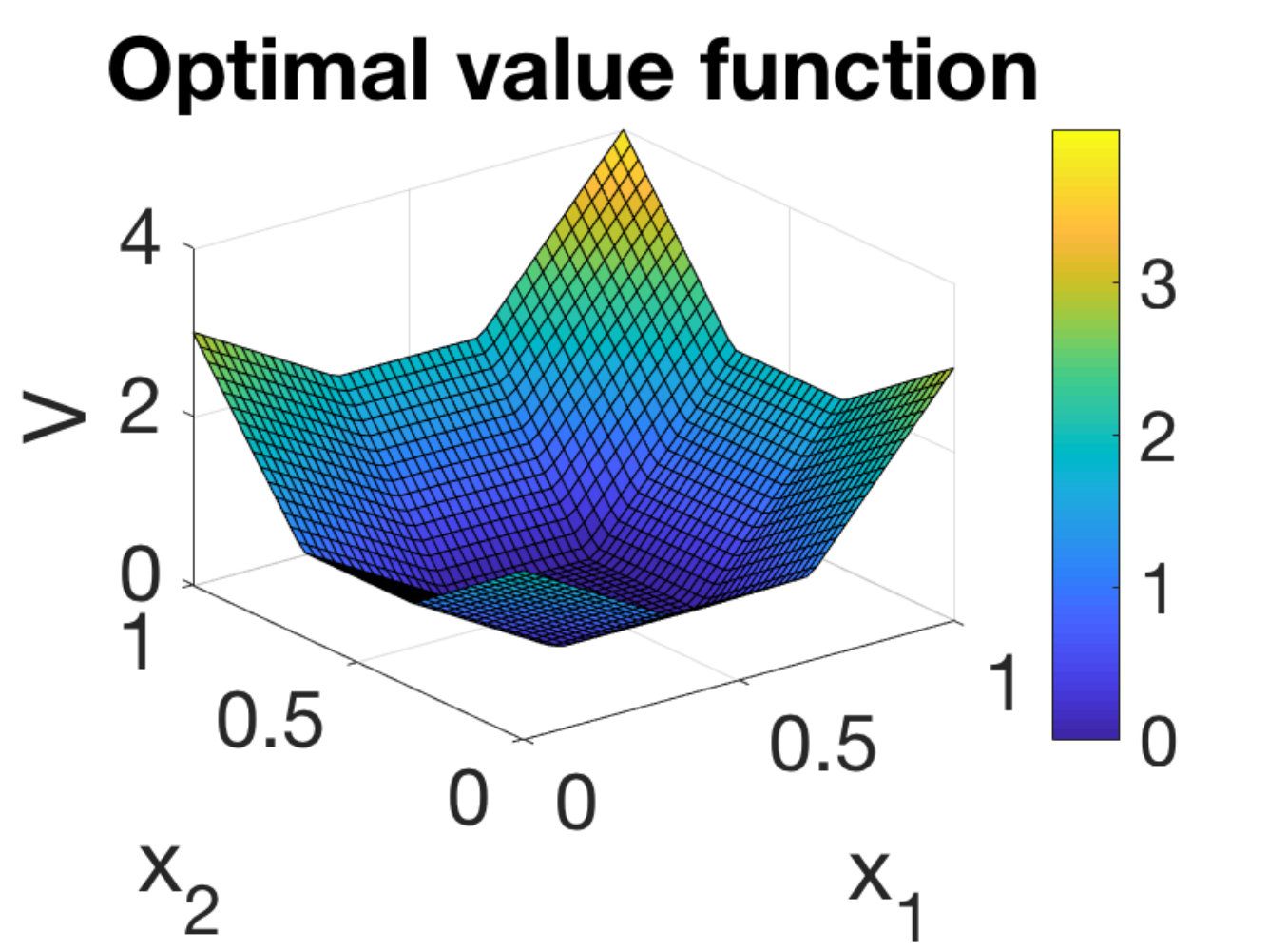} \hspace*{1cm}
 \includegraphics[width=7.37cm]{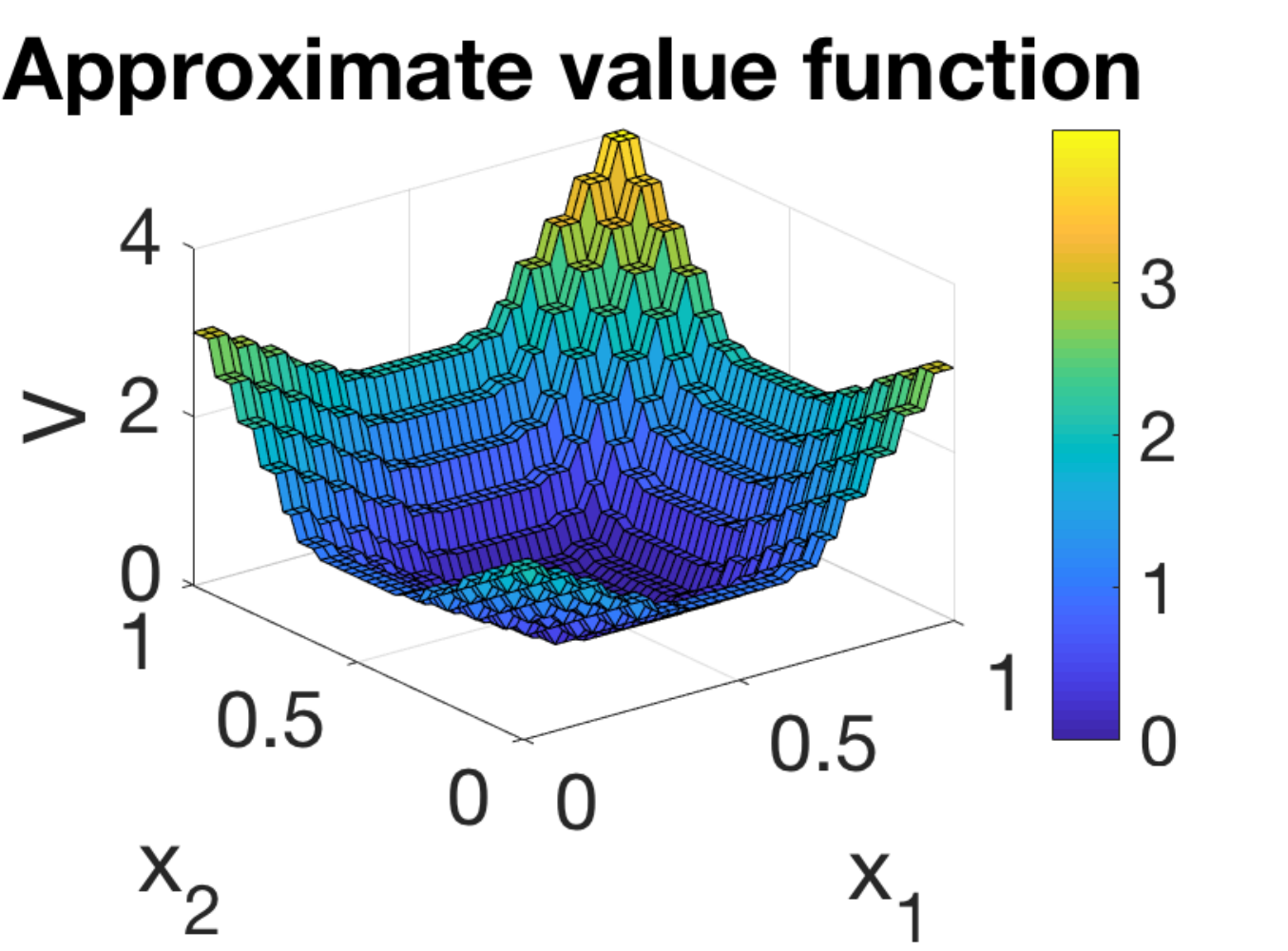}  
\hspace*{-2.5cm}\end{center}

\vspace*{-.45cm}

\caption{Value function without potential for variable selection. Left: optimal, right: approximation with piecewise constant functions.
\label{fig:plot_2d_nonsparse}}
\end{figure}

 \begin{figure}
\begin{center}
\hspace*{-.5cm}
\hspace*{-2.5cm}
\includegraphics[width=3.8cm]{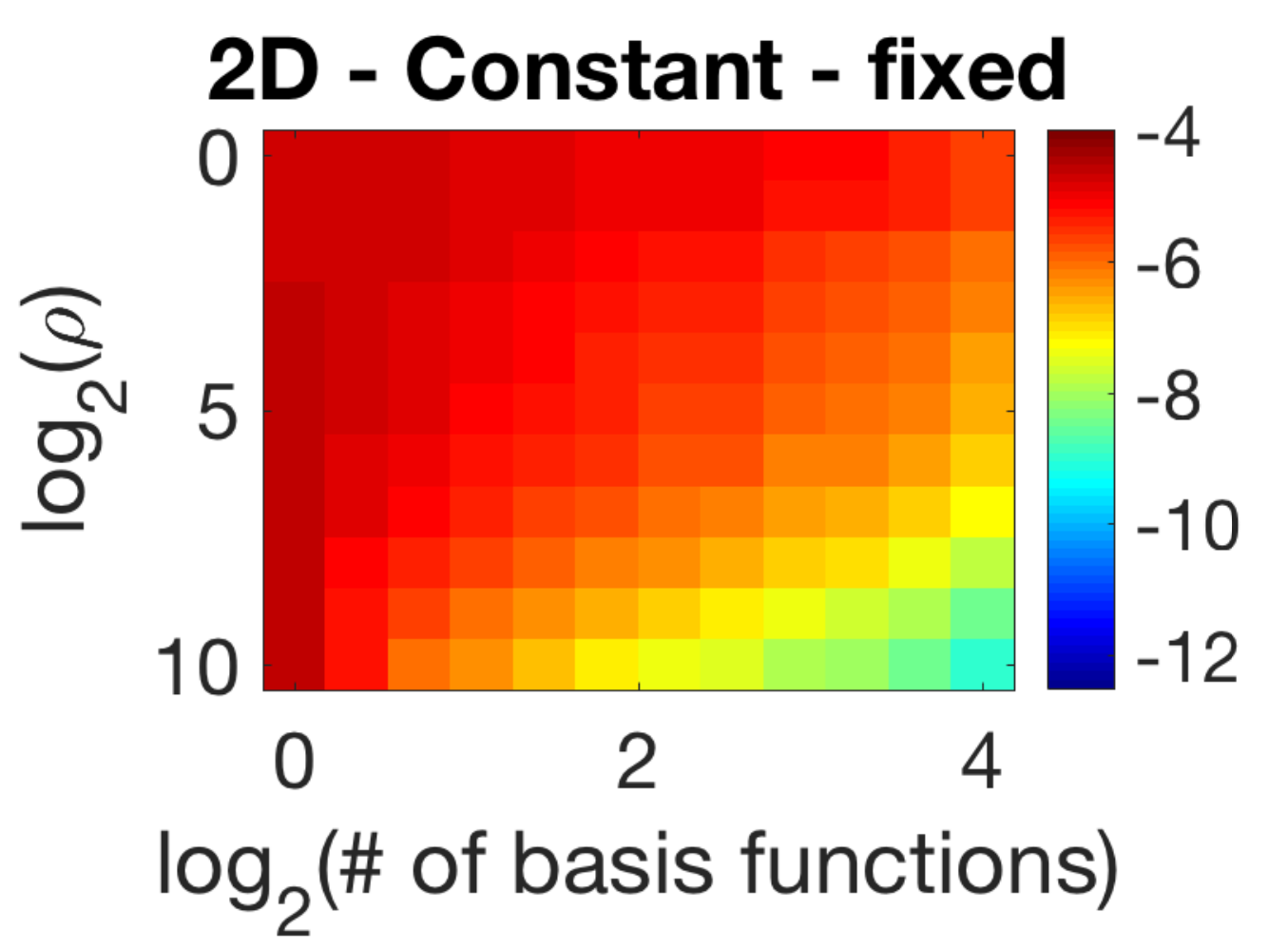} \hspace*{.1234cm}
\includegraphics[width=3.8cm]{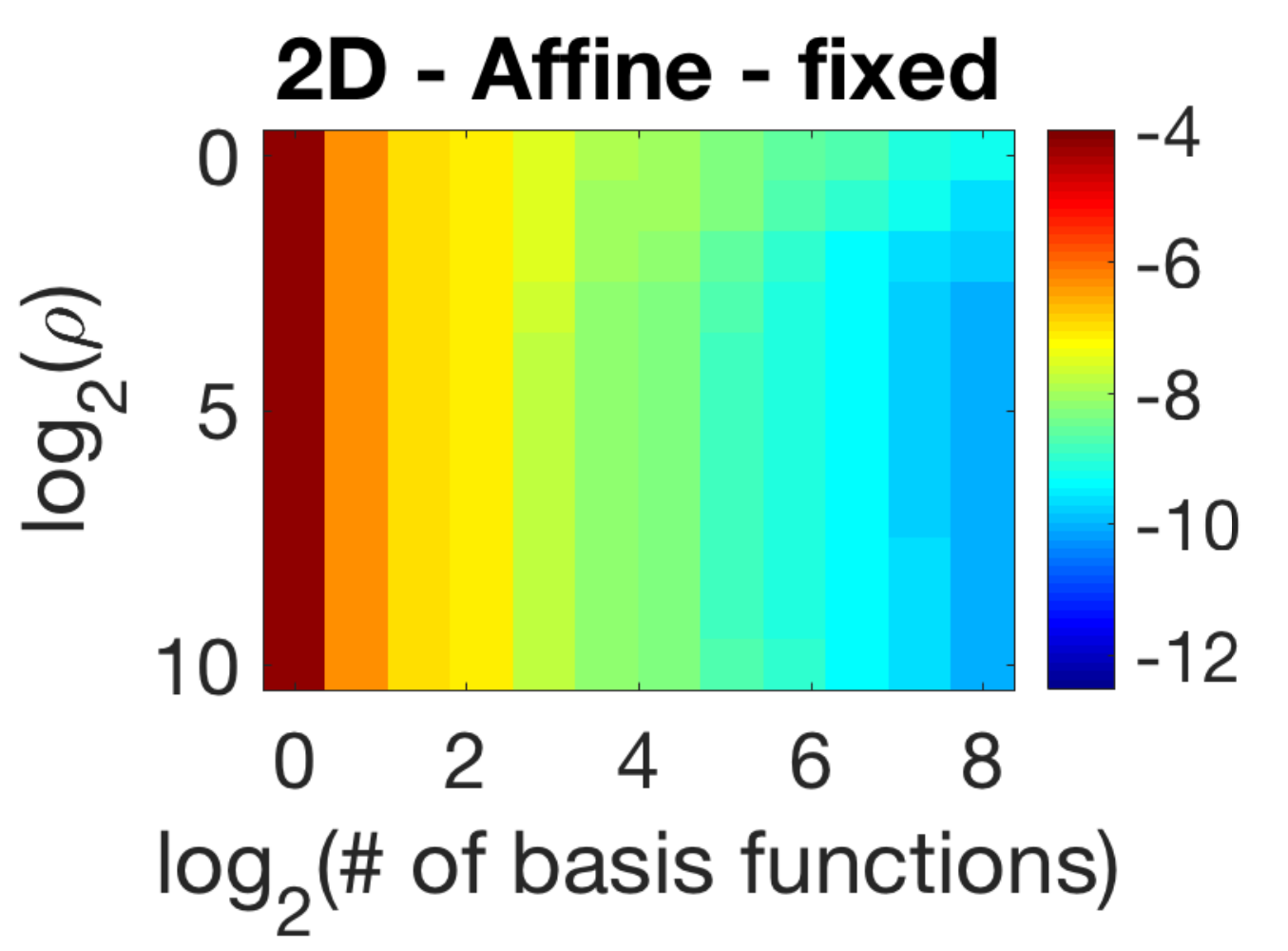} \hspace*{.1234cm}
\includegraphics[width=3.8cm]{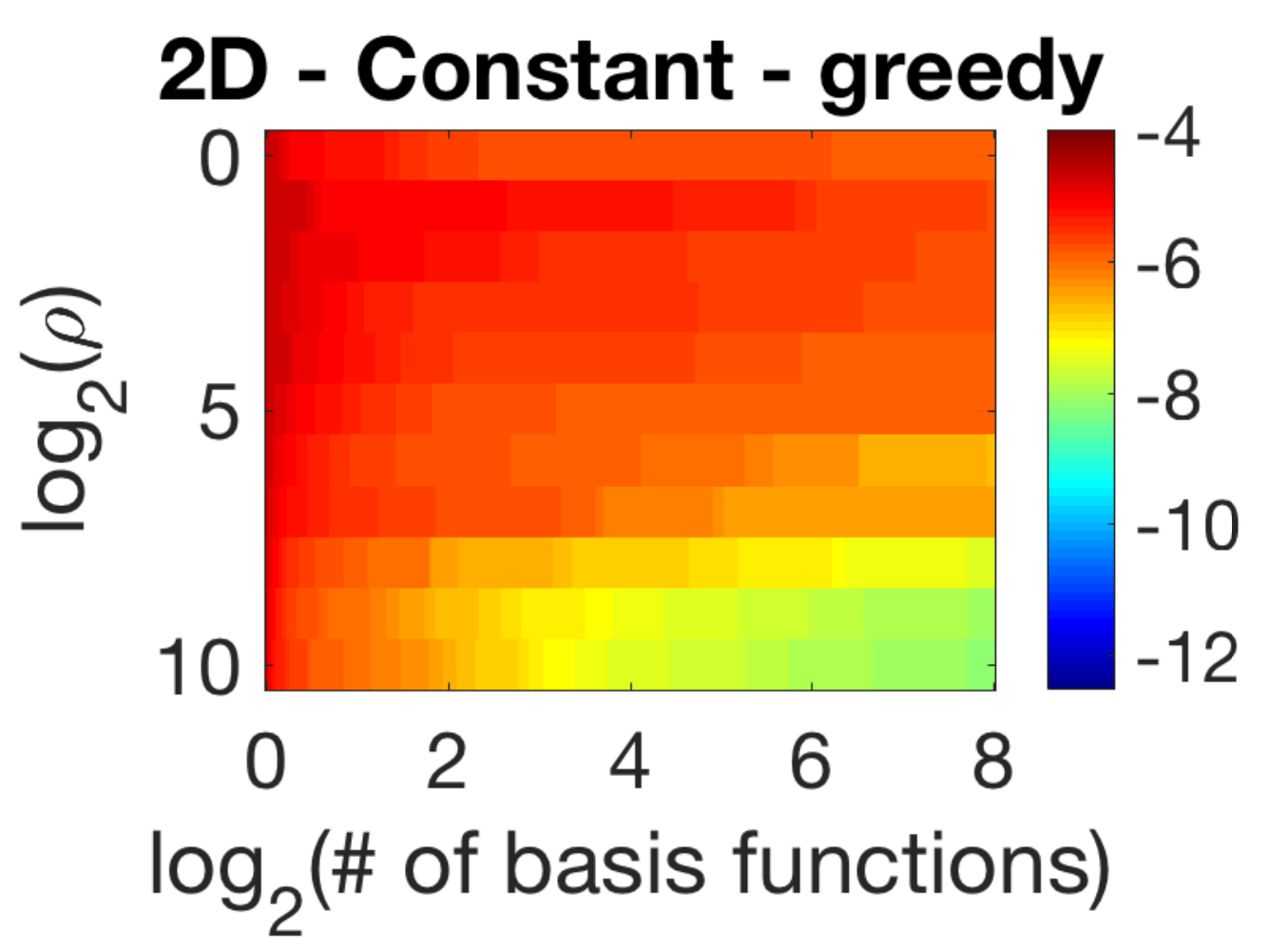} \hspace*{.1234cm}
\includegraphics[width=3.8cm]{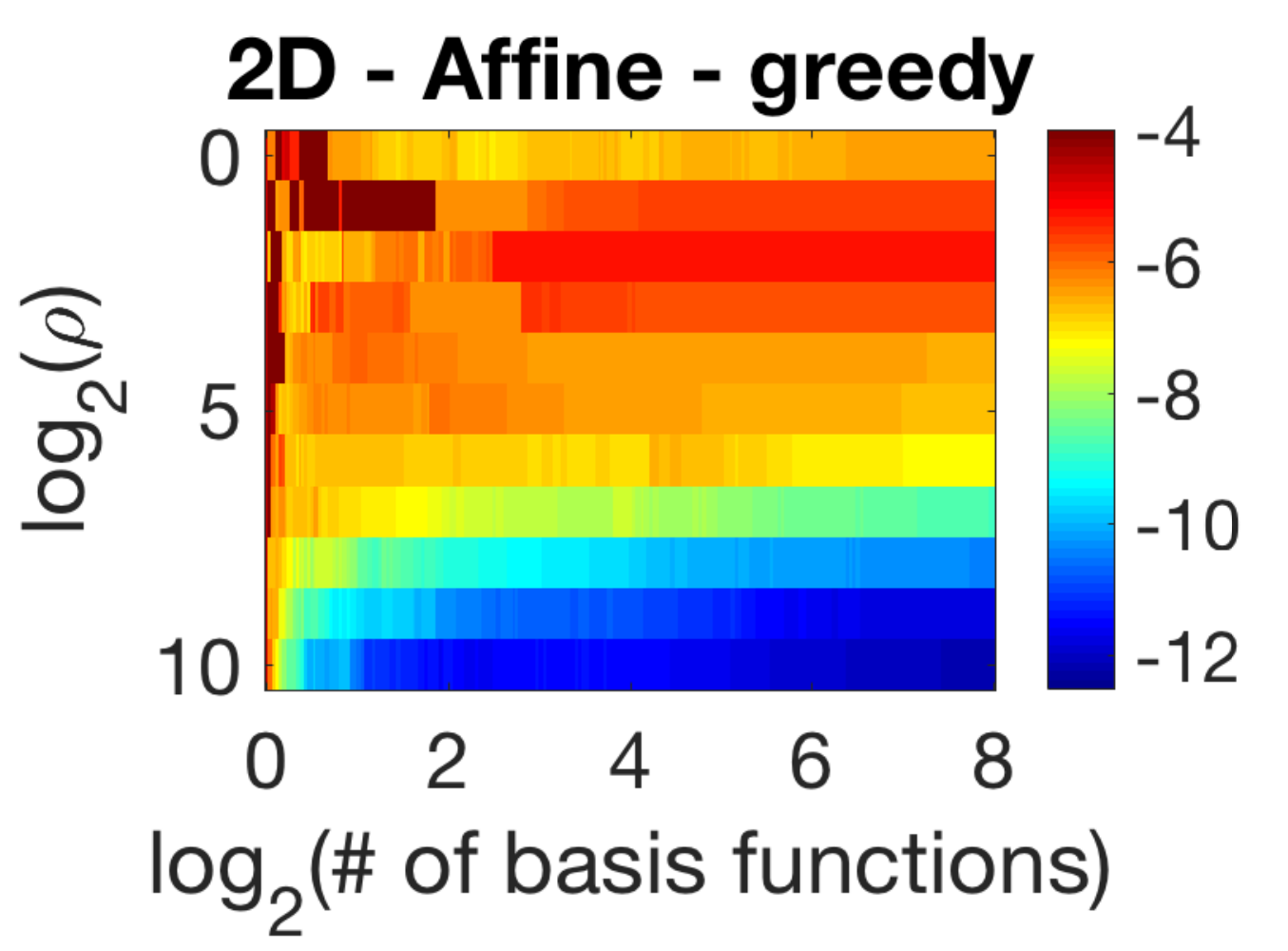}  
\hspace*{-2.5cm}
\hspace*{-2.5cm}\end{center}

\vspace*{-.45cm}

\caption{Approximation error of a function $V$ as a function of $\rho$ and the number of basis functions, for piecewise constant functions and piecewise affine functions. Two-dimensional case corresponding to the function in \myfig{plot_2d_nonsparse}. Left: fixed, right: greedy.  Compared to the bottom of \myfig{perf_1d} in the main paper, the gains in performance of the greedy method are not as large because the optimal reward function depends on all variables. \label{fig:Perf_2d_non_sparse}}
\end{figure}

\end{document}